\useunder{\uline}{\ul}{}
\newcommand{\bD}{\mathbf{D}}
\newcommand{\bh}{\mathbf{h}}
\newcommand{\deter}{\text{\normalfont det}}
\newcommand{\btheta}{\theta}
\newcommand{\bx}{x}
\newcommand{\bv}{\mathbf{v}}
\newcommand{\bw}{\mathbf{W}}
\newcommand{\bp}{\mathbf{P}}
\newcommand{\by}{\mathbf{y}}
\newcommand{\bi}{\mathbf{I}}
\newcommand{\ba}{\mathbf{A}}
\newcommand{\bsf}{\mathbf{f}}
\newcommand{\bbr}{\mathbb{R}}
\newcommand{\bbe}{\mathbb{E}}
\newcommand{\bbp}{\mathbb{P}}
\newcommand{\hd}{\widetilde{d}}
\newcommand{\bbh}{\mathbf{H}}
\newcommand{\sysn}{PRB\xspace}
\newcommand{\para}[1]{\noindent \textbf{#1}\xspace}
\newcommand{\cald}{\mathcal{D}}
\newcommand{\calb}{\mathcal{B}}
\newcommand{\calo}{\mathcal{O}}
\newcommand{\calh}{\mathcal{H}}
\newcommand{\caln}{\mathcal{N}}
\newcommand{\calx}{\mathcal{X}}
\newcommand{\calv}{\mathcal{V}}
\newcommand{\hi}{\hat{i}}
\newcommand{\caly}{\mathcal{Y}}
\newcommand{\call}{\mathcal{L}}
\newcommand{\iast}{i^{\ast}}
\newcommand{\gx}{g(x_t; \btheta_0)}
\newcommand{\hx}{y(x_t)}
\newcommand{\htheta}{\widehat{\theta}}
\newcommand{\bts}{\theta^\ast}
\newcommand{\wcalo}{\widetilde{\calo}}
\newcommand{\bsg}{\mathbf{G}}
\newcommand{\bsr}{\mathbf{r}}
\newcommand{\ban}[1]{{\small\color{red}{#1}}}
\newtheorem{theorem}{Theorem}[section]
\newtheorem{definition}{Definition}[section]
\newtheorem{lemma}{Lemma}[section]
\newtheorem{corollary}{Corollary}[section]
\theoremstyle{definition} 
\newtheorem{remark}{Remark}[section]
\numberwithin{equation}{section}
\newtheorem{assumption}{Assumption}[section]
\title{ PageRank Bandits for Link Prediction}
\author{Yikun Ban$^{1}$\thanks{Equal contribution.} , Jiaru Zou$^{1\ast}$, Zihao Li$^1$, Yunzhe Qi$^1$, Dongqi Fu$^{2}$, Jian Kang$^3$, \\
\textbf{Hanghang Tong$^1$, Jingrui He$^1$} \\ 
$^1$University of Illinois Urbana-Champaign, $^2$Meta AI, $^3$University of Rochester \\
$^1$\{yikunb2, jiaruz2, zihaoli5, yunzheq2, htong, jingrui\}@illinois.edu \\
$^2$dongqifu@meta.com,  $^3$jian.kang@rochester.edu \\
}
\begin{document}

\maketitle

\begin{abstract}

Link prediction is a critical problem in graph learning with broad applications such as recommender systems and knowledge graph completion. Numerous research efforts have been directed at solving this problem, including approaches based on similarity metrics and Graph Neural Networks (GNN). However, most existing solutions are still rooted in conventional supervised learning, which makes it challenging to adapt over time to changing customer interests and to address the inherent dilemma of exploitation versus exploration in link prediction.
To tackle these challenges, this paper reformulates link prediction as a sequential decision-making process, where each link prediction interaction occurs sequentially. We propose a novel fusion algorithm, PRB (PageRank Bandits), which is the first to combine contextual bandits with PageRank for collaborative exploitation and exploration. We also introduce a new reward formulation and provide a theoretical performance guarantee for PRB. Finally, we extensively evaluate PRB in both online and offline settings, comparing it with bandit-based and graph-based methods. The empirical success of PRB demonstrates the value of the proposed fusion approach. Our code is released at \href{https://github.com/jiaruzouu/PRB}{https://github.com/jiaruzouu/PRB}
\end{abstract}

\def\reta{{\textnormal{$\eta$}}}
\def\ra{{\textnormal{a}}}
\def\rb{{\textnormal{b}}}
\def\rc{{\textnormal{c}}}
\def\rd{{\textnormal{d}}}
\def\re{{\textnormal{e}}}
\def\rf{{\textnormal{f}}}
\def\rg{{\textnormal{g}}}
\def\rh{{\textnormal{h}}}
\def\ri{{\textnormal{i}}}
\def\rj{{\textnormal{j}}}
\def\rk{{\textnormal{k}}}
\def\rl{{\textnormal{l}}}
\def\rn{{\textnormal{n}}}
\def\ro{{\textnormal{o}}}
\def\rp{{\textnormal{p}}}
\def\rq{{\textnormal{q}}}
\def\rr{{\textnormal{r}}}
\def\rs{{\textnormal{s}}}
\def\rt{{\textnormal{t}}}
\def\ru{{\textnormal{u}}}
\def\rv{{\textnormal{v}}}
\def\rw{{\textnormal{w}}}
\def\rx{{\textnormal{x}}}
\def\ry{{\textnormal{y}}}
\def\rz{{\textnormal{z}}}

\def\rvepsilon{{\mathbf{\epsilon}}}
\def\rvtheta{{\mathbf{\theta}}}
\def\rva{{\mathbf{a}}}
\def\rvb{{\mathbf{b}}}
\def\rvc{{\mathbf{c}}}
\def\rvd{{\mathbf{d}}}
\def\rve{{\mathbf{e}}}
\def\rvf{{\mathbf{f}}}
\def\rvg{{\mathbf{g}}}
\def\rvh{{\mathbf{h}}}
\def\rvu{{\mathbf{i}}}
\def\rvj{{\mathbf{j}}}
\def\rvk{{\mathbf{k}}}
\def\rvl{{\mathbf{l}}}
\def\rvm{{\mathbf{m}}}
\def\rvn{{\mathbf{n}}}
\def\rvo{{\mathbf{o}}}
\def\rvp{{\mathbf{p}}}
\def\rvq{{\mathbf{q}}}
\def\rvr{{\mathbf{r}}}
\def\rvs{{\mathbf{s}}}
\def\rvt{{\mathbf{t}}}
\def\rvu{{\mathbf{u}}}
\def\rvv{{\mathbf{v}}}
\def\rvw{{\mathbf{w}}}
\def\rvx{{\mathbf{x}}}
\def\rvy{{\mathbf{y}}}
\def\rvz{{\mathbf{z}}}

\def\erva{{\textnormal{a}}}
\def\ervb{{\textnormal{b}}}
\def\ervc{{\textnormal{c}}}
\def\ervd{{\textnormal{d}}}
\def\erve{{\textnormal{e}}}
\def\ervf{{\textnormal{f}}}
\def\ervg{{\textnormal{g}}}
\def\ervh{{\textnormal{h}}}
\def\ervi{{\textnormal{i}}}
\def\ervj{{\textnormal{j}}}
\def\ervk{{\textnormal{k}}}
\def\ervl{{\textnormal{l}}}
\def\ervm{{\textnormal{m}}}
\def\ervn{{\textnormal{n}}}
\def\ervo{{\textnormal{o}}}
\def\ervp{{\textnormal{p}}}
\def\ervq{{\textnormal{q}}}
\def\ervr{{\textnormal{r}}}
\def\ervs{{\textnormal{s}}}
\def\ervt{{\textnormal{t}}}
\def\ervu{{\textnormal{u}}}
\def\ervv{{\textnormal{v}}}
\def\ervw{{\textnormal{w}}}
\def\ervx{{\textnormal{x}}}
\def\ervy{{\textnormal{y}}}
\def\ervz{{\textnormal{z}}}

\def\rmA{{\mathbf{A}}}
\def\rmB{{\mathbf{B}}}
\def\rmC{{\mathbf{C}}}
\def\rmD{{\mathbf{D}}}
\def\rmE{{\mathbf{E}}}
\def\rmF{{\mathbf{F}}}
\def\rmG{{\mathbf{G}}}
\def\rmH{{\mathbf{H}}}
\def\rmI{{\mathbf{I}}}
\def\rmJ{{\mathbf{J}}}
\def\rmK{{\mathbf{K}}}
\def\rmL{{\mathbf{L}}}
\def\rmM{{\mathbf{M}}}
\def\rmN{{\mathbf{N}}}
\def\rmO{{\mathbf{O}}}
\def\rmP{{\mathbf{P}}}
\def\rmQ{{\mathbf{Q}}}
\def\rmR{{\mathbf{R}}}
\def\rmS{{\mathbf{S}}}
\def\rmT{{\mathbf{T}}}
\def\rmU{{\mathbf{U}}}
\def\rmV{{\mathbf{V}}}
\def\rmW{{\mathbf{W}}}
\def\rmX{{\mathbf{X}}}
\def\rmY{{\mathbf{Y}}}
\def\rmZ{{\mathbf{Z}}}

\def\ermA{{\textnormal{A}}}
\def\ermB{{\textnormal{B}}}
\def\ermC{{\textnormal{C}}}
\def\ermD{{\textnormal{D}}}
\def\ermE{{\textnormal{E}}}
\def\ermF{{\textnormal{F}}}
\def\ermG{{\textnormal{G}}}
\def\ermH{{\textnormal{H}}}
\def\ermI{{\textnormal{I}}}
\def\ermJ{{\textnormal{J}}}
\def\ermK{{\textnormal{K}}}
\def\ermL{{\textnormal{L}}}
\def\ermM{{\textnormal{M}}}
\def\ermN{{\textnormal{N}}}
\def\ermO{{\textnormal{O}}}
\def\ermP{{\textnormal{P}}}
\def\ermQ{{\textnormal{Q}}}
\def\ermR{{\textnormal{R}}}
\def\ermS{{\textnormal{S}}}
\def\ermT{{\textnormal{T}}}
\def\ermU{{\textnormal{U}}}
\def\ermV{{\textnormal{V}}}
\def\ermW{{\textnormal{W}}}
\def\ermX{{\textnormal{X}}}
\def\ermY{{\textnormal{Y}}}
\def\ermZ{{\textnormal{Z}}}

\def\vzero{{\bm{0}}}
\def\vone{{\bm{1}}}
\def\vmu{{\bm{\mu}}}
\def\vtheta{{\bm{\theta}}}
\def\va{{\bm{a}}}
\def\vb{{\bm{b}}}
\def\vc{{\bm{c}}}
\def\vd{{\bm{d}}}
\def\ve{{\bm{e}}}
\def\vf{{\bm{f}}}
\def\vg{{\bm{g}}}
\def\vh{{\bm{h}}}
\def\vi{{\bm{i}}}
\def\vj{{\bm{j}}}
\def\vk{{\bm{k}}}
\def\vl{{\bm{l}}}
\def\vm{{\bm{m}}}
\def\vn{{\bm{n}}}
\def\vo{{\bm{o}}}
\def\vp{{\bm{p}}}
\def\vq{{\bm{q}}}
\def\vr{{\bm{r}}}
\def\vs{{\bm{s}}}
\def\vt{{\bm{t}}}
\def\vu{{\bm{u}}}
\def\vv{{\bm{v}}}
\def\vw{{\bm{w}}}
\def\vx{{\bm{x}}}
\def\vy{{\bm{y}}}
\def\vz{{\bm{z}}}

\def\evalpha{{\alpha}}
\def\evbeta{{\beta}}
\def\evepsilon{{\epsilon}}
\def\evlambda{{\lambda}}
\def\evomega{{\omega}}
\def\evmu{{\mu}}
\def\evpsi{{\psi}}
\def\evsigma{{\sigma}}
\def\evtheta{{\theta}}
\def\eva{{a}}
\def\evb{{b}}
\def\evc{{c}}
\def\evd{{d}}
\def\eve{{e}}
\def\evf{{f}}
\def\evg{{g}}
\def\evh{{h}}
\def\evi{{i}}
\def\evj{{j}}
\def\evk{{k}}
\def\evl{{l}}
\def\evm{{m}}
\def\evn{{n}}
\def\evo{{o}}
\def\evp{{p}}
\def\evq{{q}}
\def\evr{{r}}
\def\evs{{s}}
\def\evt{{t}}
\def\evu{{u}}
\def\evv{{v}}
\def\evw{{w}}
\def\evx{{x}}
\def\evy{{y}}
\def\evz{{z}}

\def\mA{{\bm{A}}}
\def\mB{{\bm{B}}}
\def\mC{{\bm{C}}}
\def\mD{{\bm{D}}}
\def\mE{{\bm{E}}}
\def\mF{{\bm{F}}}
\def\mG{{\bm{G}}}
\def\mH{{\bm{H}}}
\def\mI{{\bm{I}}}
\def\mJ{{\bm{J}}}
\def\mK{{\bm{K}}}
\def\mL{{\bm{L}}}
\def\mM{{\bm{M}}}
\def\mN{{\bm{N}}}
\def\mO{{\bm{O}}}
\def\mP{{\bm{P}}}
\def\mQ{{\bm{Q}}}
\def\mR{{\bm{R}}}
\def\mS{{\bm{S}}}
\def\mT{{\bm{T}}}
\def\mU{{\bm{U}}}
\def\mV{{\bm{V}}}
\def\mW{{\bm{W}}}
\def\mX{{\bm{X}}}
\def\mY{{\bm{Y}}}
\def\mZ{{\bm{Z}}}
\def\mBeta{{\bm{\beta}}}
\def\mPhi{{\bm{\Phi}}}
\def\mLambda{{\bm{\Lambda}}}
\def\mSigma{{\bm{\Sigma}}}

\def\tA{{\tens{A}}}
\def\tB{{\tens{B}}}
\def\tC{{\tens{C}}}
\def\tD{{\tens{D}}}
\def\tE{{\tens{E}}}
\def\tF{{\tens{F}}}
\def\tG{{\tens{G}}}
\def\tH{{\tens{H}}}
\def\tI{{\tens{I}}}
\def\tJ{{\tens{J}}}
\def\tK{{\tens{K}}}
\def\tL{{\tens{L}}}
\def\tM{{\tens{M}}}
\def\tN{{\tens{N}}}
\def\tO{{\tens{O}}}
\def\tP{{\tens{P}}}
\def\tQ{{\tens{Q}}}
\def\tR{{\tens{R}}}
\def\tS{{\tens{S}}}
\def\tT{{\tens{T}}}
\def\tU{{\tens{U}}}
\def\tV{{\tens{V}}}
\def\tW{{\tens{W}}}
\def\tX{{\tens{X}}}
\def\tY{{\tens{Y}}}
\def\tZ{{\tens{Z}}}

\def\gA{{\mathcal{A}}}
\def\gB{{\mathcal{B}}}
\def\gC{{\mathcal{C}}}
\def\gD{{\mathcal{D}}}
\def\gE{{\mathcal{E}}}
\def\gF{{\mathcal{F}}}
\def\gG{{\mathcal{G}}}
\def\gH{{\mathcal{H}}}
\def\gI{{\mathcal{I}}}
\def\gJ{{\mathcal{J}}}
\def\gK{{\mathcal{K}}}
\def\gL{{\mathcal{L}}}
\def\gM{{\mathcal{M}}}
\def\gN{{\mathcal{N}}}
\def\gO{{\mathcal{O}}}
\def\gP{{\mathcal{P}}}
\def\gQ{{\mathcal{Q}}}
\def\gR{{\mathcal{R}}}
\def\gS{{\mathcal{S}}}
\def\gT{{\mathcal{T}}}
\def\gU{{\mathcal{U}}}
\def\gV{{\mathcal{V}}}
\def\gW{{\mathcal{W}}}
\def\gX{{\mathcal{X}}}
\def\gY{{\mathcal{Y}}}
\def\gZ{{\mathcal{Z}}}

\def\sA{{\mathbb{A}}}
\def\sB{{\mathbb{B}}}
\def\sC{{\mathbb{C}}}
\def\sD{{\mathbb{D}}}
\def\sF{{\mathbb{F}}}
\def\sG{{\mathbb{G}}}
\def\sH{{\mathbb{H}}}
\def\sI{{\mathbb{I}}}
\def\sJ{{\mathbb{J}}}
\def\sK{{\mathbb{K}}}
\def\sL{{\mathbb{L}}}
\def\sM{{\mathbb{M}}}
\def\sN{{\mathbb{N}}}
\def\sO{{\mathbb{O}}}
\def\sP{{\mathbb{P}}}
\def\sQ{{\mathbb{Q}}}
\def\sR{{\mathbb{R}}}
\def\sS{{\mathbb{S}}}
\def\sT{{\mathbb{T}}}
\def\sU{{\mathbb{U}}}
\def\sV{{\mathbb{V}}}
\def\sW{{\mathbb{W}}}
\def\sX{{\mathbb{X}}}
\def\sY{{\mathbb{Y}}}
\def\sZ{{\mathbb{Z}}}

\def\emLambda{{\Lambda}}
\def\emA{{A}}
\def\emB{{B}}
\def\emC{{C}}
\def\emD{{D}}
\def\emE{{E}}
\def\emF{{F}}
\def\emG{{G}}
\def\emH{{H}}
\def\emI{{I}}
\def\emJ{{J}}
\def\emK{{K}}
\def\emL{{L}}
\def\emM{{M}}
\def\emN{{N}}
\def\emO{{O}}
\def\emP{{P}}
\def\emQ{{Q}}
\def\emR{{R}}
\def\emS{{S}}
\def\emT{{T}}
\def\emU{{U}}
\def\emV{{V}}
\def\emW{{W}}
\def\emX{{X}}
\def\emY{{Y}}
\def\emZ{{Z}}
\def\emSigma{{\Sigma}}

\def\etA{{\etens{A}}}
\def\etB{{\etens{B}}}
\def\etC{{\etens{C}}}
\def\etD{{\etens{D}}}
\def\etE{{\etens{E}}}
\def\etF{{\etens{F}}}
\def\etG{{\etens{G}}}
\def\etH{{\etens{H}}}
\def\etI{{\etens{I}}}
\def\etJ{{\etens{J}}}
\def\etK{{\etens{K}}}
\def\etL{{\etens{L}}}
\def\etM{{\etens{M}}}
\def\etN{{\etens{N}}}
\def\etO{{\etens{O}}}
\def\etP{{\etens{P}}}
\def\etQ{{\etens{Q}}}
\def\etR{{\etens{R}}}
\def\etS{{\etens{S}}}
\def\etT{{\etens{T}}}
\def\etU{{\etens{U}}}
\def\etV{{\etens{V}}}
\def\etW{{\etens{W}}}
\def\etX{{\etens{X}}}
\def\etY{{\etens{Y}}}
\def\etZ{{\etens{Z}}}

\section{Introduction}

Link prediction is an essential problem in graph machine learning, focusing on predicting whether a link will exist between two nodes. Given the ubiquitous graph data in real-world applications, link prediction has become a powerful tool in domains such as recommender systems \citep{zhang2019inductive} and knowledge graph completion \citep{nickel2015review, DBLP:conf/sigir/LiAH24}. Considerable research efforts have been dedicated to solving this problem. One type of classic research approaches is heuristic-based methods, which infer the likelihood of links based on node similarity metrics \citep{liben2003link,lu2011link}.
Graph Neural Networks (GNNs) have been widely utilized for link prediction. For example, Graph Autoencoders leverage Message Passing Neural Network (MPNN) representations to predict links \cite{gilmer2017neural}. Recently, MPNNs have been combined with structural features to better explore pairwise relations between target nodes \citep{zhang2021labeling, yun2021neo, chamberlain2022graph, wang2023neural}.

Existing supervised-learning-based methods for link prediction are designed for either the static \cite{zhang2021labeling, yun2021neo, chamberlain2022graph, wang2023neural} or relatively dynamic environment  \citep{xu2020inductive, rossi2020temporal, wang2021inductive, tian2023freedyg, yu2023towards, cong2023we, DBLP:conf/sigir/FuH21, DBLP:conf/kdd/FuFMTH22, DBLP:conf/kdd/ZhengJLTH24}, they (chronologically) split the dataset into training and testing sets. Due to the dynamic and evolving nature of many real-world graphs, ideal link prediction methods should adapt over time to consistently meet the contexts and goals of the serving nodes. For instance, in short-video recommender systems, both video content and user preferences change dynamically over time \citep{gao2023alleviating}.
Another significant challenge is the dilemma of exploitation and exploration in link prediction. The learner must not only exploit past collected data to predict links with high likelihood but also explore lower-confidence target nodes to acquire new knowledge for long-term benefits. For example, in social recommendations, it is necessary to prioritize popular users by ‘exploiting’ knowledge gained from previous interactions, while also‘exploring’ potential value from new or under-explored users to seek long-term benefits \citep{ban2024neural}.  
Furthermore, while existing works often analyze time and space complexity, they generally lack theoretical guarantees regarding the performance of link prediction.
To address these challenges, in this paper, we make the following contributions:

\para{Problem Formulation and Algorithm}. 
We formulate the task of link prediction as sequential decision-making under the framework of contextual bandits, where each interaction of link prediction is regarded as one round of decision-making. We introduce a pseudo-regret metric to evaluate the performance of this decision process.
More specifically, we propose a fusion algorithm named \sysn (PageRank Bandits), which combines the exploitation and exploration balance of contextual bandits with the graph structure utilization of PageRank~\citep{tong2006fast,li2023everything}. Compared to contextual bandit approaches, \sysn leverages graph connectivity for an aggregated representation. In contrast to PageRank, it incorporates the principles of exploitation and exploration from contextual bandits to achieve a collaborative trade-off. Additionally, we extend \sysn to node classification by introducing a novel transformation from node classification to link prediction, thereby broadening the applicability of \sysn.

\para{Theoretical Analysis}. 
We introduce a new formulation of the reward function to represent the mapping from both node contexts and graph connectivity to the reward. We provide one theoretical guarantee for the link prediction performance of the proposed algorithm, demonstrating that the cumulative regret induced by \sysn can grow sub-linearly with respect to the number of rounds. This regret upper bound also provides insights into the relationship between the reward and damping factor, as well as the required realization complexity of the neural function class.

\para{Empirical Evaluation}.
We extensively evaluate \sysn in two mainstream settings. (1) Online Link Prediction. In this setting, each link prediction is made sequentially. In each round, given a serving node, the model is required to choose one target node that has the highest likelihood of forming a link with the serving node. The model then observes feedback and performs corresponding optimizations. The goal is to minimize regret over  $T$ rounds (e.g., $T = 10,000$). We compare \sysn with state-of-the-art (SOTA) bandit-based approaches (e.g., \citep{zhou2020neural, ban2022eenet}), which are designed for sequential decision-making. \sysn significantly outperforms these bandit-based baselines, demonstrating the success of fusing contextual bandits with PageRank for collaborative exploitation and exploration.
(2) Offline Link Prediction. In this setting, both training and testing data are provided, following the typical supervised learning process. Although \sysn is designed for online learning, it can be directly applied to offline learning on the training data. We then use the trained model to perform link prediction on the testing data, comparing it with SOTA GNNs-based methods (e.g., \citep{chamberlain2022graph, wang2023neural}). The superior performance of \sysn indicates that principled exploitation and exploration can break the performance bottleneck in link prediction. Additionally, we conduct ablation and sensitivity studies for a comprehensive evaluation of \sysn.

\vspace{-2mm}
\section{Related Work}
\para{Contextual Bandits}. 
The first line of works studies the linear reward assumption, typically calculated using ridge regression \citep{2010contextual, ban2020generic, 2011improved, valko2013finite, dani2008stochastic, qi2024meta}. Linear UCB-based bandit algorithms \citep{2011improved, ban2021local,2016collaborative} and linear Thompson Sampling \citep{agrawal2013thompson, abeille2017linear} can achieve satisfactory performance and a near-optimal regret bound of \(\tilde{\mathcal{O}}(\sqrt{T})\).
To learn general reward functions, deep neural networks have been adapted to bandits in various ways \cite{ban2021multi,ban2024meta}. \cite{riquelme2018deep, lu2017ensemble} develop \(L\)-layer DNNs to learn arm embeddings and apply Thompson Sampling on the final layer for exploration. \cite{zhou2020neural} introduced the first provable neural-based contextual bandit algorithm with a UCB exploration strategy, and \cite{zhang2020neural} later extended to the TS framework. \cite{deb2023contextual} provides sharper regret upper bound for neural bandits with neural online regression.
Their regret analysis builds on recent advances in the convergence theory of over-parameterized neural networks \citep{du2019gradient, allen2019convergence} and uses the Neural Tangent Kernel \citep{ntk2018neural, arora2019exact} to establish connections with linear contextual bandits \citep{2011improved}. \cite{ban2022eenet,ban2023neural} retains the powerful representation ability of neural networks to learn the reward function while using another neural network for exploration. \cite{qi2023graph,qi2022neural} integrates exploitation-exploration neural networks into the graph neural networks for fine-grained exploration and exploration. Recently, neural bandits have been adapted to solve other learning problems, such as active learning\cite{ban2022improved,ban2024neural}, meta learning\cite{qi2024meta}.

\para{Link Prediction Models.} Three primary approaches have been identified for link prediction models. Node embedding methods, as described by previous work \citep{perozzi2014deepwalk,grover2016node2vec,tang2015line, ding2022data, lin2024bemap,lin2024backtime, fu2022disco}, focus on mapping each node to an embedding vector and leveraging these embeddings to predict connections. Another approach involves link prediction heuristics, as explored by \citep{liben2003link,barabasi1999emergence,adamic2003friends, zhou2009predicting}, which utilize crafted structural features and network topology to estimate the likelihood of connections between nodes in a network. The third category employs GNNs for predicting link existence; notable is the Graph Autoencoder (GAE) \citep{kipf2016semi}, which learns low-dimensional representations of graph-structured data through an unsupervised learning process. GAE utilizes the inner product of MPNN representations of target nodes to forecast links but might not capture pairwise relations between nodes effectively. More sophisticated GNN models that combine MPNN with additional structural features, such as those by \citep{zhang2018link, yun2021neo, chamberlain2022graph}, have demonstrated superior performance by integrating both node and structural attributes. One such combined architecture is SF-then-MPNN, as adopted by \citep{zhang2018link, zhu2021neural}. In this approach, the input graph is first enriched with structural features (SF) and then processed by the MPNN to enhance its expressivity. However, since structural features change with each target link, the MPNN must be re-run for each link, reducing scalability. For instance, the SEAL model \citep{zhang2018link} first enhances node features by incorporating the shortest path distances and extracting k-hop subgraphs, then applies MPNN across these subgraphs to generate more comprehensive link representations. Another combined architecture is SF-and-MPNN. Models like Neo-GNN \citep{yun2021neo} and BUDDY \citep{chamberlain2022graph} apply MPNN to the entire graph and concatenate features such as common neighbor counts to enhance representational fidelity. In addition, \citep{wang2023neural} has developed the Neural Common Neighbor with Completion (NCNC) which utilizes the MPNN-then-SF architecture to achieve higher expressivity and address the graph incompleteness. 

Recently, representation learning on temporal graphs for link prediction has also been widely studied to exploit patterns in historical sequences, particularly with GNN-based methods \citep{tian2023freedyg, yu2023towards, cong2023we, xu2020inductive, wang2021inductive, rossi2020temporal}. However, these approaches are still conventional supervised-learning-based methods that chronologically split the dataset into training and testing sets. Specifically, these methods train a GNN-based model on the temporal training data and then employ the trained model to predict links in the test data. In contrast, we formulate link predictions as sequential decision-making, where each link prediction is made sequentially. 
Node classification\cite{bhagat2011node,xu2022graph,xu2023node} is also a prominent direction in graph learning, but it is not the main focus of this paper.
\vspace{-3mm}
\section{Problem Definition} \label{sec:problem}
\vspace{-2mm}
Let $G_0 = (V, E_0)$ be an undirected graph at initialization, where $V$ is the set of $n$ nodes, $|V| = n$, and $E_0 \subseteq V \times V $ represents the set of edges.  
$E_0$ can be an empty set in the cold-start setting or include some existing edges with a warm start.
Each node $v_i \in V$ is associated with a context vector $\bx_{0, i} \in \bbr^d$ . 
Then, we formulate link prediction as the problem of sequential decision-making under the framework of contextual bandits.
Suppose the learner is required to finish a total of $T$ link predictions. We adapt the above notation to all the evolving $T$ graphs $\{ G_t = (V, E_t) \}_{t=0}^{T-1}$ and let  $[T] = \{1, \dots, T\}$. 
In a round of link prediction $t \in [T]$, given $G_{t-1} = (V, E_{t-1})$,  the learner is presented with a serving node $v_t \in V$  and a set of $k$ candidate nodes $\mathcal{V}_t = \{v_{t,1}, \dots, v_{t, k}\} \subseteq V$, where  $\mathcal{V}_t$ is associated with the corresponding $k$ contexts $\calx_t=\{ x_{t,1}, \dots, x_{t, k} \}$ and $|\calv_t| = k$.
In the scenario of social recommendation, $v_t$ can be considered as the user that the platform (learner) intends to recommend potential friends to, and the other candidate users will be represented by $\calv_t$.  $\calv_t$ can be set as the remaining nodes $\calv_t = V_t/v_t$ or formed by some pre-selection algorithm $\calv_t \subset V_t$.

The goal of the learner is to predict which node in $\mathcal{V}_t$ will generate a link or edge with $v_t$. Therefore, we can consider each node in $\mathcal{V}_t$ as an arm, and aim to select the arm with the maximal reward or the arm with the maximal probability of generating an edge with $v_t$.
For simplicity, we define the reward of link prediction as the binary reward. Let $v_{t, \hi} \in \calv_t$ be the node selected by the learner. Then, the corresponding reward is defined as 
$r_{t, \hi} = 1 $ if the link $[v_t, v_{t, \hi}]$ is really generated; otherwise, $r_{t, \hi} = 0$. After observing the reward $r_{t, \hi}$, we update $E_{t-1}$ to obtain the new edge set $E_t$, and thus new $G_t$.

For any node  $v_{t, i} \in \calv_t$, denote by $\cald_{\caly|x_{t,i}}$ the conditional distribution of the random reward $r_{t, i}$  with respect to $x_{t, i}$, where $\caly = \{1, 0\}$.
Then, inspired by the literature of contextual bandits, we define the following \emph{pseudo} regret:
\begin{equation} \label{eq:regretdef}
\begin{aligned}
    \mathbf{R}_T  = \sum_{t=1}^T \left (  \bbe_{r_{t, i^\ast} \sim  \cald_{\caly|x_{t,i^\ast}}} [ r_{t, i^\ast}]    -   \bbe_{r_{t, \hi} \sim  \cald_{\caly|x_{t, \hi}}} [ r_{t, \hi}]  \right)  = \bbp(r_{t, i^\ast} =1 | x_{t, i^\ast}) - \bbp(r_{t, \hi} =1 | x_{t, \hi})
\end{aligned}
\end{equation}
where $i^\ast = \arg \max_{v_{t,i} \in \calv_t} \bbp(r_{t, i} =1 | x_{t, i})$, the tie is broken randomly, and $\hi$ is the index of selected node. $\mathbf{R}_T$ reflects the performance difference of the learned model from the Bayes-optimal predictor. The goal of the learner is to minimize $\mathbf{R}_T$.

\section{Proposed Algorithms} \label{sec:algorithm}

Algorithm \ref{alg:explore} describes the proposed algorithm 
\sysn. It integrates contextual bandits and PageRank to combine the power of balancing exploitation and exploration with graph connectivity.
The first step is to balance the exploitation and exploration in terms of the reward mapping concerning node contexts, and the second step is to propagate the exploitation and exploration score via graph connectivity.

\begin{algorithm}[!t]
\renewcommand{\algorithmicrequire}{\textbf{Input:}}
\renewcommand{\algorithmicensure}{\textbf{Output:}}
\caption{ \sysn (PageRank Bandits) }\label{alg:explore}
\begin{algorithmic}[1] 
\Require $f_1, f_2$,  $T$,  $G_0$,  $\eta_1, \eta_2$ (learning rate), $\alpha$ (damping factor) 
\State Initialize $\theta^1_0, \theta^2_0$
\For{ $t = 1 , 2, \dots, T$}
\State Observe serving node $v_t$, candidate nodes $\calv_t$, contexts $\calx_t$  and Graph $G_{t-1}$
\State $\bh_t = \mathbf{0}$
\For{each $v_{t,i} \in \calv_t$ }
\State 
$\bh_t[i] = f_1\left(x_{t,i}; \theta^1_{t-1}\right) + f_2\left(\phi\left(x_{t,i}\right); \theta^2_{t-1}\right)$
\EndFor
\State Compute $\rmP_t$ based on $G_{t-1}$
\State Solve $\bv_t = \alpha \rmP_t \bv_t + \left(1 - \alpha\right) \bh_t$ 
\State Select $\hi = \arg \max_{v_{t,i} \in \calv_t} \bv_t\left[i\right] $
\State Observe $r_{t, \hi}$
\If {$r_{t, \hi}$ == 1}
\State Add $[v_t, v_{t, \hi}]$ to $G_{t-1}$ and set as $G_t$
\Else
\State $G_t = G_{t-1}$
\EndIf
\State   $\btheta_t^1 = \btheta_{t-1}^1 -  \eta_1  \nabla_{\btheta_{t-1}^1}  \call \left( x_{t,\hi}, r_{t, \hi};   \btheta_{t-1}^1 \right) $ 
\State   $\btheta_t^2 = \btheta_{t-1}^2 -  \eta_2  \nabla_{\btheta_{t-1}^2}  \call \left( \phi(x_{t,\hi}), r_{t, \hi} - f_1(x_{t,\hi}; \theta^1_{t-1});   \btheta_{t-1}^2\right) $
\EndFor
\end{algorithmic}
\end{algorithm}

To exploit the node contexts, we use a neural network to estimate rewards from the node contexts.
Let $f_1(\cdot ; \theta^1)$ be a neural network to learn the mapping from the node context to the reward. Denote the initialized parameter of $f_1$ by $\theta^1_0$.
In round $t$, let $\theta_{t-1}^1$ be parameter trained on the collected data of previous $t-1$ rounds including all selected nodes and the received rewards. 
Given the serving node $v_t$, for any candidate node $v_{t,i} \in \calv_t$, $f_1(x_{t, i} ; \theta^1_{t-1}), i \in \calv_t$ is the estimated reward by greedily exploiting the observed contexts, which we refer to as ``exploitation''.
Suppose $\hi$ is the index of selected nodes. To update $\theta^1_{t-1}$, we can conduct stochastic gradient descent to update $\theta^1$ based on the collected training sample $(x_{t,\hi}, r_{t,\hi})$ with the squared loss function 
$ \call \left( x_{t,\hi}, r_{t, \hi};   \btheta_{t-1}^1\right) = [ f(x_{t,\hi}; \theta_{t-1}^1) - r_{t,\hi}]^2/2$.
Denote the updated parameters by $\theta^1_t$ for the next round of link prediction.

In addition to exploiting the observed contexts, we employ another neural network to estimate the potential gain of each candidate node in terms of reward for exploration. This idea is inspired by \citep{ban2022eenet}. Denote the exploration network by $f_2(\cdot; \theta^2)$. 
$f_2$ is to learn the mapping from node contexts and the discriminative ability of $f_1$ to the potential gain.
In round $t \in [T]$, given node context $x_{t,i} \in \calv_t$ and its estimated reward $f_1(x_{t,i}; \theta^1_{t-1})$, the input of $f_2$ is the gradient of $f_1(x_{t,i}; \theta^1_{t-1})$ with respect to $\theta^1_{t-1}$, denoted by $\phi(x_{t,i})$,  and $f_2(\phi(x_{t,i}); \theta^2_{t-1})$ is the estimated potential gain. 
After the learner selects the node $x_{t,\hi}$ and observes the reward $r_{t,\hi}$, the potential gain is defined as $r_{t,\hi} - f_1(x_{t,i}; \theta^1_{t-1})$, which is used to train $f_2$. Thus, after this interaction, we conduct the stochastic gradient descent to update $\theta^2$ based on the collected sample $(\phi(x_{t,\hi}),  r_{t,\hi} - f_1(x_{t,i}; \theta^1_{t-1}) )$ with the squared loss function $\call \left( \phi(x_{t,\hi}), r_{t, \hi} - f_1(x_{t,\hi}; \theta^1_{t-1});   \btheta_{t-1}^2\right) = [f(\phi_{t,i}; \theta_{t-1}^2) - ( r_{t,\hi} - f_1(x_{t,i}; \theta^1_{t-1})) ]^2/2$.
Denote by $\theta^2_t$ the updated parameters of $f_2$ for the next round of link prediction.
The reasons for setting $\phi(x_{t,i})$ as the input of $f_2$  are as follows: (1) it incorporates the information of both $x_{t,\hi}$ and discriminative ability of $f_1(\cdot; \theta_{t-1}^1)$; 
(2) the statistical form of the confidence interval for reward estimation can be regarded as the mapping function from $\phi(x_{t,i})$ to the potential gain, and $f_2$ is to learn the unknown mapping \citep{ban2022eenet}.

The previous steps demonstrate the exploitation and exploration of node contexts to facilitate decision-making in link prediction. Since graph connectivity is also crucial, we next introduce our method of integrating the bandit principle with PageRank to enable collaborative exploitation and exploration. 
PageRank calculates the stationary distribution of the random walker starting from some node, iteratively moving to a random neighbor with probability $\alpha$ (damping factor) or returning to its original position with probability $1-\alpha$. Let $\bv_t$ be the stationary distribution vector calculated based on the graph $G_t$. 
Then, $\bv_t$ satisfies: 
\begin{equation}\label{eq:pagerankvector}
\bv_t = \alpha  \rmP_t \bv_t + \left(1-\alpha\right) \rvh_t
\end{equation}
where $\rmP_t \in \bbe^{n \times n}$ is the transition matrix built on $G_{t-1}$ and $\rvh_t$ is typically regarded as a position vector to mark the starting node. 
$\rmP_t$ is computed as $\mathbf{D}^{-1}_{t-1}\mathbf{A}_{t-1}$, where $\mathbf{D}_{t-1} \in \bbr^{n\times n}$ is the degree matrix of $G_{t-1}$ and $\mathbf{A}_{t-1} \in \bbr^{n \times n}$ is the adjacency matrix of $G_{t-1}$. 

Here we propose to use $\rvh_t$ to include the starting exploitation and exploration scores of candidate nodes, defined as:
\begin{equation}
    i \in \calv_t, \bh_t[i] =  f_1(\bx_{t, i} ; \btheta^1_{t-1}) + f_2(x_{t,i}; \theta^2_{t-1}), \  \text{and} \  \  i \in  V/\calv_t, \bh_t[i] = 0. 
\end{equation}
Therefore, $\bv_t$ is the vector for the final decision-making based on collaborative exploitation and exploration. Some research efforts have been devoted to accelerating the calculation of Eq.\ref{eq:pagerankvector} in the evolving graph, e.g., \citep{li2023everything}, which can be integrated into \sysn (Line 9 in Algorithm \ref{alg:explore}) to boost its efficiency and scalability. 

\begin{figure}[!t]
\centering
    \includegraphics[width= 0.7\columnwidth]{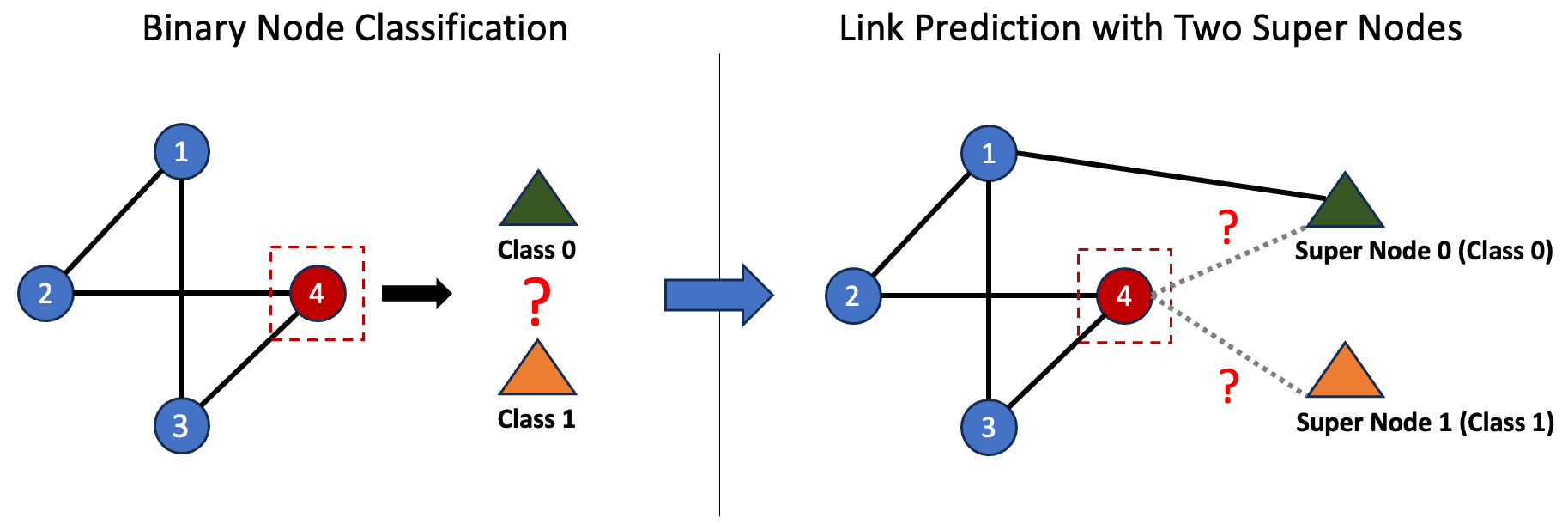}
    \caption{\textbf{Transforming Node Classification to Link Prediction}. Consider a binary node classification problem. In the left figure, given a graph, the learner tries to classify node 4 into one of two classes. First, we add two supernodes to the graph, each representing one of the classes. The node classification problem is then transformed into predicting links between node 4 and the two supernodes in the right figure. Suppose the learner predicts that a link will exist between node 4 and supernode 0. If node 4 belongs to Class 0, the reward is 1, and an edge is added between node 4 and supernode 0; otherwise, the reward is 0, and an edge is added between node 4 and supernode 1.}
    \label{fig:nodetransform}
\end{figure}

\para{\sysn for Node Classification}.
We also extend \sysn to solve the problem of node classification as illustrated in Figure \ref{fig:nodetransform}. 
Consider a $k$-class classification problem. We add $k$ super nodes $\{\tilde{v}_1,  \tilde{v}_2, \dots, \tilde{v}_k \}$ to the graph, which represents $k$ classes, respectively.   
Then, we transform the node classification problem into the link prediction problem, aiming to predict the link between the serving node and the $k$ super nodes.  
To be specific, in round $t \in [T]$, the learner is presented with the serving node $v_t$ and the $k$ candidate (super) nodes $\calv_t = \{ \tilde{v}_1,  \tilde{v}_2, \dots, \tilde{v}_k \}$ associated with $k$ corresponding contexts $\calx_t = \{ x_{t,1}, x_{t,2}, \dots,  x_{t,k}\}$. Recall $x_t$ is the context associated with $v_t$. Then, we define the contexts of super nodes as $x_{t,1} = [x_t^\top, 
\mathbf{0}, \dots, \mathbf{0}]^\top, x_{t,2} = [\mathbf{0}, x_t^\top, \dots, \mathbf{0}]^\top, \dots, x_{t,k} = [\mathbf{0}, \mathbf{0}, \dots, x_t]^\top$,  $x_{t,i} \in \bbr^{kd}, i \in [k]$. This context definition is adopted from neural contextual bandits \citep{ban2022eenet,zhou2020neural}.
Then, the learner is required to select one node from $\calv_t$. Let $\tilde{v}_{i_t}$ be the selected node and $\tilde{v}_{i_t^\ast}$ be ground-truth node ($i_t^\ast$ is the index of ground-truth class of node $v_t$). Then, after observing the reward $r_{t, i_t}$, one edge $[v_t,  \tilde{v}_{i_t}]$ is added to the graph $G_{t-1}$, if $v_t$ belongs to the class $i_t$, i.e., $ i_t=  i_t^\ast$ and reward $r_{t, i_t} = 1$. 
Otherwise,   $r_{t, i_t} = 0$ and the edge $[v_t, \tilde{v}_{i_t^\ast}]$ is added to $G_{t-1}$.
Then, we can naturally apply \sysn to this problem. We detail our extended algorithm for node classification in Algorithm \ref{alg:explore_node}.   

\textbf{PRB Greedy.} We also introduce a greedy version of \sysn which integrates PageRank solely with contextual bandit exploitation, as outlined in Algorithm \ref{alg:greedy}. We will compare each variant of algorithms in our experiment section.

\section{Regret Analysis}

In this section, we provide the theoretical analysis of \sysn by bounding the regret defined in Eq.\ref{eq:regretdef}. 
One important step is the definition of the reward function, as this problem is different from the standard bandit setting that focuses on the arm (node) contexts and does not take into account the graph connectivity.
First, we define the following general function to represent the mapping from the node contexts to the reward.
Given the serving node $v_t$ and an arm node $v_{t,i} \in V_t$ associated with the context $x_{t,i}$, the reward conditioned on $v_t$ and $v_{t,i}$ is assumed to be governed by the function:
\begin{equation}
\bbe[\tilde{r}_{t,i} | v_t, v_{t,i}] = y\left(\bx_{t,i}\right) 
\end{equation}
where $y$ is an unknown but bounded function that can be either linear or non-linear.  
Next, we provide the formulation of the final reward function. In round $t \in [T]$, let $\by_t$ be the vector to represent the expected rewards of all candidate arms $\by_t = [y\left(\bx_{t,i}\right): v_{t,i} \in V_t]$. 
Given the graph $G_{t-1}$, its normalized adjacency matrix $\bp_t$, and the damping factor $\alpha$, inspired by PageRank, the optimizing problem is defined as:
$\bv_t^\ast  = \arg \min_{\bv} \alpha  \bv^{\top} (\mathbf{I} - \bp_t) \bv + (1-\alpha) \|\bv - \by_t\|_2^2/2$.
Then, its optimal solution is 
\begin{equation}
\bv_t^\ast = \alpha  \rmP_t \bv_t^\ast + \left(1-\alpha\right) \by_t.
\end{equation}
For any candidate node $v_{t,i} \in \calv_t$,  we define its expected reward as $\bbe_{r_{t, i} \sim  \cald_{\caly|x_{t,i}}} [ r_{t, i}] = \bv_t^\ast[i]$.
$\bv_t^\ast$ is a flexible reward function that reflects the mapping relation of both node contexts and graph connectivity. $\alpha$ is a hyper-parameter to trade-off between the leading role
of graph connectivity and node contexts. When $\alpha = 0$, $\bv_t^\ast$ turns into the reward function in contextual bandits \citep{zhou2020neural,ban2022eenet}; when $\alpha = 1$, $\bv_t^\ast$ is the optimal solution solely for graph connectivity. Here, we assume $\alpha$ is a prior knowledge.
Finally, the pseudo-regret is defined as 
\begin{equation}
\mathbf{R}_T  = \sum_{t=1}^T \left ( \bv_t^\ast[i^\ast]   - \bv_t^\ast[\hi] \right).
\end{equation}
where $i^\ast = \arg \max_{v_{t,i} \in \calv_t} \bv_t^\ast[i]$ and $\hi$ is the index of the selected node. 
The regret analysis is associated with the Neural Tangent Kernel (NTK) matrix as follows: 

\begin{definition} [NTK \cite{ntk2018neural, wang2021neural}] Let $\mathcal{N}$ denote the normal distribution.
Given all data instances $\{\bx_t\}_{t=1}^{Tk}$, for $i, j \in [Tk]$,  define 
\[
\begin{aligned}
&\mathbf{H}_{i,j}^0 = \Sigma^{0}_{i,j} =  \langle \bx_i, \bx_j\rangle,   \ \ 
\mathbf{A}^{l}_{i,j} =
\begin{pmatrix}
\Sigma^{l}_{i,i} & \Sigma^{l}_{i,j} \\
\Sigma^{l}_{j,i} &  \Sigma^{l}_{j,j} 
\end{pmatrix} \\
&   \Sigma^{l}_{i,j} = 2 \mathbb{E}_{a, b \sim  \mathcal{N}(\mathbf{0}, \mathbf{A}_{i,j}^{l-1})}[ \sigma(a) \sigma(b)], \\ & \mathbf{H}_{i,j}^l = 2 \mathbf{H}_{i,j}^{l-1} \mathbb{E}_{a, b \sim \mathcal{N}(\mathbf{0}, \mathbf{A}_{i,j}^{l-1})}[ \sigma'(a) \sigma'(b)]  + \Sigma^{l}_{i,j}.
\end{aligned}
\]
Then, the NTK matrix is defined as $ \mathbf{H} =  (\mathbf{H}^L + \Sigma^{L})/2$.
\end{definition}

\begin{assumption} \label{assum:ntk}
There exists $\lambda_0 > 0$, such that $\bbh \succeq \lambda_0 \mathbf{I}$.
\end{assumption}

The assumption \ref{assum:ntk} is generally made in the literature of neural bandits \citep{zhou2020neural,zhang2020neural,dai2022federated, jia2021learning, ban2022eenet, ban2021multi, xu2020neural} to ensure the existence of a solution for NTK regression.

As the standard setting in contextual bandits, all node contexts are normalized to the unit length.
Given $\bx_{t,i} \in \bbr^d$ with $\|\bx_{t,i}\|_2 = 1$, $t \in [T], i \in [k]$,  without loss of generality, we define a fully-connected network with depth $L \geq 2$ and width $m$:
\begin{equation} \label{eq:structure}
f(\bx_{t,i}; \theta) = \bw_L \sigma ( \bw_{L-1}  \sigma (\bw_{L-2} \dots  \sigma(\bw_1 \bx_{t,i}) ))
\end{equation}
where $\sigma$ is the ReLU activation function,  $\bw_1 \in \bbr^{m \times d}$, $ \bw_l \in \bbr^{m \times m}$, for $2 \leq l \leq L-1$, $\bw^L \in \bbr^{1 \times m}$, and 
$
\theta = [ \text{vec}(\bw_1)^\top,  \text{vec}(\bw_2)^\top, \dots, \text{vec}(\bw_L )^\top ]^\top \in \bbr^{p}.
$
Note that our analysis can also be readily generalized to other neural architectures such as CNNs and ResNet \cite{allen2019convergence, du2019gradient}.
We employ the following initialization \citep{cao2019generalization} for $\theta$: For $l \in [L-1]$, each entry of $\bw_l$ is drawn from the normal distribution $\caln(0, 2/m)$; each entry of $\bw_L$ is drawn from the normal distribution $\caln(0, 1/m)$. The network $f_1$ and $f_2$ follows the structure of $f$. Define $\by = [y(\bx_{t,i}): t \in [T], i \in [k]].$
Finally, we provide the performance guarantee as stated in the following Theorem.

\begin{theorem} \label{theo:main}
Given the number of rounds $T$, for any $\alpha, \delta \in (0, 1)$, suppose $m \geq \widetilde{\Omega} ( \text{poly}(T, L) \cdot k\log (1/\delta))$, $ \eta_1 = \eta_2  = \frac{T^3}{\sqrt{m}}$ and set $\tilde{r}_{t,i} = r_{t,i}, t\in [T], i \in [k]$.
Then, with probability at least $1 - \delta$ over the initialization,  Algorithm \ref{alg:explore} achieves the following regret upper bound:
\begin{equation}
    \mathbf{R}_T \leq  \widetilde{\calo}(\sqrt{\tilde{d} kT }) \cdot \sqrt{\max(\tilde{d}, S^2)}
\end{equation}
where $\hd = \frac{\log \det(\mathbf{I} + \mathbf{H} )}{\log (1 + Tk)}$ and $S = \sqrt{\by^\top \mathbf{H}^{-1} \by}$.
\end{theorem}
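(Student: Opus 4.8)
The plan is to reduce the regret of \sysn{} to that of a standard neural contextual bandit algorithm (NeurEE-style, following \citep{ban2022eenet}) by exploiting the linear algebra connecting $\bv_t^\ast$ and $\by_t$. The key observation is that Eq.~\eqref{eq:pagerankvector} and its population counterpart $\bv_t^\ast = \alpha \rmP_t \bv_t^\ast + (1-\alpha)\by_t$ both have the closed form $\bv = (1-\alpha)(\mathbf{I} - \alpha \rmP_t)^{-1} \bh$ (resp. with $\by_t$), because $\alpha < 1$ and $\rmP_t$ is (sub-)stochastic so $\mathbf{I} - \alpha\rmP_t$ is invertible with $\|(\mathbf{I}-\alpha\rmP_t)^{-1}\|_{\mathrm{op}} \le 1/(1-\alpha)$. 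Writing $\mathbf{M}_t := (1-\alpha)(\mathbf{I}-\alpha\rmP_t)^{-1}$, we have $\bv_t = \mathbf{M}_t \bh_t$ and $\bv_t^\ast = \mathbf{M}_t \by_t$, so that for any two indices
\[
\bv_t^\ast[i^\ast] - \bv_t^\ast[\hi] = (\mathbf{e}_{i^\ast} - \mathbf{e}_{\hi})^\top \mathbf{M}_t \by_t .
\]
First I would use this to bound the per-round regret: since $\hi = \arg\max_i \bv_t[i] = \arg\max_i (\mathbf{e}_i^\top \mathbf{M}_t)\bh_t$, the gap telescopes through the estimate $\mathbf{M}_t\bh_t$, and the regret is controlled by $\|\mathbf{M}_t^\top(\mathbf{e}_{i^\ast}-\mathbf{e}_{\hi})\|_1 \cdot \max_i |(\mathbf{M}_t\by_t)[i] - (\mathbf{M}_t\bh_t)[i]|$, which after bounding the row-sums of $\mathbf{M}_t$ (a constant depending on $\alpha$) reduces everything to controlling $\|\by_t - \bh_t\|_\infty$, i.e., the pointwise reward-estimation error of the combined network $\bh_t[i] = f_1(x_{t,i};\theta^1_{t-1}) + f_2(\phi(x_{t,i});\theta^2_{t-1})$.

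The second and main block is then the neural-bandit estimation bound for $\bh_t$. Here I would invoke the over-parameterization machinery: with $m \ge \widetilde\Omega(\mathrm{poly}(T,L)k\log(1/\delta))$, standard NTK results (\citep{cao2019generalization, allen2019convergence, zhou2020neural}) give that near initialization $f_1(x_{t,i};\theta^1_{t-1}) \approx \langle \phi(x_{t,i}), \theta^1_{t-1}-\theta^1_0\rangle$ and that the trained parameters stay in a small ball, so that $f_1$ behaves like a linear model in the NTK feature map. The exploitation error of $f_1$ is then governed by the effective dimension $\hd = \log\det(\mathbf{I}+\mathbf{H})/\log(1+Tk)$ and the RKHS norm $S = \sqrt{\by^\top \mathbf{H}^{-1}\by}$, exactly as in neural contextual bandits, while $f_2$ — trained on residuals $r_{t,\hi} - f_1(x_{t,\hi};\theta^1_{t-1})$ — provides the exploration bonus that, in the EE-Net analysis, upper-bounds the true confidence width. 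Combining the confidence-width sum $\sum_{t=1}^T \min(\text{width}_t,1) \le \widetilde\calo(\sqrt{\hd kT})$ (an elliptical-potential / information-gain argument) with the $\sqrt{\max(\hd,S^2)}$ factor from the generalization bound on the learned reward class yields $\mathbf{R}_T \le \widetilde\calo(\sqrt{\hd kT})\cdot\sqrt{\max(\hd,S^2)}$ after the constant $\alpha$-dependent factor from $\mathbf{M}_t$ is absorbed into $\widetilde\calo$.

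Concretely, the steps in order: (1) derive the closed form $\bv = \mathbf{M}_t\bh$, $\bv_t^\ast = \mathbf{M}_t\by_t$ and bound $\|\mathbf{M}_t\|_{1\to\infty}$ or its row $\ell_1$-norms by a function of $\alpha$ using the Neumann series $\sum_{j\ge0}\alpha^j\rmP_t^j$; (2) reduce per-round regret to $O_\alpha(\|\by_t-\bh_t\|_\infty)$ plus the usual "selected vs.\ optimal" bandit decomposition; (3) set up the NTK linearization of $f_1,f_2$ near initialization and the parameter-norm bounds under the given $\eta_1=\eta_2=T^3/\sqrt m$ and $m$ lower bound; (4) show $f_2$'s output dominates the NTK confidence radius for the reward estimate (the EE-Net exploration guarantee), so $\bh_t$ is an optimistic estimate of $\by_t$; (5) sum the confidence widths via the potential-function / $\hd$ argument and fold in the $\sqrt{\max(\hd,S^2)}$ generalization term. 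The hard part will be step (4) together with making step (3) rigorous — i.e., transporting the single-network EE-Net regret analysis to the situation where the decision variable is $\mathbf{M}_t\bh_t$ rather than $\bh_t$ itself, and verifying that the $\alpha\rmP_t$ propagation does not distort the optimism property of $f_2$ (it does not, because $\mathbf{M}_t$ has nonnegative entries, so it is monotone and preserves the "$\bh_t \ge \by_t$ entrywise with high probability" inequality up to the row-sum scaling). Controlling the error introduced by using $\phi(x_{t,i})$ (the gradient at $\theta^1_{t-1}$, not $\theta^1_0$) as the input to $f_2$ across rounds, while $\theta^1$ drifts, is the delicate quantitative point and will require the small-ball parameter bounds from step (3).
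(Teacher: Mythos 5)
Your first block---the reduction through the PageRank resolvent---is essentially the paper's argument. The paper writes $\bv_t^\ast[i^\ast]-\bv_t^\ast[\hi]\le \bv_t^\ast[i^\ast]-\bv_t[i^\ast]+\bv_t[\hi]-\bv_t^\ast[\hi]\le 2\|\bv_t^\ast-\bv_t\|_2$ (using that $\hi$ maximizes $\bv_t$), substitutes the closed forms $\bv_t=(1-\alpha)(\mathbf{I}-\alpha\rmP_t)^{-1}\bh_t$ and $\bv_t^\ast=(1-\alpha)(\mathbf{I}-\alpha\rmP_t)^{-1}\by_t$, and bounds the operator norm by $1$ via $\mathbf{I}-\alpha\rmP_t\succeq(1-\alpha)\mathbf{I}$ (the $\alpha$-dependence cancels exactly rather than being ``absorbed''). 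This reduces everything to $\|\by_t-\bh_t\|_2$, just as you propose.

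The gap is in your second block, steps (4)--(5). You plan to show that $f_2$'s output \emph{dominates} the NTK confidence radius so that $\bh_t$ is an entrywise \emph{optimistic} estimate of $\by_t$, and then to sum confidence widths via an elliptical-potential argument. That is a NeuralUCB-style proof, and it does not go through here: $f_2$ is trained by SGD to \emph{regress} the residual $r_{t,\hi}-f_1(x_{t,\hi};\theta^1_{t-1})$, so nothing in the algorithm forces its output to upper-bound the true estimation error with high probability---it only approximates the residual on average. The paper never establishes (and does not need) optimism. Instead it proves an average absolute-error bound: the per-round errors $|f_1(x_{\tau,\hi};\theta^1_{\tau-1})+f_2(\phi(x_{\tau,\hi});\theta^2_{\tau-1})-r_{\tau,\hi}|$ form (after centering) a martingale difference sequence, Hoeffding--Azuma plus an instance-dependent loss bound for SGD in a shrinking parameter ball give an average error of order $\bigl(\sqrt{\Psi(\theta_0,R)}+\calo(1)\bigr)/\sqrt{t}$, and the complexity term $\Psi$ is then bounded by $\widetilde{\calo}\bigl(\max(\hd,S^2)\cdot\hd\log(1+Tk)\bigr)$ via NTK ridge regression and the confidence-ellipsoid/log-determinant machinery of Abbasi-Yadkori et al.\ (this is where $\hd$ and $S$ enter, not through a width-summation). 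A companion corollary applied to ``shadow'' parameters trained along the optimal-arm trajectory transfers the bound to all candidate arms, giving $\|\by_t-\bh_t\|_2\le\sqrt{k}\cdot\widetilde{\calo}(\sqrt{\Psi})/\sqrt{t}$, and summing $1/\sqrt{t}$ yields the $\sqrt{T}$ rate. If you want to complete your proposal you should replace the optimism step with this average-error/online-to-batch argument; your own flagged difficulty (that $\phi$ is evaluated at the drifting $\theta^1_{t-1}$) is indeed handled by the small-ball lemmas, but the optimism property you rely on is the part that would fail.
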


Theorem \ref{theo:main} provides a regret upper bound for \sysn with the complexity of $\widetilde{\calo}(\hd \sqrt{kT})$ (see proofs in Appendix \ref{sec:proof}). Instead, the graph-based methods (e.g., \citep{chamberlain2022graph,wang2023neural}) lack an upper bound in terms of their performance.
Theorem \ref{theo:main} provides insightful results in terms of \sysn's performance. First, \sysn's regret can grow sub-linearly with respective to $T$. Second, \sysn's performance is affected by the number of nodes $k$. This indicates the larger the graph is, the more difficult the link prediction problem is. Third, $\hd$ and $S$ in the regret upper bound reflect the complexity of the required neural function class to realize the underlying reward function $\bv_t^\ast$, i.e., the difficulty of learning $\bv_t^\ast$.
$\hd$ is the effective dimension, which measures the actual underlying dimension in the RKHS space spanned by NTK. $S$ is to provide an upper bound on the optimal parameters in the context of NTK regression.
Both $\hd$ and $S$ are two complexity terms that commonly exist in the literature of neural contextual bandits\citep{zhou2020neural,zhang2020neural}. 
In the general case when $ 1 > \alpha > 0$, learning $\bv_t^\ast$ proportionally turns into a bandit optimization problem and the upper bound provided in Theorem \ref{theo:main} matches the SOTA results in neural bandits~\citep{zhou2020neural,zhang2020neural}. In fact, the regret upper bound is closely related to the graph structure of $G_t$.
In the special case when $\alpha = 1$, learning $\bv_t^\ast$ turns into a simple convex optimization problem (Eq. \eqref{eq:pagerankvector}) and \sysn can really find the optimal solution, which leads to zero regrets. When $\alpha = 0$, the problem turns into a complete bandit optimization problem with the same regret upper bound as Theorem \ref{theo:main}.

\section{Experiments}

In this section, we begin by conducting a comprehensive evaluation of our proposed method, PRB, compared with both bandit-based and graph-based baselines across online and offline link prediction settings. Then, we analyze the computational costs associated with each experiment and present additional ablation studies related to \sysn.
In the implementation of \sysn, we adapt the efficient PageRank algorithm \cite{li2023everything} to solve Eq. \eqref{eq:pagerankvector}.

\vspace{-0.5em}
\subsection{Online Link Prediction} \label{sec: online_link_prediction}
\vspace{-0.5em}

\begin{table*}[!ht]
  \centering
  
    \scalebox{1}{
    \begin{tabular}{lccccc}
    \toprule
    \multirow{2}[5]{*}{Methods} &MovieLens & AmazonFashion& Facebook & GrQc \\
    \cmidrule{2-5} & Mean $\pm$ Std & Mean $\pm$ Std & Mean $\pm$ Std & Mean $\pm$ Std\\
    
    \midrule
    EE-Net           & 1638 $\pm$ 15.3 &  1698 $\pm$ 19.3  & 2274 $\pm$ 27.1 & 3419 $\pm$ 16.5 \\  
    NeuGreedy   & 1955 $\pm$ 17.3 & 1952 $\pm$ 27.4 & 2601 $\pm$ 14.2 & 3629 $\pm$ 18.2 \\
    NeuralUCB        & 1737 $\pm$ 16.8 & 1913 $\pm$ 18.6 & 2190 $\pm$ 16.3 & 3719 $\pm$ 16.4 \\
    NeuralTS         & 1683 $\pm$ 14.7 & 2055 $\pm$ 21.9 & 2251 $\pm$ 19.5 & 3814 $\pm$ 23.3 \\
    \midrule
    \textbf{PRB} & \textbf{1555 $\pm$ 21.7} & \textbf{1455 $\pm$ 18.4} & \textbf{1929 $\pm$ 17.0} & \textbf{3236 $\pm$ 18.5}\\

    \bottomrule
    \end{tabular}%
 }
 \caption{Cumulative regret of bandit-based methods on \textbf{online} link prediction.}
  \label{tab:bandit-based res}%
\end{table*}

    
    
\begin{figure}[htp]
\centering
    \begin{minipage}[b]{0.24\textwidth}
        \centering
        \includegraphics[width=\textwidth]{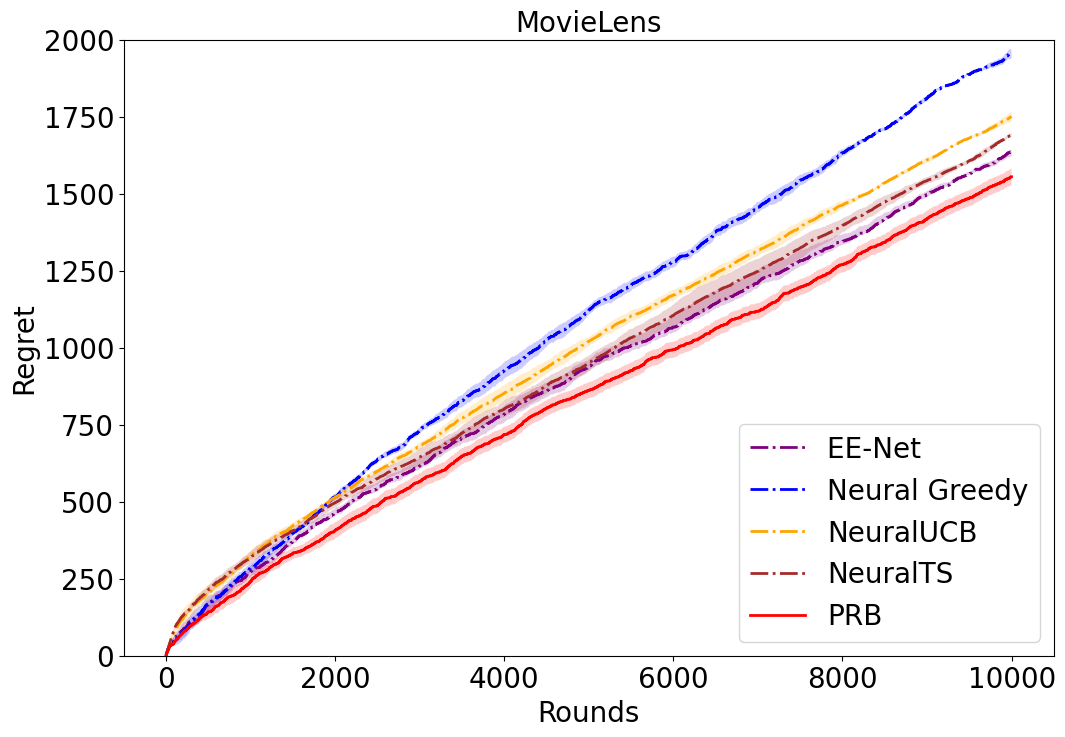}
    \end{minipage}
    \hfill
    \begin{minipage}[b]{0.24\textwidth}
        \centering
        \includegraphics[width=\textwidth]{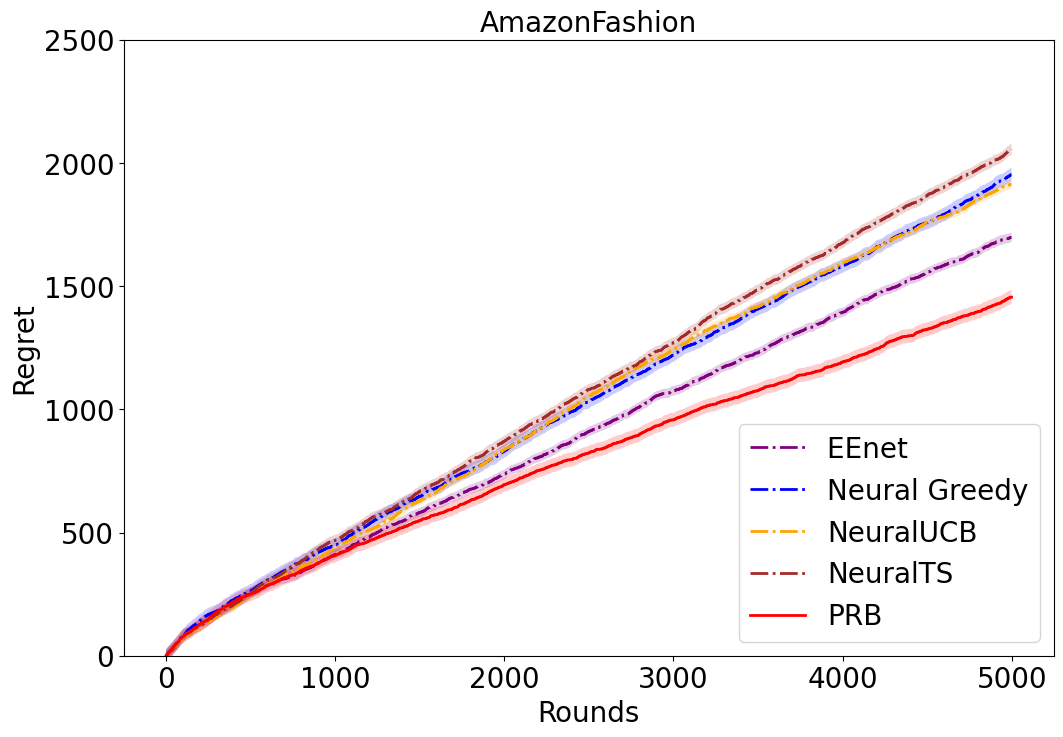}
    \end{minipage}
    \hfill
    \begin{minipage}[b]{0.24\textwidth}
        \centering
        \includegraphics[width=\textwidth]{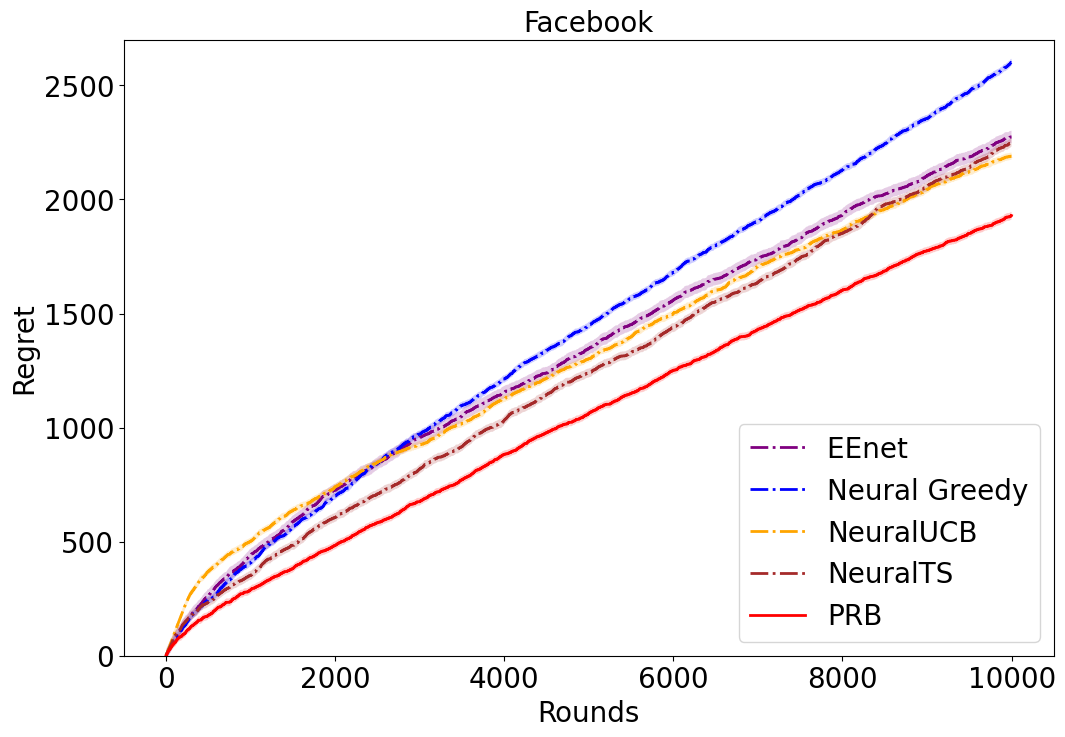}
    \end{minipage}
    \hfill
    \begin{minipage}[b]{0.24\textwidth}
        \centering
        \includegraphics[width=\textwidth]{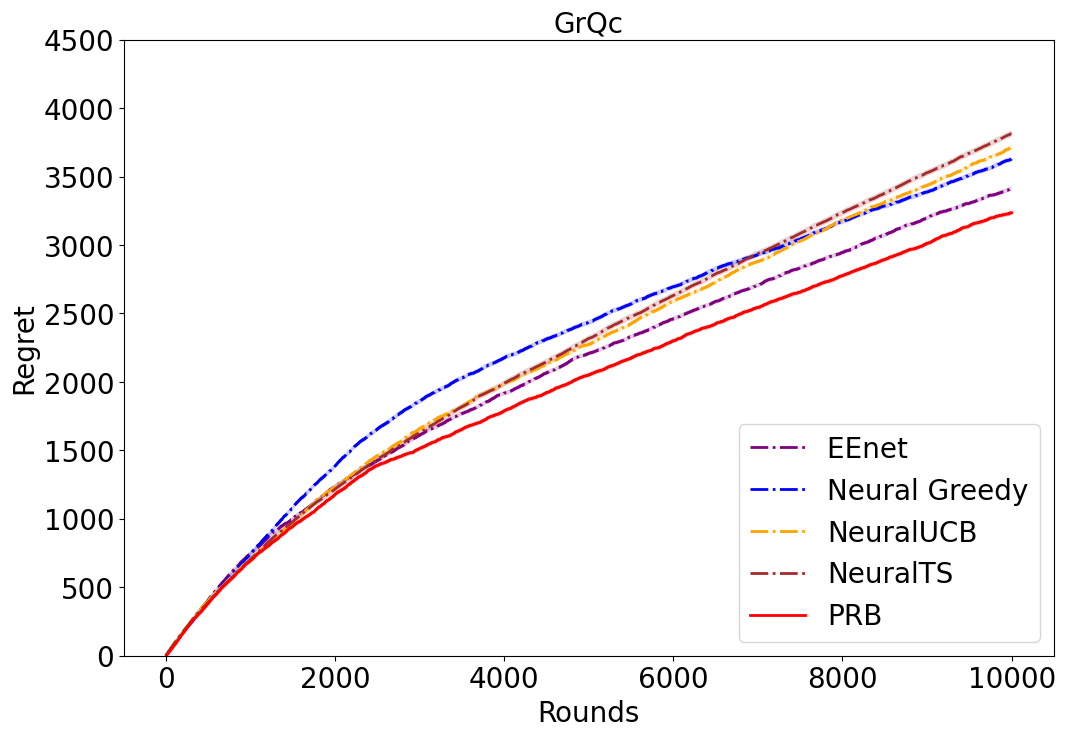}
    \end{minipage}
    
    \caption{Regret comparison of bandit-based methods on \textbf{online} link prediction datasets (average of 10 runs with standard deviation in shadow, detailed in Table \ref{tab:bandit-based res}).}
    \label{fig:link_prediction}
\end{figure}

In this sub-section, we evaluate \sysn on the setting of online link prediction and node classification as described in Sec. \ref{sec:problem}, compared with bandit-based baselines.

\textbf{Datasets and Setups.} 
We use three categories of real-world datasets to compare \sysn with bandit-based baselines. The details and experiment settings are as follows. 

(1) Recommendation datasets: Movielens~\citep{harper2015movielens} and Amazon Fashion ~\citep{ni2019justifying} (Bipartite Graph).
Given the user set $U$ and item set $I$,
let $G_0$ be the graph with no edges, $G_0 = (V =U+I, E_0 = \emptyset)$.
In round $t \in [T]$, we randomly select a user $v_t \in U$, and then randomly pick 100 items (arms) from $I$, including $v_t$'s 10  purchased items, forming $\calv_t$. 
\sysn runs based on $G_{t-1}$ and selects an arm (node) $v_{t, \hi} \in \calv_t$.
If the selected arm $v_{t, \hi}$ is the purchased item by $u_t$, the regret is $0$ (or reward is $1$) and we add the edge $[v_t, v_{t,\hi}]$ to $G_{t-1}$, to form the new graph $G_{t}$; otherwise, the regret is $1$ (or reward is $0$) and $G_t = G_{t-1}$.

(2) Social network datasets: Facebook \citep{leskovec2012learning} and GR-QC \citep{leskovec2007graph}. 
Given the user set $V$, we have $G_0 = (V, E_0 = \emptyset)$.
In a round $t \in [T]$,  we randomly select a source node $v_t$ that can be thought of as the serving user. Then, we randomly choose 100 nodes, including  $v_t$'s 10 connected nodes but their edges are removed, which form the arm pool $\calv_t$ associated with the context set $\calx_t$. 
Then, \sysn will select one arm $v_{t,\hi} \in \calv_t$.
If $v_t$ and $v_{t, \hi}$ are connected in the original graph, the regret is $0$ and add the edge $[v_t, v_{t,\hi}]$ to $G_{t-1}$; otherwise, the regret is $1$ and $G_t = G_{t-1}$.

(3) Node classification datasets: Cora, Citeseer, and Pubmed from the Planetoid citation networks \citep{yang2016revisiting}. Recall the problem setting described in Sec. \ref{sec:algorithm}. 
Consider a $k$-class node classification problem. 
Given a graph $G(V, E_0 =\emptyset)$, we randomly select a node $v_t \in V$ to predict its belonging class, in a round $t \in [T]$. Then, \sysn select one super node $\tilde{v}_{i_t}$. 
If  $v_t$ belongs to class $i_t$,  the regret is $0$ and add $[v_t, \tilde{v}_{i_t}]$ to $G_{t-1}$. Otherwise, the regret is $1$ and $G_t = G_{t-1}$.

\textbf{Baselines.} For bandit-based methods, we apply Neural Greedy \citep{ban2022eenet} that leverages the greedy exploration strategy on the exploitation network, NeuralUCB \citep{zhou2020neural} that uses the exploitation network to learn the reward function along with an UCB-based exploration strategy, NeuralTS \citep{zhang2020neural} that adopts the exploitation network to learn the reward function along with the Thompson Sampling exploration strategy, and EE-net \citep{ban2022eenet} that utilizes the exploitation-exploration network to learn the reward function.
Following \cite{zhou2020neural, ban2022eenet}, for all methods, we train each network every 50 rounds for the first 2000 rounds and then every 100 rounds for the remaining rounds. See Appendix \ref{appendix: experimental_setups} for additional experimental setups. 


\textbf{Online Link Prediction}. 
We use Figure \ref{fig:link_prediction} to depict the regret trajectories over 10,000 rounds, and Table \ref{tab:bandit-based res} to detail the cumulative regret after 10,000 rounds for all methods, where the lower is better. Based on the regret comparison, \sysn consistently outperforms all other baselines across all datasets. For example, the cumulative regret at 10,000 rounds for \sysn on MovieLens is considerably lower than the best-performing baseline, EE-Net. Similarly, in the AmazonFashion dataset, PRB achieved the lowest regret, surpassing the strongest baseline EE-Net over 14\%. This trend is consistent across the Facebook and GrQc datasets, where PRB maintains its lead with the lowest regrets respectively. The consistency in PRB's performance across various datasets suggests the importance of utilizing the graph structure formed by previous link predictions.

\begin{table}[!ht]
  \centering
\scalebox{1.0}{
\begin{tabular}{lccc}\\\toprule  
\multirow{2}[4]{*}{Methods} & Cora & Citeseer & Pubmed \\
\cmidrule{2-4} & Mean $\pm$ Std & Mean $\pm$ Std & Mean $\pm$ Std\\
\midrule
EE-Net   & 1990 $\pm$ 13.8 & 2299 $\pm$ 33.4  & 1659 $\pm$ 11.3 \\  
NeuGreedy  & 2826 $\pm$ 21.4 & 2543 $\pm$ 24.6  & 1693 $\pm$ 13.5 \\
NeuralUCB & 2713 $\pm$ 21.7 & 3101 $\pm$ 22.0  & 1672 $\pm$ 14.3 \\
NeuralTS  & 1998 $\pm$ 15.6 & 3419 $\pm$ 39.5  & 1647 $\pm$ 11.3 \\
\midrule
\textbf{PRB} & \textbf{1874 $\pm$ 25.6} & \textbf{2168 $\pm$ 35.7} & \textbf{1577 $\pm$ 10.7} \\
\bottomrule
\end{tabular}
}
\vspace{2mm}
\caption{Cumulative regret of bandit-based methods on \textbf{online} node classification.}
\label{tab:node classification res}
\end{table}
\begin{figure}[!ht]
    \centering
    \includegraphics[width= 0.33 \columnwidth]{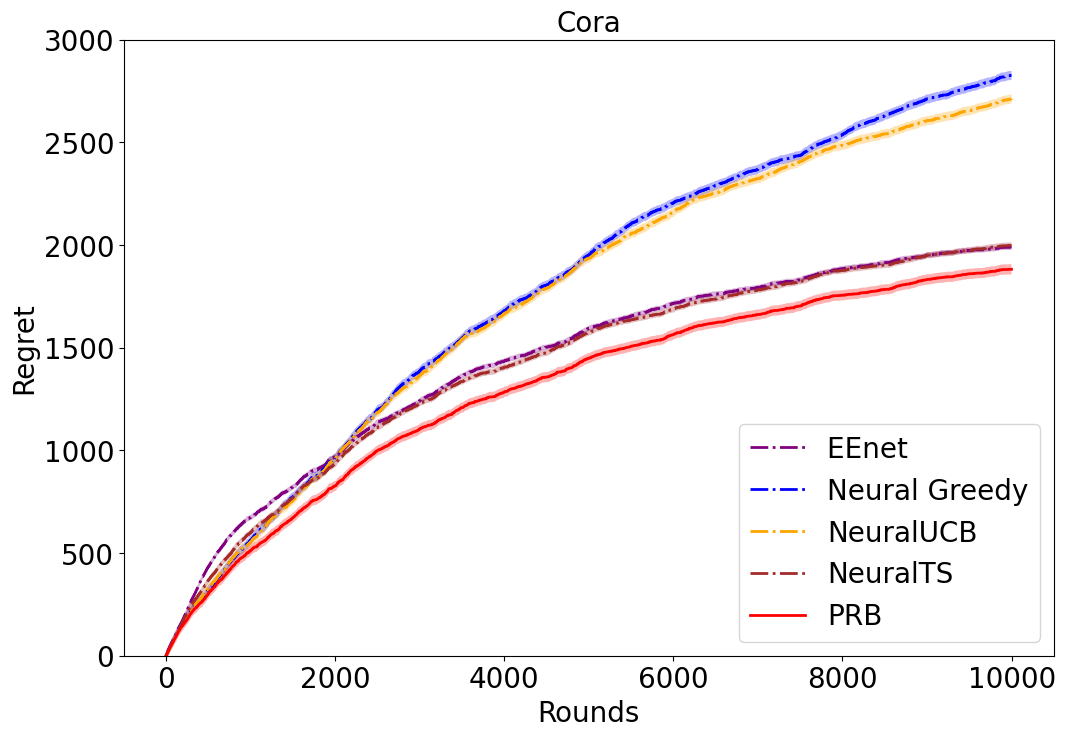}\hfill 
    \includegraphics[width= 0.33 \columnwidth]{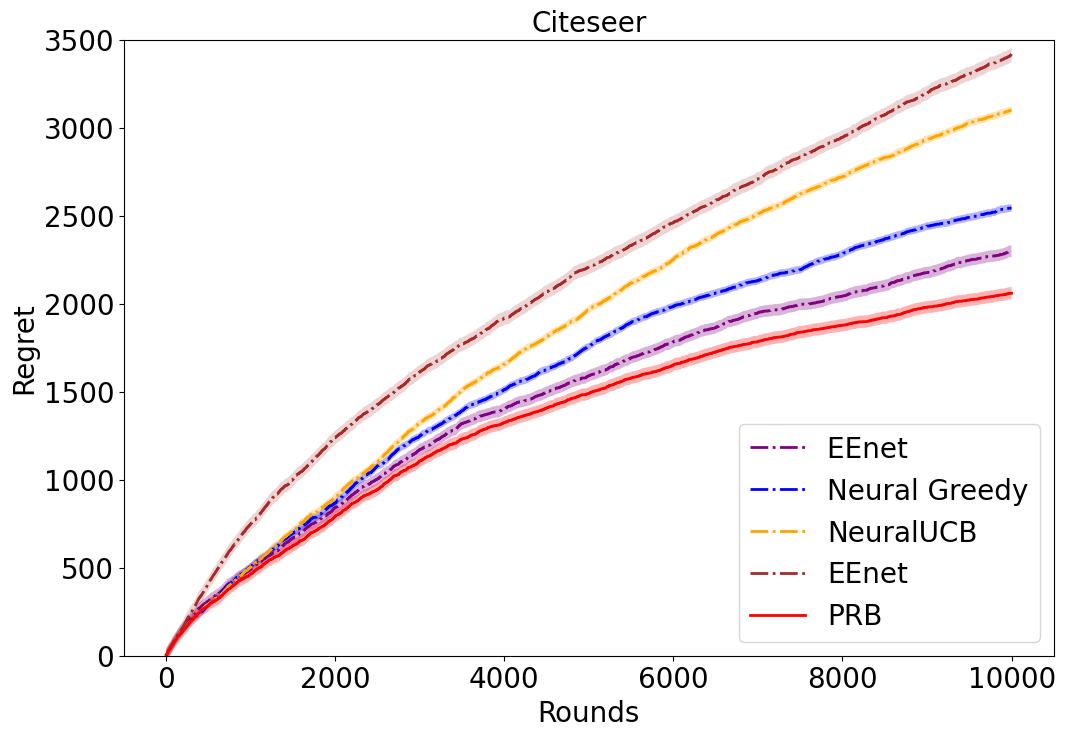}\hfill 
    \includegraphics[width= 0.33 \columnwidth]{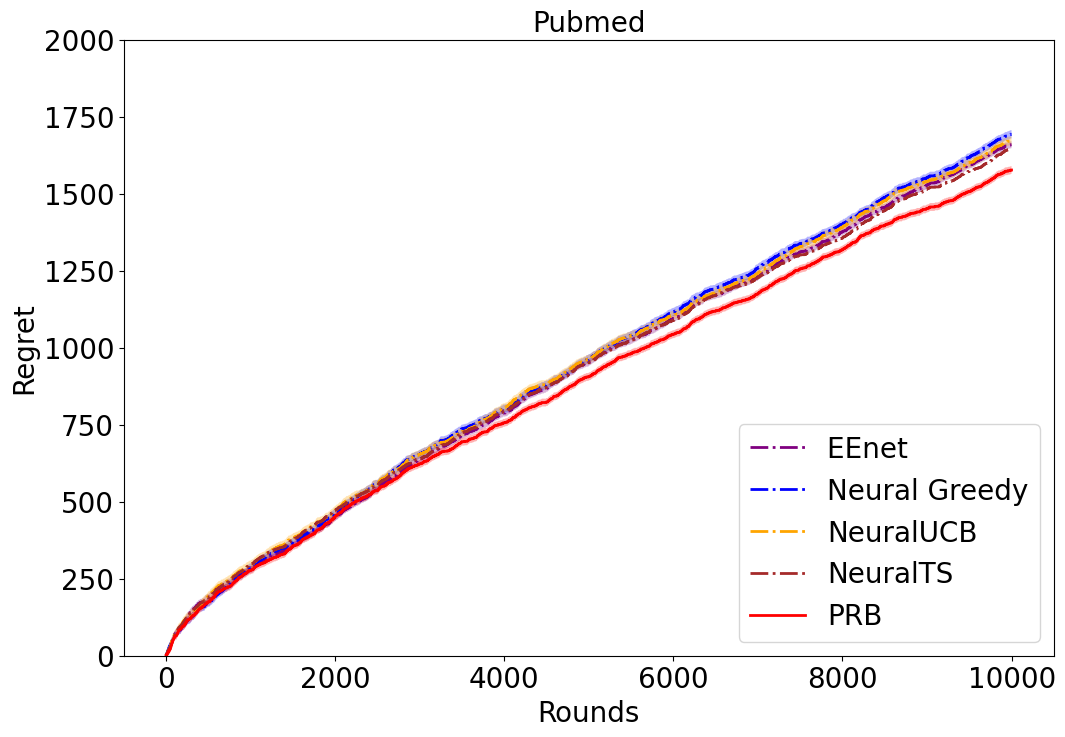}\hfill 
    \caption{Regret comparison of bandit-based methods on \textbf{online} node classification datasets (average of 10 runs with standard deviation in shadow, detailed in Table \ref{tab:node classification res}.}
    \label{fig:node_classification}
\end{figure}
\textbf{Online Node classification}. Figure \ref{fig:node_classification} and Table \ref{tab:node classification res} show the regret comparison on online node classification. \sysn consistently demonstrates the lowest cumulative regret by outperforming other bandit methods at round 10,000, respectively. Overall, \sysn decreases regrets by 3.0\%, 1.2\%, and 3.5\% compared to one of the best baselines, NeuralTS. This experiment demonstrates that \sysn is versatile enough for applications beyond online link prediction, extending to other real-world tasks such as online node classification. This highlights \sysn's advantage of fusing contextual bandits with PageRank for collaborative exploitation and exploration.



\subsection{Offline Link Prediction}

In this subsection, we evaluate \sysn in the setting of offline link prediction compared with graph-based baselines, where training and testing datasets are provided, following the same evaluation process of \cite{chamberlain2022graph, wang2023neural}. Here, we train \sysn on the training dataset using the same sequential optimization method Sec. \ref{sec: online_link_prediction}. Then, we run the trained \sysn on the testing dataset. Notice that \sysn never sees the test data in the training process as other baselines.

\textbf{Datasets}. In this study, we use real-world link-prediction datasets to compare \sysn with graph-based baselines. Specifically, we apply Cora, Citeseer, and Pubmed from Planetoid
citation networks \citep{yang2016revisiting}; ogbl-collab, ogbl-ppa, and ogbl-ddi from Open Graph Benchmark \citep{hu2020open}. (See dataset statistics in Appendix \ref{apendix:dataset_statistics}.)

\textbf{Setting:} We strictly follow the experimental setup in \citep{chamberlain2022graph} and use the Hits@k metric for evaluation. Please also refer to \ref{appendix: experimental_setups} for additional setups.

\textbf{Baselines}. For graph-based methods, we choose traditional link-prediction heuristics including CN~\citep{barabasi1999emergence}, RA~\citep{zhou2009predicting}, AA~\citep{adamic2003friends} and common GNNs including GCN~\citep{kipf2016semi} and SAGE~\citep{hamilton2017inductive}. Then, we employ SF-then-MPNN models, including SEAL \citep{zhang2018link} and NBFNet \citep{zhu2021neural}, as well as SF-and-MPNN models like Neo-GNN \citep{yun2021neo} and BUDDY\citep{chamberlain2022graph}. Additionally, we also select the MPNN-then-SF model NCN~\citep{wang2023neural} and NCNC~\citep{wang2023neural}. The results of the baselines are sourced from Table 2 of \cite{wang2023neural}.

\begin{table*}[!t]
  \centering
  
    \scalebox{0.78}{
    \begin{tabular}{lcccccc}
    \toprule
    \multirow{2}[7]{*}{Methods} &Cora  & Citeseer &  Pubmed & Collab & PPA & DDI\\
    \cmidrule{2-7} & HR@100  $\pm$  Std & HR@100  $\pm$  Std & HR@100  $\pm$  Std & HR@50  $\pm$  Std & HR@100  $\pm$  Std & HR@20  $\pm$  Std\\
     \midrule   
     CN     & 33.92 $\pm$ 0.46 & 29.79 $\pm$ 0.90 & 23.13 $\pm$ 0.15 & 56.44 $\pm$ 0.00 & 27.65 $\pm$ 0.00 & 17.73 $\pm$ 0.00 \\
     AA     & 39.85 $\pm$ 1.34 & 35.19 $\pm$ 1.33 & 27.38 $\pm$ 0.11 & 64.35 $\pm$ 0.00 & 32.45 $\pm$ 0.00 & 18.61 $\pm$ 0.00 \\
     RA     & 41.07 $\pm$ 0.48 & 33.56 $\pm$ 0.17 & 27.03 $\pm$ 0.35 & 64.00 $\pm$ 0.00 & 49.33 $\pm$ 0.00 & 27.60 $\pm$ 0.00 \\
     \midrule
     GCN    & 66.79 $\pm$ 1.65 & 67.08 $\pm$ 2.94 & 53.02 $\pm$ 1.39 & 44.75 $\pm$ 1.07 & 18.67 $\pm$ 1.32 & 37.07 $\pm$ 5.07 \\
     SAGE   & 55.02 $\pm$ 4.03 & 57.01 $\pm$ 3.74 & 39.66 $\pm$ 0.72 & 48.10 $\pm$ 0.81 & 16.55 $\pm$ 2.40 & 53.90 $\pm$ 4.74 \\
     \midrule 
     SEAL & 81.71 $\pm$ 1.30 & 83.89 $\pm$ 2.15 & 75.54 $\pm$ 1.32 & 64.74 $\pm$ 0.43 & 48.80 $\pm$ 3.16 & 30.56 $\pm$ 3.86\\
     NBFnet & 71.65 $\pm$ 2.27 & 74.07 $\pm$ 1.75 & 58.73 $\pm$ 1.99 & OOM             & OOM                  & 4.00 $\pm$ 0.58  \\
     \midrule
     Neo-GNN& 80.42 $\pm$ 1.31 & 84.67 $\pm$ 2.16 & 73.93 $\pm$ 1.19 & 57.52 $\pm$ 0.37 & 49.13 $\pm$ 0.60 & 63.57 $\pm$ 3.52 \\
     BUDDY  & 88.00 $\pm$ 0.44 & 92.93 $\pm$ 0.27 & 74.10 $\pm$ 0.78 & 65.94 $\pm$ 0.58 & 49.85 $\pm$ 0.20 & 78.51 $\pm$ 1.36 \\
     \midrule
     NCN    & 89.05 $\pm$ 0.96 & 91.56 $\pm$ 1.43 & 79.05 $\pm$ 1.16 & 64.76 $\pm$ 0.87 & 61.19 $\pm$ 0.85 & 82.32 $\pm$ 6.10 \\
     NCNC & 89.65 $\pm$ 1.36 & 93.47 $\pm$ 0.95 & 81.29 $\pm$ 0.95 & 66.61 $\pm$ 0.71 & 61.42 $\pm$ 0.73 & 84.11 $\pm$ 3.67 \\
     \midrule
     \textbf{PRB} & \textbf{92.33 $\pm$ 0.57} & \textbf{95.13 $\pm$ 1.28} & \textbf{84.54 $\pm$ 0.86} & \textbf{67.29 $\pm$ 0.31} & \textbf{63.47 $\pm$ 1.75} & \textbf{88.31 $\pm$ 4.36}  

     \\
    \bottomrule
    \end{tabular}%
 }
 \caption{Results on \textbf{offline} link prediction benchmarks. OOM means out of GPU memory.}
  \label{tab:graph-based res}%
\end{table*}

\textbf{Comparison with Graph-based Baselines.}
We present the experimental results in Table \ref{tab:graph-based res} for all methods. The results demonstrate that \sysn consistently outperforms other baselines across all six datasets. Specifically, compared to the most recent method, NCNC, \sysn achieves a minimum improvement of 0.68\% on the Collab dataset, a maximum of 4.2\%, and an average of 2.42\% across all datasets. 
Given that all baselines lack the perspective of exploration,
the results demonstrate that fusing the exploitation and exploration in contextual bandits along with learning graph connectivity through PageRank does significantly enhance accuracy for link prediction.

\subsection{Ablation and Sensitivity Studies}
Table \ref{tab: ablation_study_PRB_compare} presents the performance of different variants of \sysn, including \sysn-greedy that only use the exploitation network and \sysn-(10\%-G) that has the warm start with addition 10\% edges in $G_0$. The results show that exploration is crucial to the final performance and the additional graph knowledge can boost the performance.

Due to the space limit, we move all other experiment sections to Appendix \ref{appendix: addititional_experiments}, including computational cost analysis on \sysn and additional ablation \& sensitivity studies.

\begin{table*}[!t]
\vspace{-1em}
\scalebox{.75}{
\begin{tabular}{lccccccc}\\\toprule  
\multirow{2}[8]{*}{Methods} & MovieLens & AmazonFashion& Facebook & GrQc & Cora & Citeseer & Pubmed \\
\cmidrule{2-8} & Mean $\pm$ Std & Mean $\pm$ Std & Mean $\pm$ Std & Mean $\pm$ Std & Mean $\pm$ Std & Mean $\pm$ Std & Mean $\pm$ Std\\
\midrule      
PRB & 1555 $\pm$ 21.7 & 1455 $\pm$ 18.4 & 1929 $\pm$ 17.0 & 3236 $\pm$ 18.5 & 1874 $\pm$ 25.6 & 2168 $\pm$ 35.7 & \textbf{1577 $\pm$ 10.7}\\
PRB-Greedy  & 1892 $\pm$ 15.1 & 1567 $\pm$ 24.6 & 1994 $\pm$ 23.6 & 3332 $\pm$ 15.9 & 1932 $\pm$ 24.1 & 2194 $\pm$ 23.3  & 1634 $\pm$ 12.3 \\
PRB-(10\%-G) & \textbf{1521 $\pm$ 17.6}  & \textbf{1408 $\pm$ 23.5} & \textbf{1858 $\pm$ 15.7} & \textbf{3085 $\pm$ 14.3} &\textbf{1804 $\pm$ 23.5} & \textbf{2158 $\pm$ 33.1} & 1630 $\pm$ 11.5\\
\bottomrule
\end{tabular}
}
\vspace{2mm}
\caption{Cumulative regrets of PRB variants for online link prediction and node classification.}
\label{tab: ablation_study_PRB_compare}
\end{table*}

\section{Conclusion}
This paper introduces a fusion algorithm for link prediction, which integrates the power of contextual bandits in balancing exploitation and exploration with propagation on graph structure by PageRank. We further provide the theoretical performance analysis for \sysn, showing the regret of the proposed algorithm can grow sublinearly.
We conduct extensive experiments in link prediction to evaluate \sysn's effectiveness, compared with both bandit-based and graph-based baselines.

\ack 
This work is supported by the National Science Foundation under Award No. IIS-2117902 and DARPA (HR001121C0165). The views and conclusions are those of the authors and should not be interpreted as representing the official policies of the funding agencies or the government.

\bibliographystyle{plain}
\bibliography{main}

\newpage

\appendix

\onecolumn 

\section{Additional Experiments} \label{appendix: addititional_experiments}

\subsection{Experiment Setups}
\label{appendix: experimental_setups}
\textbf{Online Link Prediction Setups}.
For all bandit-based methods including PRB, for fair comparison,  the exploitation network $f_1$ is built by a 2-layer fully connected network with 100-width. For the exploration network of EE-Net and PRB, we use a 2-layer fully connected network with 100-width as well. For NeuralUCB and NeuralTS, following the setting of \citep{zhou2020neural, zhang2020neural}, we use the exploitation network $f_1$ and conduct the grid search for the exploration parameter $\nu$ over $\{0.001, 0.01, 0.1, 1\}$ and for the regularization parameter $\lambda$ over $\{0.01, 0.1, 1\}$. For the neural bandits NeuralUCB/TS, following their setting, as they have expensive computation costs to store and compute the whole gradient matrix, we use a diagonal matrix to make an approximation. For all grid-searched parameters, we choose the best of them for comparison and report the average results of 10 runs for all methods.
For all bandit-based methods, we use SGD as the optimizer for the exploitation network $f_1$. 
Additionally, for EE-Net and PRB, we use the Adam optimizer for the exploration network $f_2$. 
For all neural networks, we conduct the grid search for learning rate over $\{0.01, 0.001, 0.0005, 0.0001\}$.
For PRB, we strictly follow the settings in \citep{li2023everything} to implement the PageRank component. Specifically, we set the parameter $\alpha = 0.85$ after grid search over $\{0.1, 0.3, 0.5, 0.85, 0.9\}$, and the terminated accuracy $\epsilon = 10^{-6}$.
For each dataset, we first shuffle the data and then run each network for 10,000 rounds ($t = 10,000$). We train each network every 50 rounds when $t < 2000$ and every 100 rounds when $2000 < t < 10,000$.

\textbf{Offline Link Prediction Setups}. For the graph-based methods, we strictly follow the experimental and hyperparameters settings in \citep{wang2023neural,chamberlain2022graph} to reproduce the experimental results. 
Offline link prediction task requires graph links to play dual roles as both supervision labels and message passing links. For all datasets, the message-passing links at training time are equal to the supervision links, while at test and validation time,
disjoint sets of links are held out for supervision that are never seen during training. 
All hyperparameters are tuned using Weights and Biases random search, exploring the search space of hidden dimensions from 64 to 512, dropout from 0 to 1, layers from 1 to 3, weight decay from 0 to 0.001, and learning rates from 0.0001 to 0.01. Hyperparameters yielding the highest validation accuracy are selected, and results are reported on a single-use test set.
For PRB, we use setups similar to those in the online setting. We utilize the exploitation network $f_1$ and exploration network $f_2$ both with 500-width. We set the training epoch to 100 and evaluate the model performance on validation and test datasets. We utilize the Adam optimizer for all baseline models. For PRB implementation, We utilize the SGD optimizer for $f_1$ and the Adam optimizer for $f_2$.

\subsection{Computational Cost Analysis}

 We conduct all of our experiments on an Nvidia 3060 GPU with an x64-based processor.

\para{Time and space complexity}.
Let $n$ be the number of nodes, $t$ be the index of the current round of link prediction, $k$ be the number of target candidate nodes, $d$ be the number of context dimensions, and $p$ be neural network width. 
For the online setting, let $m_t$ be the number of edges at round $t$.  In the setting of online link prediction, the time complexity of PRB is $O(kdp + m_t k)$, where the first term is the cost of calculating the exploitation-exploration score for each candidate node and the second term is the cost of running PageRank, following~\citep{li2023everything}. And, the space complexity is $O(n + m_t)$ to store node weights and edges. 
For the offline setting, let $m$ be the number of edges in the testing dataset. Let $F$ be the number target links to predict.
Then, the inference time complexity of PRB for $F$ links is $O(ndp) + \tilde{O}(mF)$. The first term is the cost of calculating the exploitation-exploration score for each node.  The second term is the cost of PageRank \citep{li2023everything}.
The comparison with existing methods is listed in the following table:

In Figure \ref{fig:EE_RandomWalk}, we analyze the running time of the internal components of \sysn and \sysn-Greedy (Algorithm \ref{alg:greedy}). The comparison of the internal components reveals that the Random Walk phase accounts for 10\% (\sysn) and 6.3\% (\sysn-Greedy) on average of the total running time across seven datasets. Previous results also demonstrate that \sysn significantly outperforms EE-Net which solely relies on the Exploitation-Exploration framework, by dedicating a small additional portion of time to the Random Walk component.
\begin{figure}[!ht]
    \centering
    \includegraphics[width= 0.8 \columnwidth]{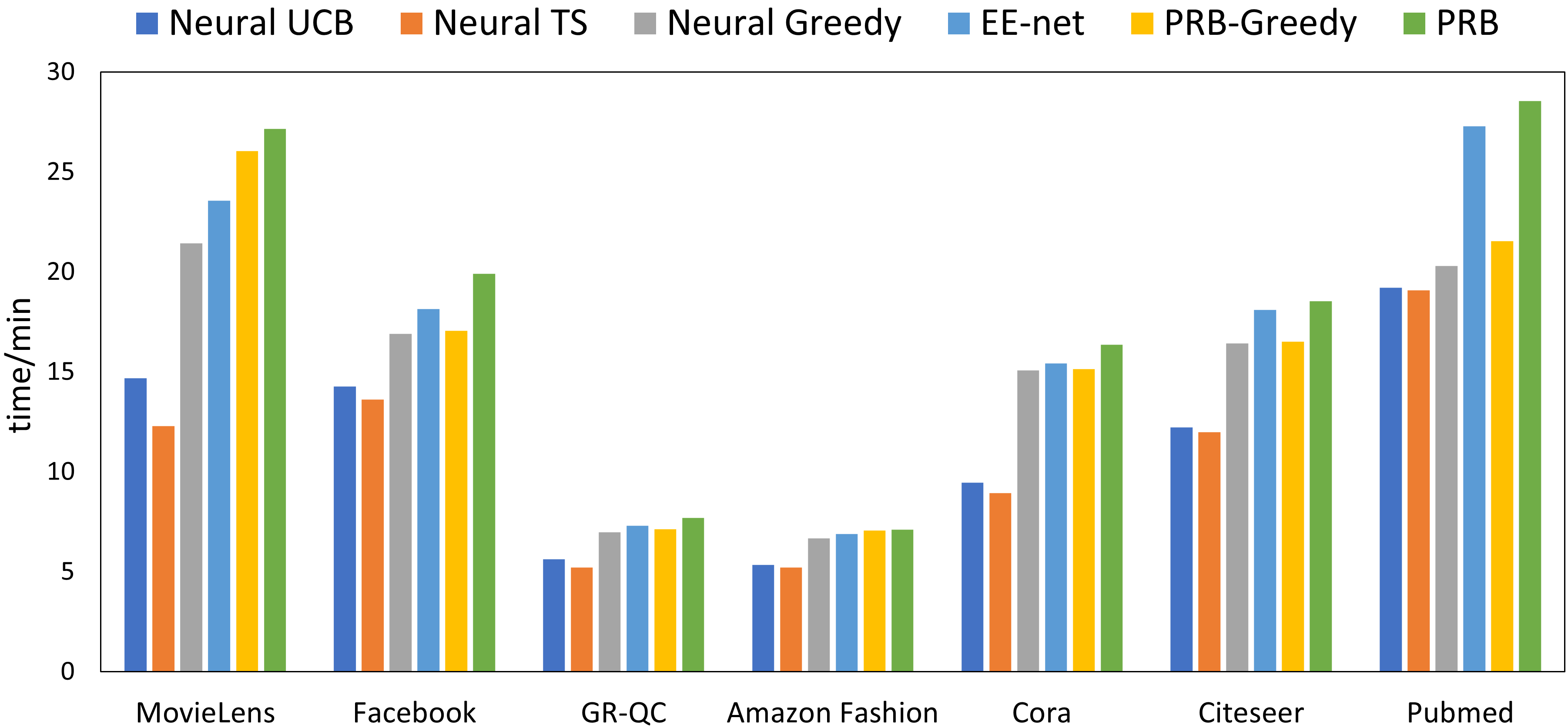}\hfill
    \caption{Running time comparison of \sysn and bandit-based baselines.}
    \label{fig:running_time}
\end{figure}

\begin{figure}[!ht]
    \centering
    \includegraphics[width=0.45 \columnwidth]{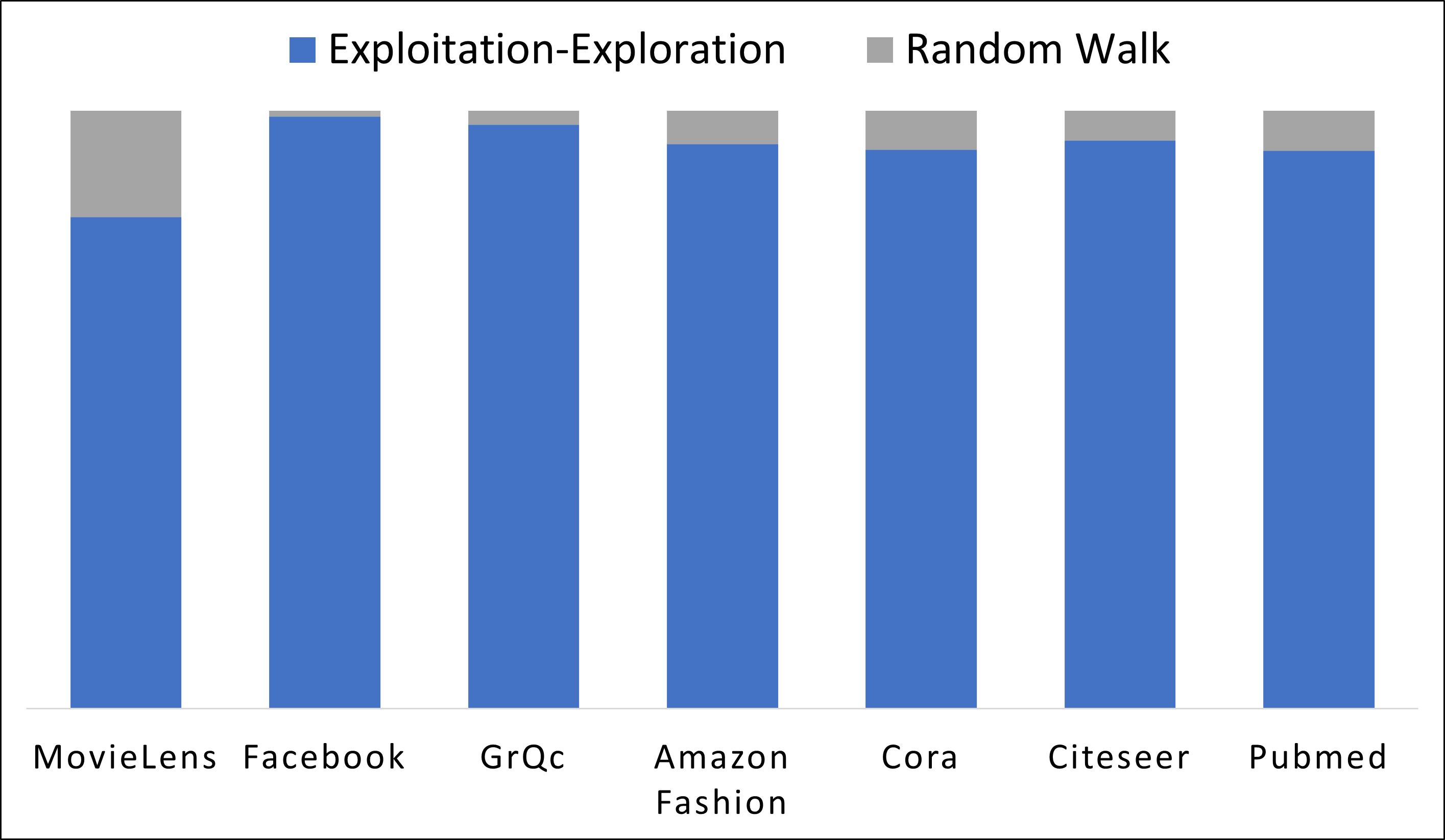} 
    \includegraphics[width= 0.45 \columnwidth]{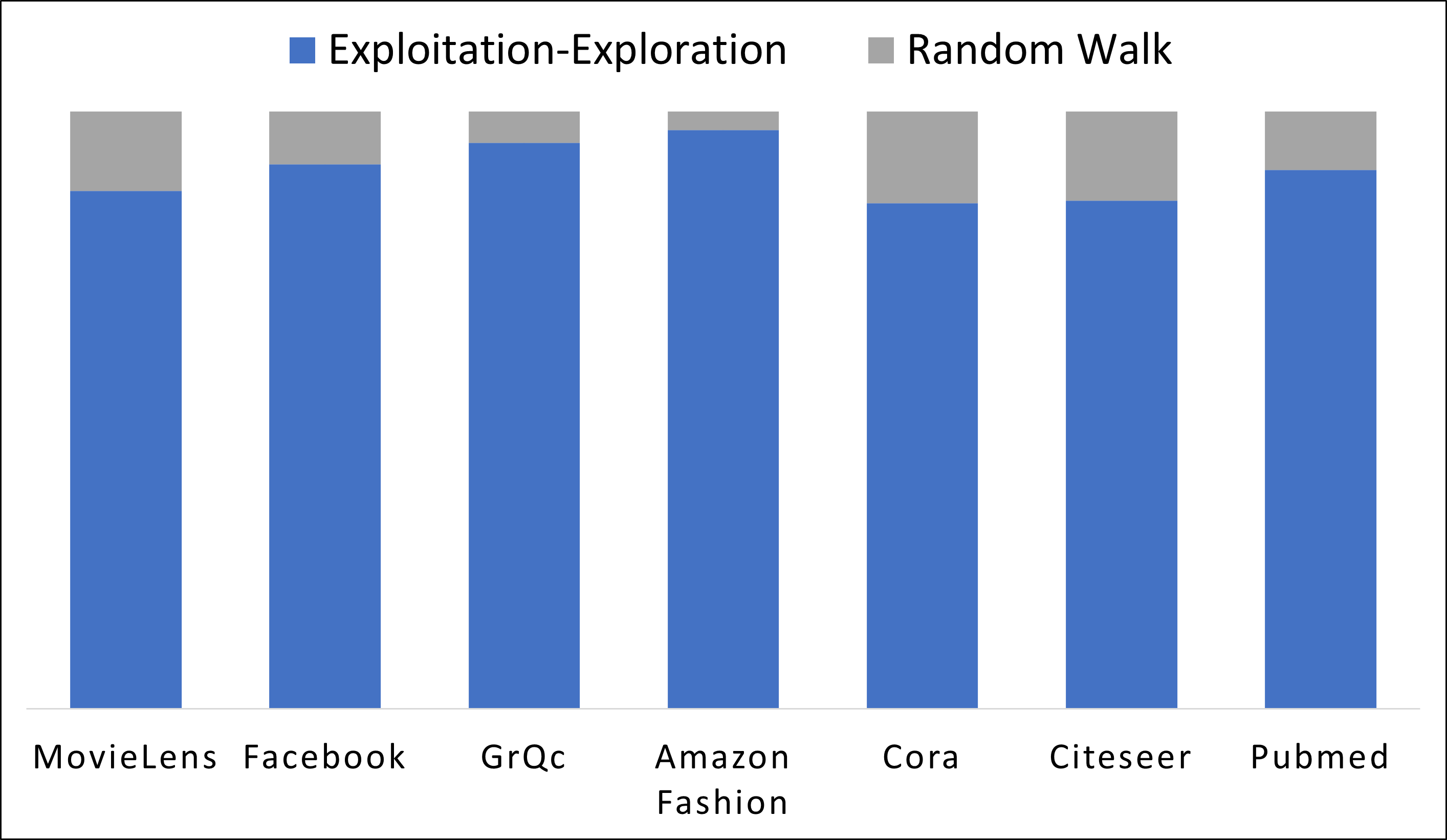}
    \caption{Proportion of running time for PRB-Greedy (left) and PRB (right) between exploitation-exploration and random walk.} 
    \label{fig:EE_RandomWalk}
\end{figure}
By recording the total training time of 10,000 rounds, we also compare \sysn with other bandit-based baselines in Figure \ref{fig:running_time}. Across all datasets, NeuralTS achieves the minimum average running time at 10.9 minutes, while \sysn has the maximum at 17.5 minutes. Additionally, given that the Random Walk component takes only a minimal portion of our algorithm's running time, the average running times are relatively close between \sysn-Greedy (15.4 minutes) and Neural Greedy (14.8 minutes), and between \sysn (17.5 minutes) and EE-net (16.7 minutes). The comparative analysis reveals that while \sysn incurs a relatively extended running time, it remains competitive with established baselines and demonstrates a significant enhancement in performance. This observation underscores the efficacy of \sysn and supports its potential utility in practical applications despite its temporal demands.

Table \ref{tab:inferbandit} reports the inference time (one round in seconds) of bandit-based methods on three datasets for online link prediction. Although PRB takes a slightly longer time, it remains in the same order of magnitude as the other baselines. We adopt the approximated methods from \citep{li2023everything} for the PageRank component to significantly reduce computation costs while ensuring good empirical performance.

Table \ref{tab:infergraph} reports the inference time (one epoch of testing in seconds) of graph-based methods on three datasets for offline link prediction. PRB is faster than SEAL and shows competitive inference time as compared to other baselines.

\begin{table}[!ht]
\begin{minipage}{.5\linewidth}
    \centering
 \begin{tabular}{lccc}
    \toprule
        Methods & MovieLens & GrQc & Amazon\\ 
        \midrule
        NeuralUCB & 0.11 & 0.01 & 0.02 \\ 
        Neural Greedy & 0.14 & 0.02 & 0.03 \\ 
        EE-Net & 0.17 & 0.03 & 0.04 \\ 
        \midrule 
        \textbf{PRB} & 0.20 & 0.03 & 0.04 \\
        \bottomrule
    \end{tabular} 
    \vspace{2mm}
    \caption{ \textbf{Inference Time} (s) of PRB and bandit-based methods for online setting}
    \label{tab:inferbandit}
\end{minipage}
\quad
\quad
\begin{minipage}{.45\linewidth}
    \centering
 \begin{tabular}{lccc}
    \toprule
        Methods & Cora & Pubmed & Collab \\ 
        \midrule
        SEAL & 6.31 & 22.74 & 68.36 \\ 
        Neo-GNN & 0.12 & 0.24 & 9.47 \\ 
        BUDDY & 0.27 & 0.33 & 2.75 \\ 
        NCNC & 0.04 & 0.07 & 1.58 \\ 
        \midrule
        \textbf{PRB} & 0.11 & 0.58 & 3.52 \\ 
        \bottomrule
    \end{tabular}
    \vspace{2mm}
\caption{\textbf{Inference Time} (s) of PRB and graph-based methods for offline setting}
    \label{tab:infergraph}
\end{minipage}
\end{table}

\subsection{Additional Ablation and Sensitivity Studies}

\begin{figure}[htp]
    \centering
    \includegraphics[width=0.40\columnwidth]{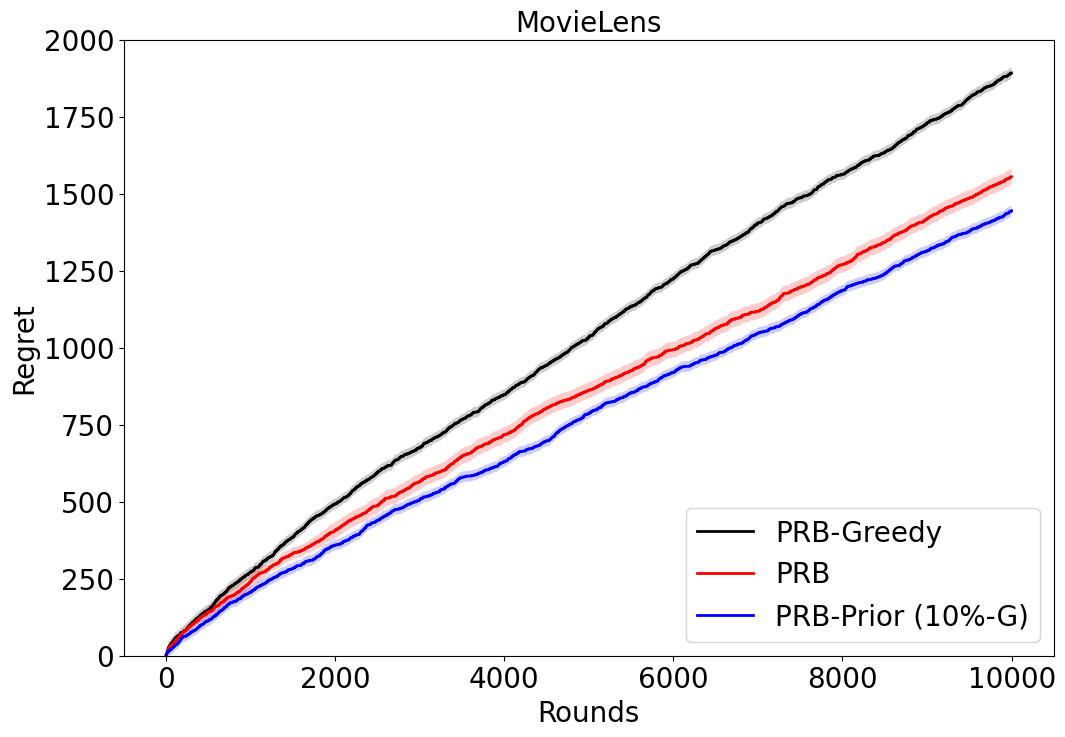} 
    \includegraphics[width=0.40\columnwidth]{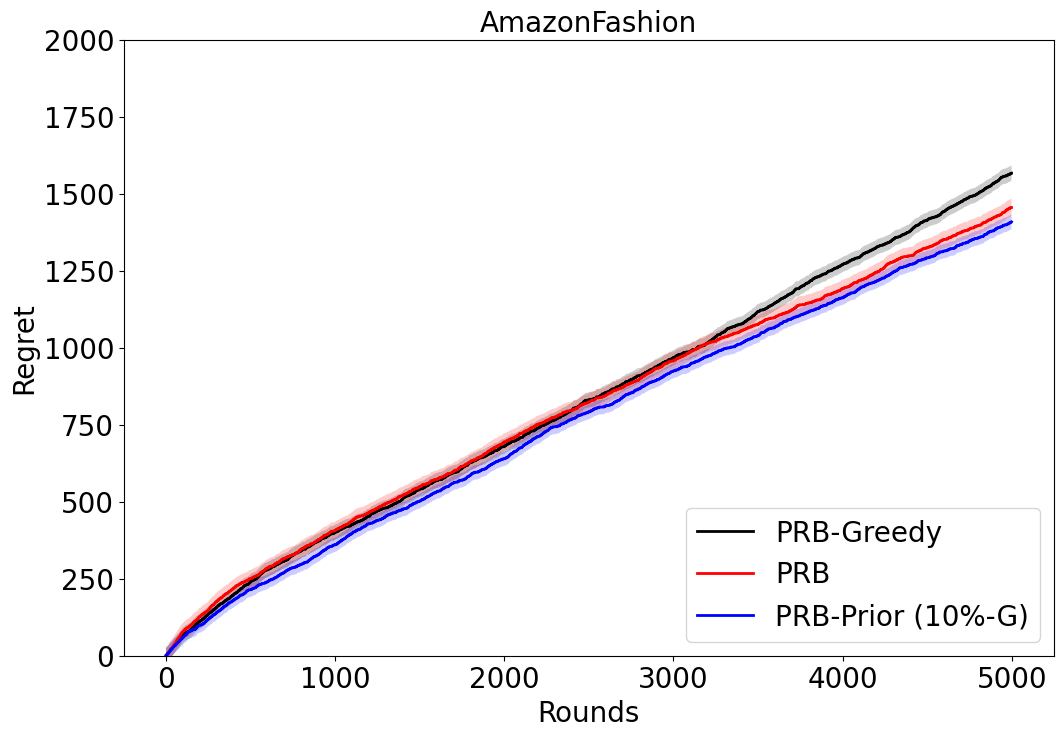} 
    \includegraphics[width=0.40\columnwidth]{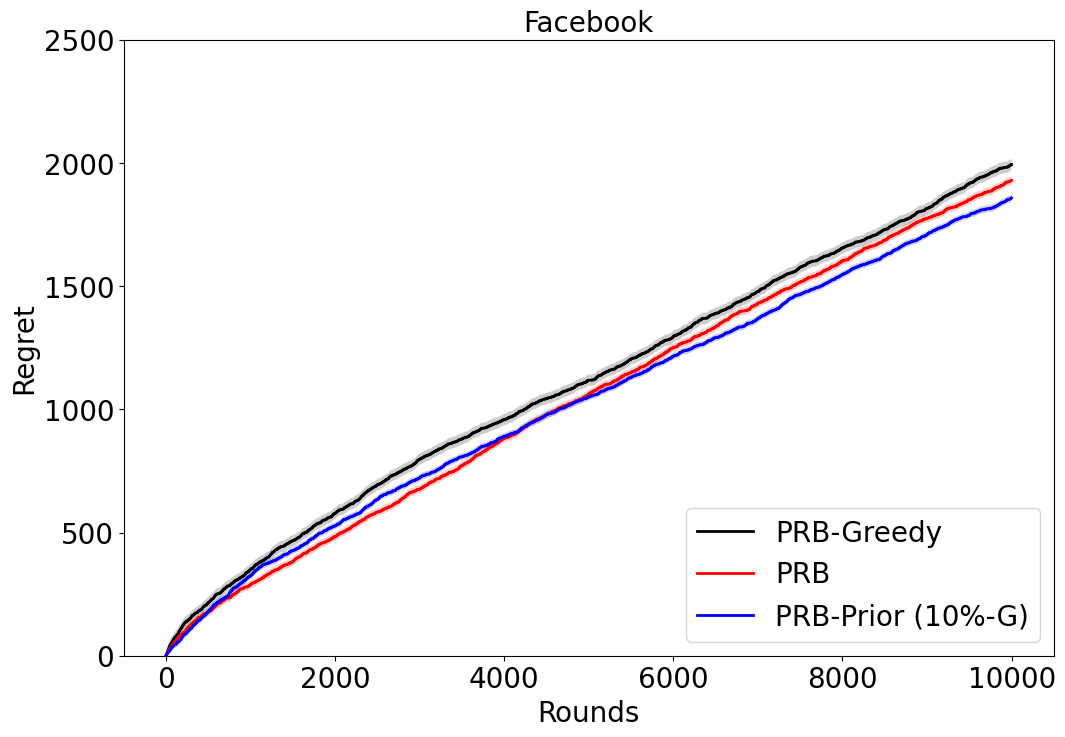} 
    \includegraphics[width=0.40\columnwidth]{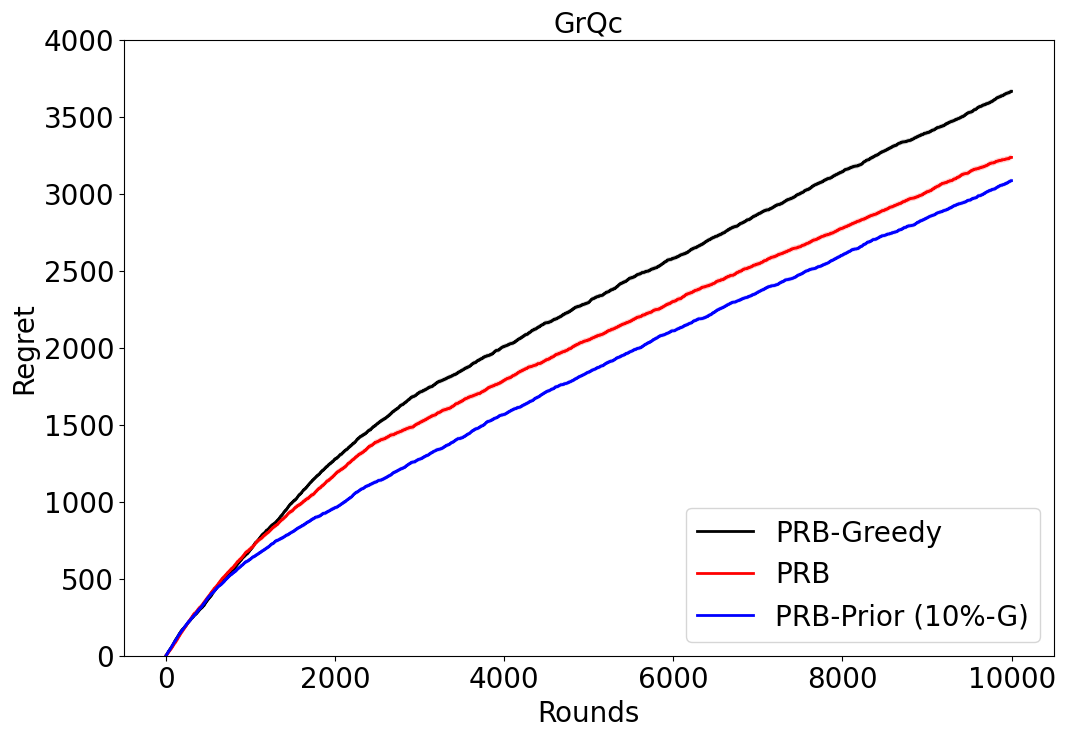}
    \includegraphics[width=0.40\columnwidth]{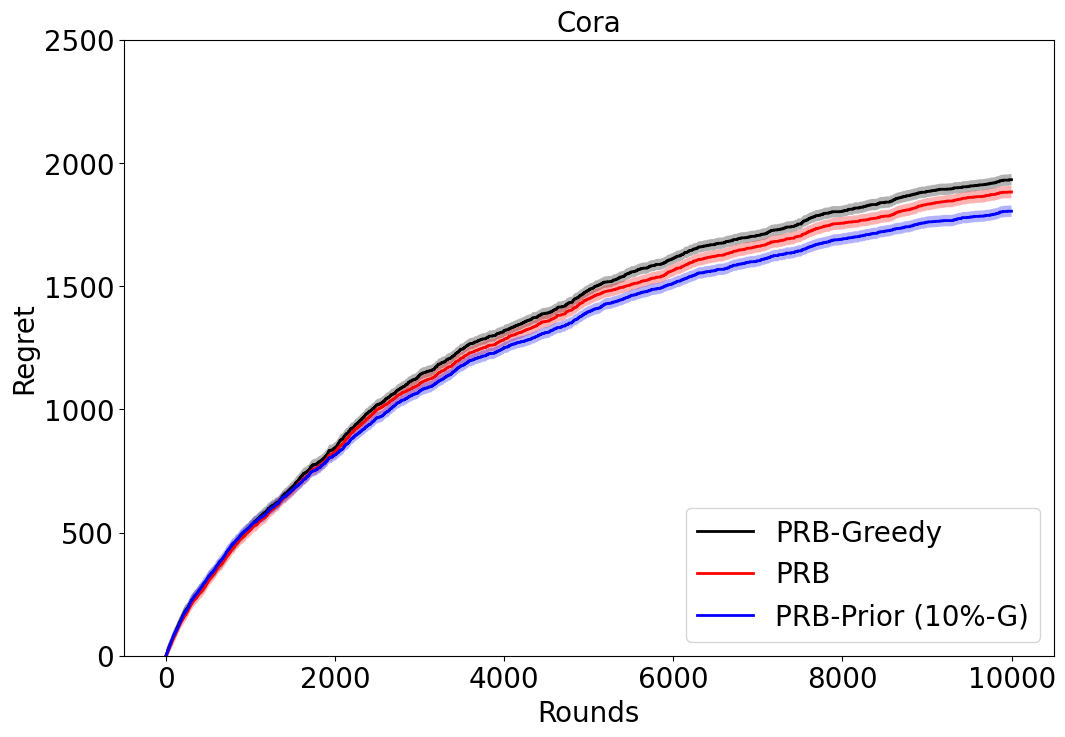} 
    \includegraphics[width=0.40\columnwidth]{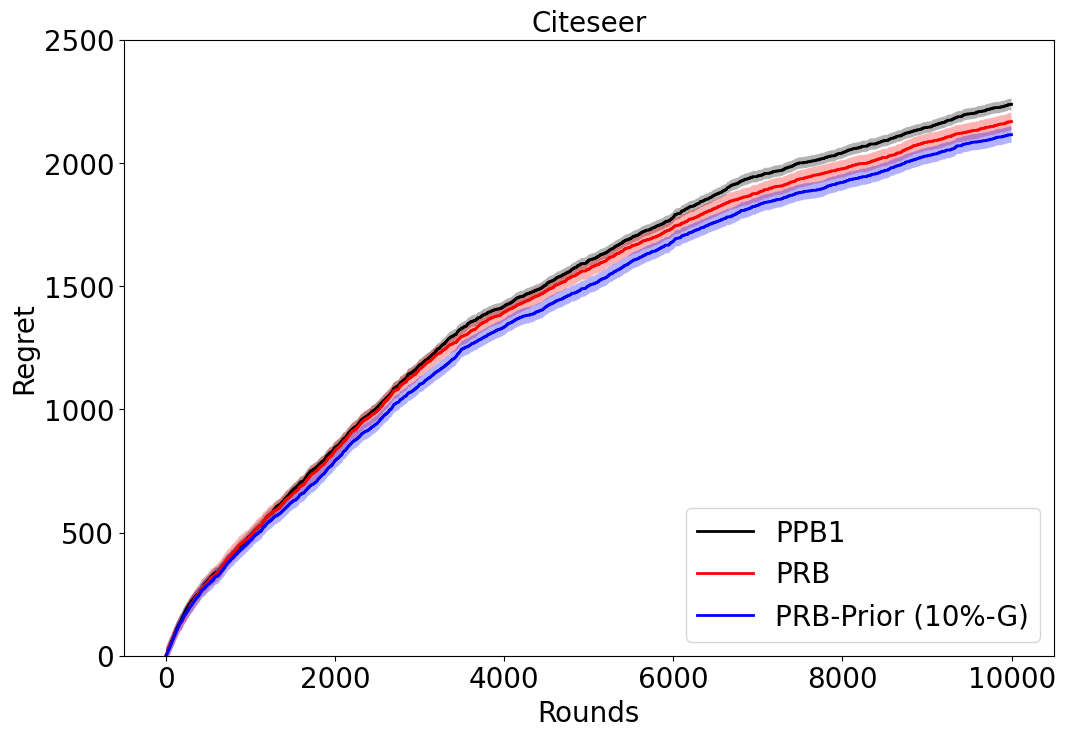} 
    \includegraphics[width=0.40\columnwidth]{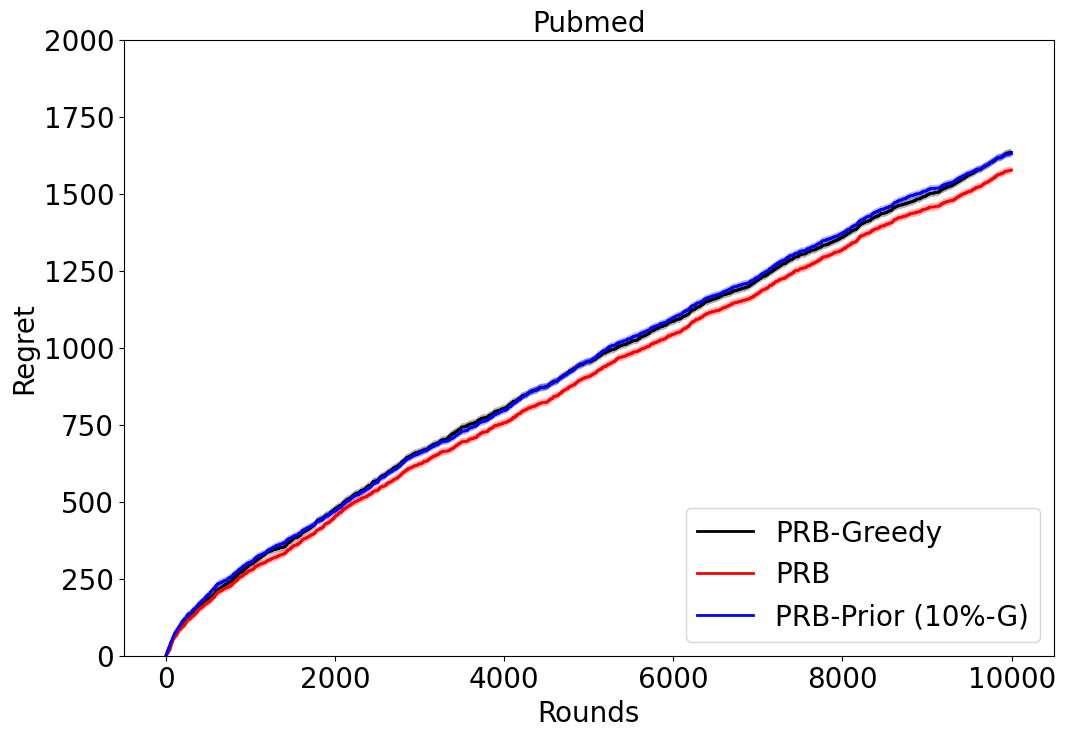}
    \caption{Regret Comparison of PRB-Greedy, PRB, and PRB-Prior (mean of 10 runs with standard deviation in shadow, detailed in Table \ref{tab: ablation_study_PRB_compare} and \ref{tab:bandit-based res}).}
    \label{fig:PRB_compare}
\end{figure}

\begin{figure}[htp]
    \centering
    \includegraphics[width=0.40\columnwidth]{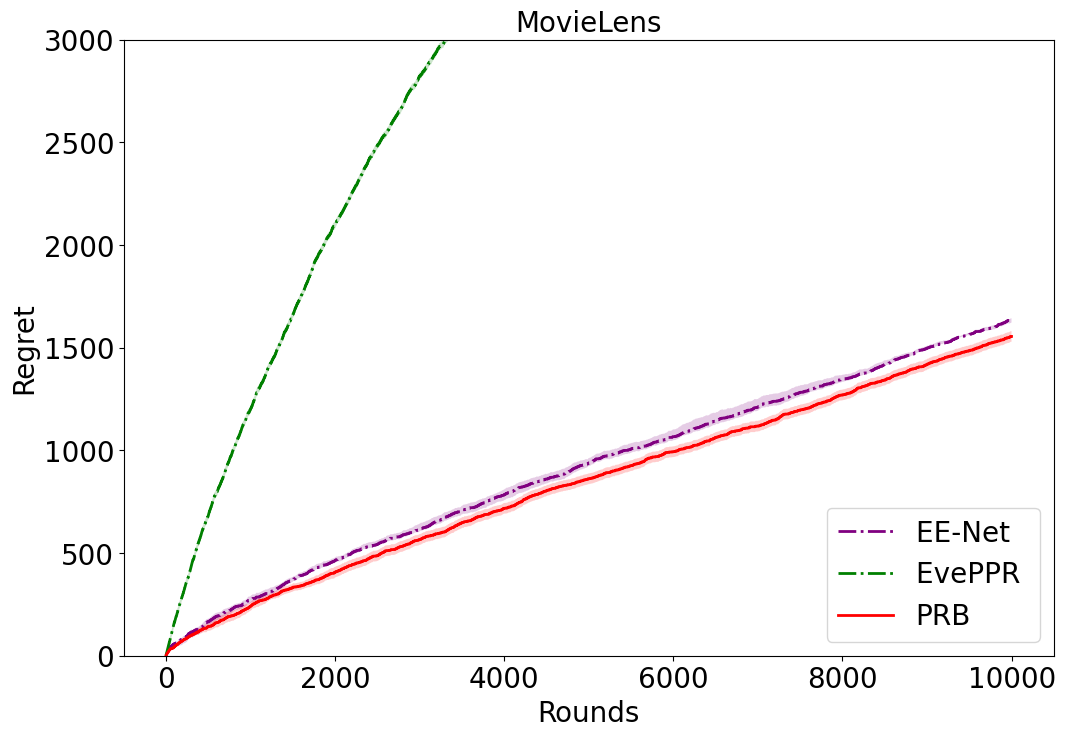}
    \includegraphics[width=0.40\columnwidth]{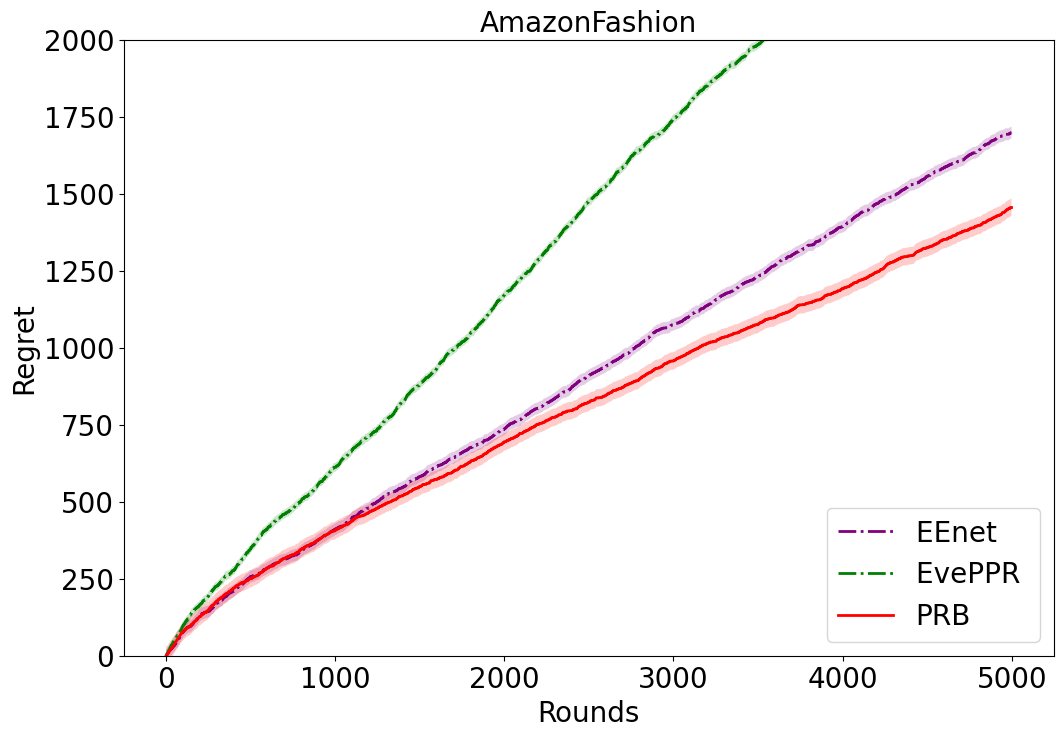}
    \includegraphics[width=0.40\columnwidth]{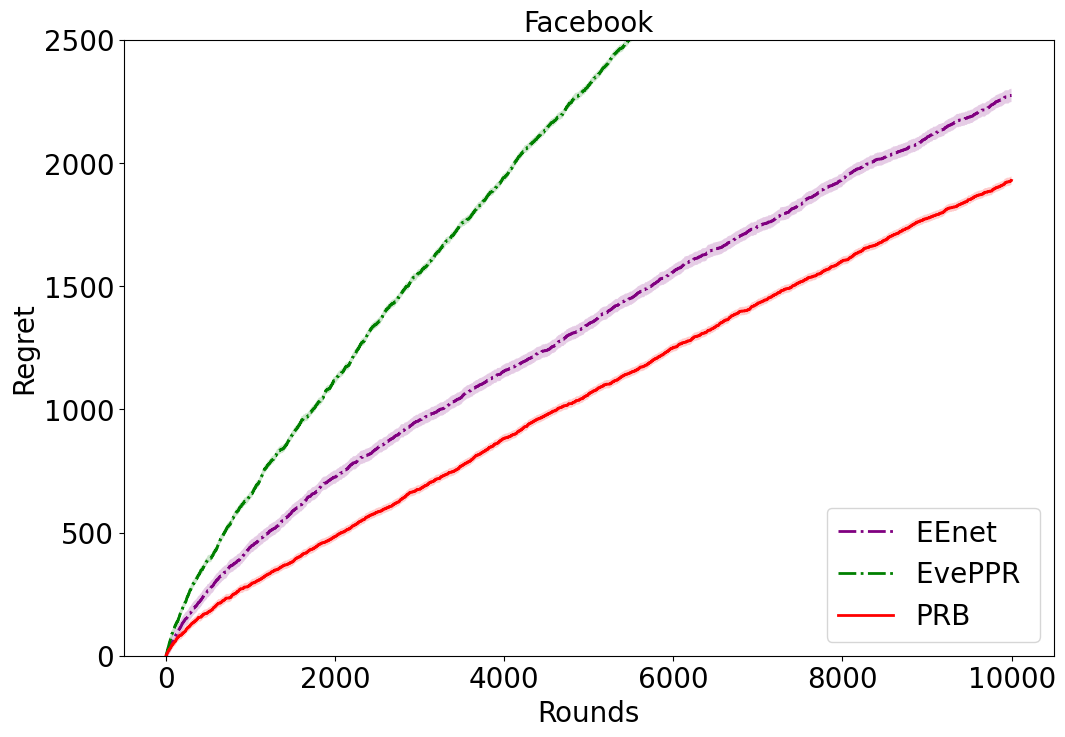}
    \includegraphics[width=0.40\columnwidth]{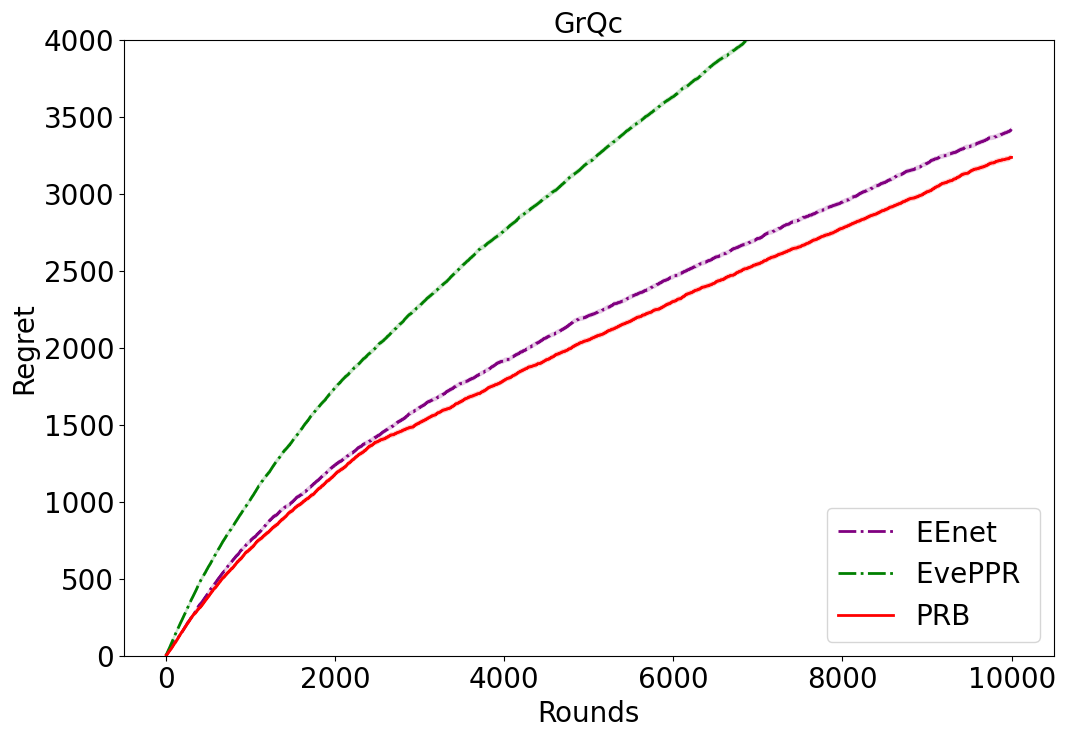}
    \includegraphics[width=0.40\columnwidth]{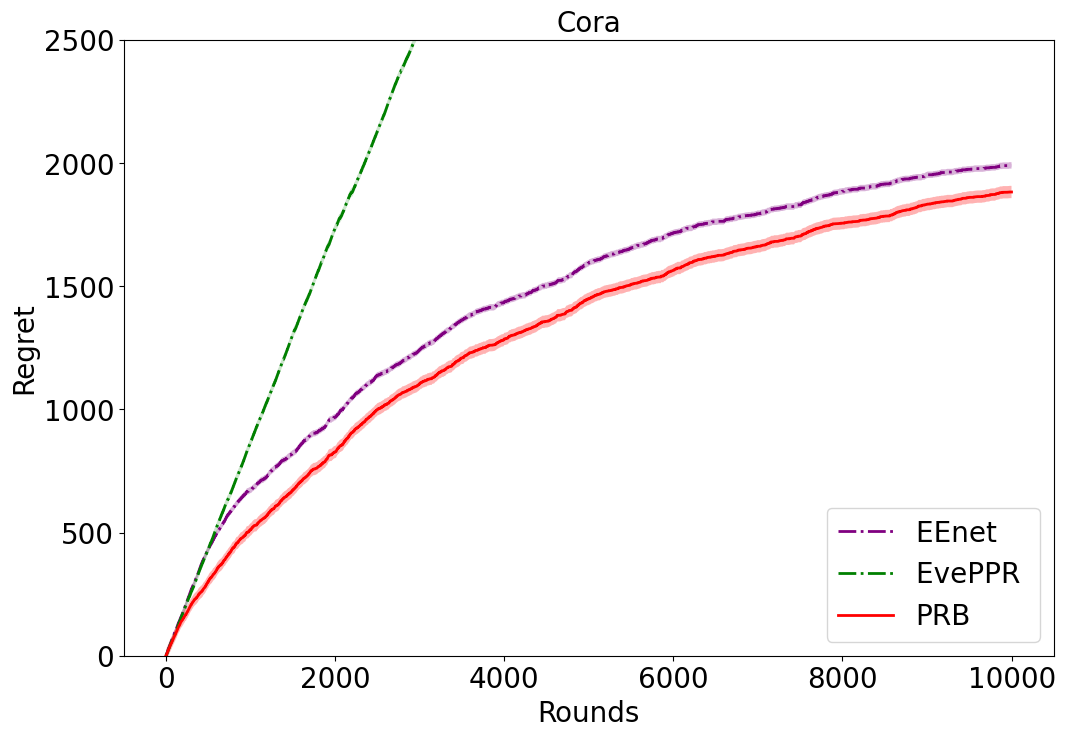}
    \includegraphics[width=0.40\columnwidth]{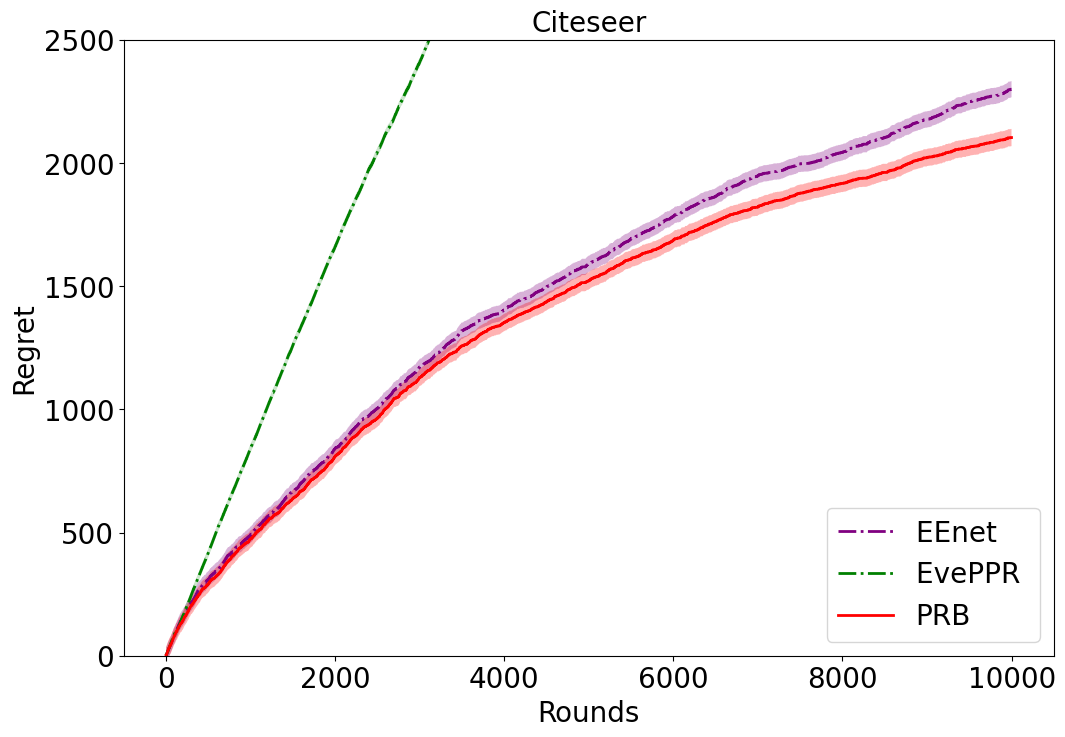} 
    \includegraphics[width=0.40\columnwidth]{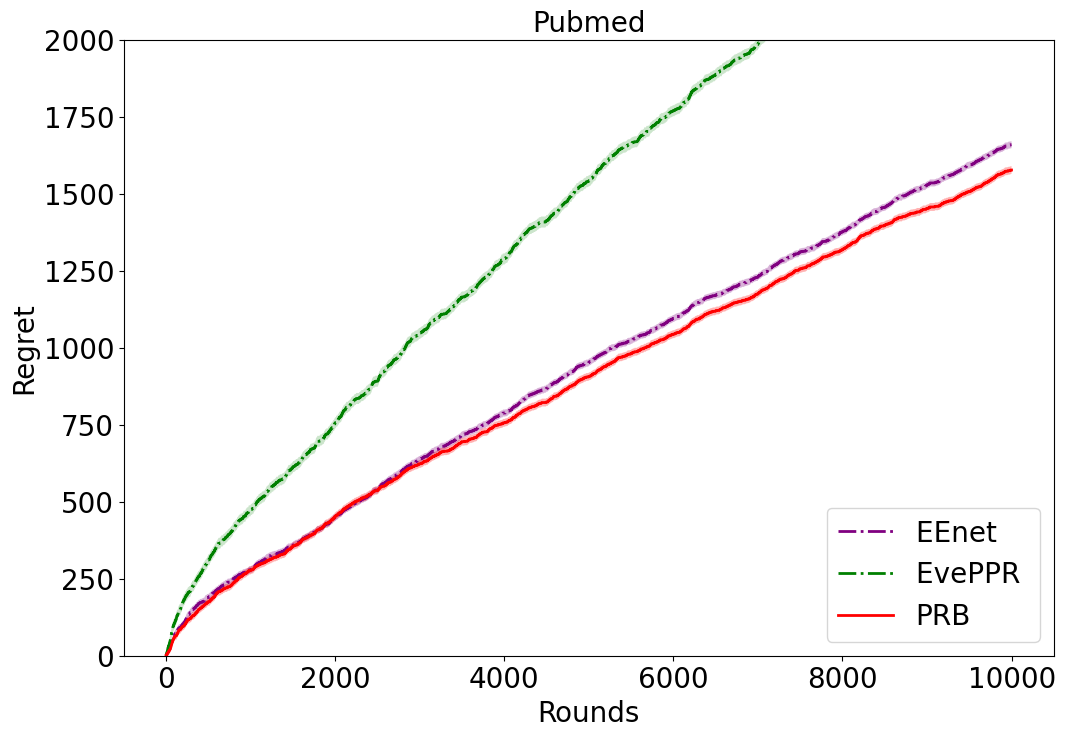}
    \caption{Regret Comparison of PRB, EEnet, and EvePPR (mean of 10 runs with standard deviation in shadow, detailed in Table \ref{tab:bandit-based res}).}
    \label{fig:PRB_EvePPR_EEnet}
\end{figure}

\textbf{PRB variants.} To extensively evaluate PRB in our experiments, we provide the following variants. PRB is the direct implementation of Algorithm \ref{alg:explore}. The initial graph $G_0$ only contains all nodes without any edges. PRB-Greedy is the greedy version of Algorithm \ref{alg:explore} by removing the exploration network, as specified in Algorithm \ref{alg:greedy}. PRB-Prior (10\%-G) is Algorithm \ref{alg:explore} with prior knowledge by revealing 10\% of training edges on the initial graph. We apply PRB-Prior in our experiments to demonstrate how extra prior knowledge about the graph improves \sysn's decision-making process.

Figure \ref{fig:PRB_compare} and Table \ref{tab: ablation_study_PRB_compare} highlights the regret comparison of three PRB variants: PRB, PRB-Greedy, and PRB-Prior. For both online link prediction and node classification, \sysn surpasses \sysn-Greedy by an average of 5.8\%, highlighting the robustness of the exploration network embedded within \sysn. Additionally, in online link prediction, the PRB-Prior (10\%-G) variant consistently outperforms its counterparts across a majority of datasets. This is particularly evident in the MovieLens and AmazonFashion datasets, where it achieves notably low cumulative regrets of 1521 and 1408. Same in online node classification, PRB-Prior (10\%-G) demonstrates exceptional performance on two out of three datasets, recording cumulative regrets of 1804 in Cora and 2158 in Citeseer. These results emphasize the benefits of incorporating prior knowledge within \sysn to enhance predictive accuracy.

\textbf{Effectiveness of Bandits and PageRank.}  In Figure \ref{fig:PRB_EvePPR_EEnet}, we compare the performance of \sysn with that of EvePPR \citep{li2023everything} and EE-Net \citep{ban2022eenet}, which represent methodologies based on PageRank and contextual bandits respectively. On one hand, \sysn significantly outperforms EvePPR by integrating the exploitation and exploration strategy, which enhances PageRank's decision-making capabilities. On the other hand, \sysn surpasses EE-net by leveraging a more comprehensive understanding of the input graph's structure and connectivity through enhanced PageRank. Overall, \sysn consistently achieves lower regrets compared to both EvePPR and EE-Net, demonstrating the effectiveness of combining the exploitation-exploration with PageRank.
\section{Limitations} \label{appendix: limitations}
In this paper, we propose the PRB algorithm that integrates the exploitation-exploration of contextual bandits with PageRank. We do not investigate other integration methods, such as combining such exploitation-exploration with other Random Walk algorithms or GNNs. We also evaluate \sysn on online link prediction and node classification. Several other real-world tasks, such as Subgraph Matching and Node Clustering, remain unexplored. Our future research will extend \sysn to these and additional related tasks \cite{zou2024promptintern,sui2023tap4llm} to assess its broader implications.  

\section{Graph Dataset Statistics} \label{apendix:dataset_statistics}
\begin{table}[h]

\centering
\begin{tabular}{lcccccc}
\toprule
                 & \textbf{Cora} & \textbf{Citeseer} & \textbf{Pubmed} & \textbf{Collab} & \textbf{PPA} & \textbf{DDI} \\ \midrule
\#Nodes          & 2,708         & 3,327             & 18,717          & 235,868         & 576,289      & 4,267      \\
\#Edges          & 5,278         & 4,676             & 44,327          & 1,285,465       & 30,326,273   & 1,334,889  \\
Splits           & random        & random            & random          & fixed           & fixed        & fixed     \\
Average Degree   & 3.9           & 2.74              & 4.5             & 5.45            & 52.62        & 312.84   \\ \bottomrule
\end{tabular}
\vspace{2mm}
\caption{Dataset Statistics}
\label{tab:datasets}
\end{table}

The statistics of each dataset are shown in Table \ref{tab:datasets}.
Random splits use 70\%,10\%, and 20\% edges for training, validation, and test set respectively.
\newpage
\section{Variant Algorithms}
\begin{algorithm}[!ht]
\renewcommand{\algorithmicrequire}{\textbf{Input:}}
\renewcommand{\algorithmicensure}{\textbf{Output:}}
\caption{ \sysn-N (Node Classification)}\label{alg:explore_node}
\begin{algorithmic}[1] 
\Require $f_1, f_2$,  $T$,  $G_0$,  $\eta_1, \eta_2$ (learning rate) 
\State Initialize $\theta^1_0, \theta^2_0$
\For{ $t = 1 , 2, \dots, T$}
\State Observe serving node $v_t$, candidate nodes $\calv_t = \{\tilde{v}_1,  \tilde{v}_2, \dots, \tilde{v}_k \}$, contexts $\calx_t$  and Graph $G_{t-1}$
\For{each $v_{t,i} \in \calv_t$ }
\State 
$\bh_t[i] = f_1\left(x_{t,i}; \theta^1_{t-1}\right) + f_2\left(\phi\left(x_{t,i}\right); \theta^2_{t-1}\right)$
\EndFor
\State Compute $\rmP_t$ based on $G_{t-1}$
\State Solve $\bv_t = \alpha \rmP_t \bv_t + \left(1 - \alpha\right) \bh_t$ 
\State Select $\hi = \arg \max_{v_{t,i} \in \calv_t} \bv_t\left[i\right] $
\State Observe $r_{t, \hi}$
\If {$r_{t, \hi}$ == 1}
\State Add $[v_t, v_{t, \hi}]$ to $G_{t-1}$ and set as $G_t$
\Else
\State $G_t = G_{t-1}$
\EndIf
\State   $\btheta_t^1 = \btheta_{t-1}^1 -  \eta_1  \nabla_{\btheta_{t-1}^1}  \call \left( x_{t,\hi}, r_{t, \hi};   \btheta_{t-1}^1 \right) $ 
\State   $\btheta_t^2 = \btheta_{t-1}^2 -  \eta_2  \nabla_{\btheta_{t-1}^2}  \call \left( \phi(x_{t,\hi}), r_{t, \hi} - f_1(x_{t,\hi}; \theta^1_{t-1});   \btheta_{t-1}^2\right) $
\EndFor
\end{algorithmic}
\end{algorithm}

\begin{algorithm}[!ht]
\renewcommand{\algorithmicrequire}{\textbf{Input:}}
\renewcommand{\algorithmicensure}{\textbf{Output:}}
\caption{ PRB-Greedy }\label{alg:greedy}
\begin{algorithmic}[1] 
\Require $f_1, f_2$,  $T$,  $G_0$,  $\eta_1, \eta_2$ (learning rate) 
\State Initialize $\theta^1_0, \theta^2_0$
\For{ $t = 1 , 2, \dots, T$}
\State Observe serving node $v_t$, candidate nodes $\calv_t$, contexts $\calx_t$  and Graph $G_{t-1}$
\For{each $v_{t,i} \in \calv_t$ }
\State 
$\bh_t[i] = f_1\left(x_{t,i}; \theta^1_{t-1}\right)$
\EndFor
\State Compute $\rmP_t$ based on $G_{t-1}$
\State Solve $\bv_t = \alpha \rmP_t \bv_t + \left(1 - \alpha\right) \bh_t$ 
\State Select $\hi = \arg \max_{v_{t,i} \in \calv_t} \bv_t\left[i\right] $
\State Observe $r_{t, \hi}$
\If {$r_{t, \hi}$ == 1}
\State Add $[v_t, v_{t, \hi}]$ to $G_{t-1}$ and set as $G_t$
\Else
\State $G_t = G_{t-1}$
\EndIf
\State   $\btheta_t^1 = \btheta_{t-1}^1 -  \eta_1  \nabla_{\btheta_{t-1}^1}  \call \left( x_{t,\hi}, r_{t, \hi};   \btheta_{t-1}^1 \right) $ 
\EndFor
\end{algorithmic}
\end{algorithm}

\newpage

\section{Proof of Theorem \ref{theo:main}}
\label{sec:proof}

\subsection{Preliminaries}

Following neural function definitions and Lemmas of \cite{ban2023neural}, given an instance $\bx$, we define the outputs of hidden layers of the neural network (Eq. (\ref{eq:structure})):
\[
\bh_0 = \bx,  \bh_l = \sigma(\bw_l \bh_{l-1}), l \in [L-1].
\]
Then, we define the binary diagonal matrix functioning as ReLU:
\[
\bD_l = \text{diag}( \mathbbm{1}\{(\bw_l \bh_{l-1})_1 \}, \dots, \mathbbm{1}\{(\bw_l \bh_{l-1})_m \} ), l \in [L-1].
\]

Accordingly, the neural network (Eq. (\ref{eq:structure})) is represented by 
\begin{equation}
f(\bx; \theta) = \bw_L (\prod_{l=1}^{L-1} \bD_l \bw_l) \bx,
\end{equation}
and
\begin{equation}
\nabla_{\bw_l}f  = 
\begin{cases}
[\bh_{l-1}\bw_L (\prod_{\tau=l+1}^{L-1} \bD_\tau \bw_\tau)]^\top, l \in [L-1] \\
\bh_{L-1}^\top,   l = L .
\end{cases}
\end{equation}

Here, given a constant $R > 0$, we define the following function class:
\begin{equation}
    B(\theta_0, R) = \{ \theta \in \bbr^p:  \| \theta - \theta_0\|_2 \leq R/m^{1/4}\}.
\end{equation}

Let $\call_t$ represent the squared loss function in round $t$. We use ${x_1, x_2, \dots, x_{Tk}}$ represent all the context vectors presented in $T$ rounds.
Then, we define the following instance-dependent complexity term:
\begin{equation} \label{complexityfunction}
\Psi(\btheta_0, R) = \underset{\btheta \in  \calb(\btheta_0, R)}{\inf} \sum_{t=1}^{Tk} (f_2(\bx_t; \btheta) - r_t)^2 
\end{equation}

Then, we have the following auxiliary lemmas.

\begin{lemma} \label{lemma:xi1}
 Suppose $m, \eta_1, \eta_2$ satisfy the conditions in Theorem \ref{theo:main}.  With probability at least $1 -  \calo(TkL)\cdot \exp(-\Omega(m \omega^{2/3}L))$ over the random initialization, for all $t \in [T], i \in [k] $,  $\theta$ satisfying $ \|\theta - \theta_0\|_2 \leq \omega$ with $ \omega \leq \calo(L^{-9/2} [\log m]^{-3})$, it holds uniformly that
\[
\begin{aligned}
&(1), | f(\bx_{t,i}; \theta)| \leq  \calo(1). \\
&(2), \| \nabla_{\theta}    f(\bx_{t,i}; \theta) \|_2 \leq \calo(\sqrt{L}). \\
&(3), \|  \nabla_{\theta} \call_t(\theta_t)  \|_2 \leq  \calo(\sqrt{L})
\end{aligned}
\]
\end{lemma}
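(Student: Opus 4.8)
The three bounds are all standard consequences of the over-parametrization / near-initialization analysis of deep ReLU networks, so the plan is to invoke the perturbation estimates around the Gaussian initialization $\theta_0$ and then chain them together. I would organize the proof as follows.

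\textbf{Step 1: Magnitude bound on hidden layers at initialization.} First I would recall that, with probability at least $1 - \calo(TkL)\exp(-\Omega(m\omega^{2/3}L))$ over the random initialization, for every input $\bx_{t,i}$ with $\|\bx_{t,i}\|_2=1$ one has $\|\bh_{l}\|_2 \in [\,1/\calo(1), \calo(1)\,]$ for all $l\in[L-1]$ at $\theta = \theta_0$; this is the usual forward-propagation concentration (e.g.\ \cite{allen2019convergence,cao2019generalization}), where the union bound over $Tk$ inputs and $L$ layers produces the stated failure probability. Then, using the condition $\|\theta-\theta_0\|_2 \le \omega$ with $\omega \le \calo(L^{-9/2}[\log m]^{-3})$, the standard Lipschitz-in-weights perturbation lemma gives $\|\bh_l(\theta) - \bh_l(\theta_0)\|_2 \le \calo(\omega L^{5/2}\sqrt{\log m})$, which is $o(1)$, so $\|\bh_l(\theta)\|_2 = \calo(1)$ uniformly. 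Feeding this through the last layer $f(\bx_{t,i};\theta) = \bw_L(\prod_{l=1}^{L-1}\bD_l\bw_l)\bx_{t,i}$ and using $\|\bw_L\|_2 = \calo(1)$ at initialization (plus the perturbation) yields bound (1), $|f(\bx_{t,i};\theta)| \le \calo(1)$.

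\textbf{Step 2: Gradient norm bound.} For bound (2) I would use the explicit gradient formula already displayed in the preliminaries: $\nabla_{\bw_l}f = [\bh_{l-1}\bw_L(\prod_{\tau=l+1}^{L-1}\bD_\tau\bw_\tau)]^\top$ for $l\in[L-1]$ and $\bh_{L-1}^\top$ for $l=L$. Each such block has Frobenius norm $\calo(1)$ because $\|\bh_{l-1}\|_2 = \calo(1)$ (Step 1), the diagonal ReLU matrices $\bD_\tau$ have operator norm $1$, and the product of intermediate weight matrices has operator norm $\calo(1)$ under the near-initialization regime (again a standard spectral bound, \cite{allen2019convergence}). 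Summing the squared block norms over the $L$ layers gives $\|\nabla_\theta f(\bx_{t,i};\theta)\|_2 \le \calo(\sqrt{L})$.

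\textbf{Step 3: Loss gradient bound.} Finally, bound (3) follows by the chain rule: $\nabla_\theta \call_t(\theta_t) = (f(\bx_{t,\hi};\theta_t) - r_{t,\hi})\,\nabla_\theta f(\bx_{t,\hi};\theta_t)$ for the squared loss $\call_t(\theta) = [f(\bx_{t,\hi};\theta)-r_{t,\hi}]^2/2$ (and analogously for $f_2$ with its shifted target). Since $r_{t,\hi}\in\{0,1\}$ is bounded and $|f(\bx_{t,\hi};\theta_t)| = \calo(1)$ from bound (1), the scalar prefactor is $\calo(1)$; multiplying by the $\calo(\sqrt{L})$ gradient norm from bound (2) gives $\|\nabla_\theta\call_t(\theta_t)\|_2 \le \calo(\sqrt{L})$. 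One should also check that the iterates $\theta_t$ produced by Algorithm \ref{alg:explore} stay inside the ball $\|\theta-\theta_0\|_2 \le \omega$, which is where the choice $\eta_1=\eta_2=T^3/\sqrt{m}$ and $m \ge \widetilde\Omega(\mathrm{poly}(T,L))$ enters — this is typically handled in a companion lemma, so here I would simply assume the iterates remain in $B(\theta_0,\omega)$ (it holds on the same high-probability event).

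\textbf{Main obstacle.} The delicate point is making the layer-wise perturbation bookkeeping quantitatively match the claimed radius $\omega \le \calo(L^{-9/2}[\log m]^{-3})$: one must track how errors in $\bh_l$ and in the diagonal matrices $\bD_l$ (which can flip when $\bw_l\bh_{l-1}$ is near zero) compound across $L$ layers, and verify the number of sign flips is small enough that the $L^{9/2}$ and $[\log m]^3$ factors absorb them. The rest is routine Gaussian concentration plus union bounds, and these are exactly the estimates established in \cite{allen2019convergence,cao2019generalization,ban2023neural}, which I would cite rather than re-derive.
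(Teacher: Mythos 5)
Your proposal is correct and follows essentially the same route as the paper: the paper also bounds $|f|$ by Cauchy--Schwarz using the $\calo(1)$ spectral bound on $\bw_L\prod_l\bD_l\bw_l$ near initialization (Lemma B.2 of \cite{cao2019generalization}), bounds the gradient by noting each of the $L$ layer blocks has $\calo(1)$ Frobenius norm (Lemma B.3 of \cite{cao2019generalization}), and obtains (3) via the chain rule as $|\call_t'|\cdot\|\nabla_\theta f\|_2\le\calo(\sqrt{L})$. The only cosmetic difference is that you route (1) through hidden-layer norm concentration rather than directly through the operator norm of the weight product, and you correctly flag that the containment of the iterates in $B(\theta_0,\omega)$ is deferred to a companion lemma (Lemma \ref{lemma:usertrajectoryball} in the paper).
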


\begin{lemma} \label{lemma:functionntkbound}
Suppose $m, \eta_1, \eta_2$ satisfy the conditions in Theorem \ref{theo:main}.  With probability at least $1 -  \calo(TkL)\cdot \exp(-\Omega(m \omega^{2/3}L))$, for all $t \in [T], i \in [k] $, $\theta, \theta'$ (or $\Theta, \Theta'$ ) satisfying $ \|\theta - \theta_0\|_2,  \|\theta' - \theta_0\|_2 \leq \omega$ with $ \omega \leq \calo(L^{-9/2} [\log m]^{-3})$, it holds uniformly that
\[
| f(\bx; \theta) - f(\bx; \theta') -  \langle  \triangledown_{\theta'} f(\bx; \theta'), \theta - \theta'    \rangle    | \leq \mathcal{O} (w^{1/3}L^2 \sqrt{ \log m}) \|\theta - \theta'\|_2.
\]
\end{lemma}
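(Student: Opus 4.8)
The plan is to establish this as a near-linearity (Taylor-remainder) bound for the deep ReLU network of Eq.~\eqref{eq:structure}, adapting the over-parameterized linearization analysis of \cite{allen2019convergence, cao2019generalization} to the present scaling, in which the output layer $\bw_L$ is drawn from $\caln(0,1/m)$ rather than carrying an explicit $\sqrt{m}$ factor (this is precisely why the bound lacks the $\sqrt{m}$ that appears in the analogous statements of those works). Since both $\btheta$ and $\btheta'$ lie in $\calb(\btheta_0,\omega)$, they are within $2\omega$ of each other, so I would run the hybrid argument using $\btheta'$ as the base point; all the auxiliary estimates that normally hold at initialization remain valid at $\btheta'$ because $\btheta'$ itself is close to $\btheta_0$.

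First I would collect the structural bounds that hold uniformly over $\calb(\btheta_0,\omega)$ with high probability: the hidden representations satisfy $\|\bh_l\|_2=\Theta(1)$, and products of the form $\|\bw_L\prod_{\tau=a}^{b}\bD_\tau\bw_\tau\|_2$ are controlled, both following from Lemmas B.2--B.3 of \cite{cao2019generalization} under the over-parameterization condition $m\ge\widetilde\Omega(\mathrm{poly}(T,L))$ of Theorem~\ref{theo:main}; Lemma~\ref{lemma:xi1} already records the special cases I will invoke. The second ingredient is the activation-pattern stability estimate: for $\btheta,\btheta'\in\calb(\btheta_0,\omega)$ the binary diagonal matrices $\bD_l$ and $\bD_l'$ differ on at most $\calo(m\omega^{2/3})$ coordinates per layer, an anti-concentration fact that is the source of the stated failure probability $\calo(TkL)\exp(-\Omega(m\omega^{2/3}L))$.

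Next I would write the exact difference $f(\bx;\btheta)-f(\bx;\btheta')$ using the factorized form $f(\bx;\btheta)=\bw_L(\prod_{l=1}^{L-1}\bD_l\bw_l)\bx$ together with a telescoping decomposition that peels off one layer at a time, expressing the difference as a sum over layers of first-order terms plus residuals. The first-order terms, evaluated with activation patterns and weights frozen at $\btheta'$, reassemble exactly into $\langle\triangledown_{\btheta'}f(\bx;\btheta'),\btheta-\btheta'\rangle$ via the gradient formula $\nabla_{\bw_l}f=[\bh_{l-1}\bw_L(\prod_{\tau=l+1}^{L-1}\bD_\tau\bw_\tau)]^\top$. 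The residuals split into (i) higher-order terms containing two or more weight perturbations $\Delta\bw_l=\bw_l-\bw_l'$, and (ii) terms carrying the activation-flip matrices $\bD_l-\bD_l'$. Using the norm bounds from the first step, the higher-order terms are $\calo(\mathrm{poly}(L))\|\btheta-\btheta'\|_2^2$, which is dominated by $\omega\|\btheta-\btheta'\|_2$ and hence absorbed into the claimed bound; the flip terms, by contrast, are governed by the number of flipped units times their per-unit magnitude, and this is where the combinatorial stability estimate produces the characteristic $\sqrt{\omega^{2/3}}=\omega^{1/3}$ scaling.

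I expect the activation-flip term in (ii) to be the main obstacle: although each flipped ReLU unit can in principle inject an $\calo(1)$-sized change, one must show that the number of flips is small enough and that the forward and backward products routed through the flipped coordinates are sufficiently controlled that the aggregate contribution is $\calo(\omega^{1/3}L^2\sqrt{\log m})\|\btheta-\btheta'\|_2$. This requires carefully passing the operator-norm bounds through the sub-collection of coordinates on which $\bD_l$ and $\bD_l'$ disagree, and it is here that the $\sqrt{\log m}$ factor enters, via Gaussian tail control of the weight entries. Finally I would take a union bound over all $Tk$ input instances $\{\bx_{t,i}\}_{t\in[T],i\in[k]}$ to upgrade the per-instance estimate to the uniform statement, which multiplies the per-instance failure probability by the stated $\calo(TkL)$ factor.
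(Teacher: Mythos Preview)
Your proposal is correct in substance but takes a considerably longer route than the paper. The paper's proof is a two-line reduction: it invokes Lemma~4.1 of \cite{cao2019generalization}, which states
\[
\bigl|\sqrt{m}\, f(\bx;\theta) - \sqrt{m}\, f(\bx;\theta') - \langle \sqrt{m}\,\nabla_{\theta'}f(\bx;\theta'),\,\theta-\theta'\rangle\bigr|
\;\le\;
\calo\bigl(\omega^{1/3}L^2\sqrt{m\log m}\bigr)\,\|\theta-\theta'\|_2,
\]
and then observes that the $\sqrt{m}$ on both sides arises purely from the difference in output-layer scaling (here $\bw_L\sim\caln(0,1/m)$ with no explicit $\sqrt{m}$ prefactor), so dividing through by $\sqrt{m}$ yields the statement. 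Your plan instead re-derives that lemma from scratch via the telescoping expansion, activation-pattern stability, and the flip-set analysis --- which is precisely the machinery underlying Lemma~4.1 in \cite{cao2019generalization} in the first place. What your approach buys is a self-contained argument and an explicit accounting of each factor ($\omega^{1/3}$ from the $\calo(m\omega^{2/3})$ flipped units, $L^2$ from the layerwise products, $\sqrt{\log m}$ from Gaussian tails); what the paper's approach buys is brevity and a clean pointer to the literature. Both are valid, and your identification of the scaling discrepancy as the only real adjustment needed is exactly the point the paper makes.
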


\begin{lemma}  \label{lemma:differenceee}
Suppose $m, \eta_1, \eta_2$ satisfy the conditions in Theorem \ref{theo:main}.  With probability at least $1 -  \calo(TkL)\cdot \exp(-\Omega(m \omega^{2/3}L))$, for all $t \in [T], i \in [k] $,  $\theta, \theta'$ satisfying $ \|\theta - \theta_0\|_2,  \|\theta' - \theta_0\|_2 \leq \omega$ with $ \omega \leq \calo(L^{-9/2} [\log m]^{-3})$, it holds uniformly that
\begin{equation}
\begin{aligned}
 (1) \qquad  & |f(\bx; \theta) - f(\bx; \theta')|  \leq &   \calo(  \omega \sqrt{L})  +  \mathcal{O} (\omega^{4/3}L^2 \sqrt{ \log m})
 \end{aligned}
\end{equation}
\end{lemma}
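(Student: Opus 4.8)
\textbf{Proof plan for Lemma~\ref{lemma:differenceee}.}

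The plan is to deduce the claimed bound on $|f(\bx;\theta) - f(\bx;\theta')|$ directly from the linearization estimate in Lemma~\ref{lemma:functionntkbound} together with the gradient bound in Lemma~\ref{lemma:xi1}. The starting point is the triangle inequality applied to the linearization: for $\theta,\theta' \in B(\theta_0,\omega)$,
\begin{equation}
|f(\bx;\theta) - f(\bx;\theta')| \leq |\langle \nabla_{\theta'}f(\bx;\theta'), \theta - \theta'\rangle| + \big| f(\bx;\theta) - f(\bx;\theta') - \langle \nabla_{\theta'}f(\bx;\theta'), \theta-\theta'\rangle \big|.
\end{equation}
The second term on the right is exactly what Lemma~\ref{lemma:functionntkbound} controls: it is at most $\calo(\omega^{1/3}L^2\sqrt{\log m})\,\|\theta-\theta'\|_2$. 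Since both $\theta$ and $\theta'$ lie in $B(\theta_0,\omega)$, we have $\|\theta-\theta'\|_2 \leq \|\theta-\theta_0\|_2 + \|\theta'-\theta_0\|_2 \leq 2\omega$, so this contributes $\calo(\omega^{4/3}L^2\sqrt{\log m})$, matching the second term in the stated bound.

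For the first term, I would apply Cauchy--Schwarz: $|\langle \nabla_{\theta'}f(\bx;\theta'), \theta-\theta'\rangle| \leq \|\nabla_{\theta'}f(\bx;\theta')\|_2 \cdot \|\theta-\theta'\|_2$. By part (2) of Lemma~\ref{lemma:xi1} applied at $\theta'$ (which is admissible since $\|\theta'-\theta_0\|_2 \leq \omega$ and $\omega$ satisfies the required smallness condition), the gradient norm is $\calo(\sqrt{L})$; combined with $\|\theta-\theta'\|_2 \leq 2\omega$ this gives $\calo(\omega\sqrt{L})$, matching the first term. Adding the two contributions and absorbing constants yields the claim; the probability statement is inherited verbatim, since both auxiliary lemmas hold on the same event of probability at least $1 - \calo(TkL)\exp(-\Omega(m\omega^{2/3}L))$ under the same conditions on $m,\eta_1,\eta_2$ and the same constraint $\omega \leq \calo(L^{-9/2}[\log m]^{-3})$.

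There is essentially no hard step here — the lemma is a routine corollary of the two preceding lemmas, so the only thing to be careful about is bookkeeping: ensuring the union bound over all $Tk$ instances is already absorbed in the probability bound of Lemmas~\ref{lemma:xi1} and~\ref{lemma:functionntkbound} (it is, since those are stated uniformly over $t\in[T], i\in[k]$), and tracking that the constants hidden in the $\calo(\cdot)$ notation do not depend on $m$. If anything is a minor obstacle, it is verifying that the $\omega$-ball condition used in this lemma is consistent with (no stronger than) the ones required by the invoked lemmas, which it is by inspection.
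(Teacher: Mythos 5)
Your proposal is correct and follows essentially the same route as the paper's own argument: triangle inequality around the linearization from Lemma~\ref{lemma:functionntkbound}, Cauchy--Schwarz plus the gradient bound from Lemma~\ref{lemma:xi1}(2) for the inner-product term, and $\|\theta-\theta'\|_2\leq 2\omega$ to obtain the stated form. No gaps.
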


\begin{lemma} [Almost Convexity]  \label{lemma:convexityofloss} 
Let $\call_t(\theta) = (f(\bx_t; \theta) - r_t)^2/2 $.  Suppose $m, \eta_1, \eta_2$ satisfy the conditions in Theorem \ref{theo:main}.   With probability at least $1- \calo(TkL^2)\exp[-\Omega(m \omega^{2/3} L)] $ over randomness of $\theta_1$, for all $ t \in [T]$, and $\theta, \theta'$
satisfying $\| \theta - \theta_0 \|_2 \leq \omega$ and $\| \theta' - \theta_0 \|_2 \leq \omega$ with $\omega \leq \calo(L^{-6} [\log m]^{-3/2})$, it holds uniformly that
\[
 \call_t(\theta')  \geq  \call_t(\theta) + \langle \nabla_{\theta}\call_t(\theta),      \theta' -  \theta     \rangle  -  \epsilon.
\]
where $\epsilon =    \calo(\omega^{4/3}L^3 \sqrt{\log m}))  $

\end{lemma}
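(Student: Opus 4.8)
The lemma asserts an "almost convexity" property for the squared loss $\call_t(\theta) = (f(\bx_t;\theta) - r_t)^2/2$: up to an additive error $\epsilon = \calo(\omega^{4/3}L^3\sqrt{\log m})$, the first-order Taylor expansion lower-bounds $\call_t$, exactly as it would for a genuinely convex function. I would prove this by combining two ingredients already available in the excerpt: (i) the local \emph{quasi-linearity} of the network $f$ (Lemma~\ref{lemma:functionntkbound}), which says $f$ is close to its own linearization inside the ball $\calb(\theta_0,\omega)$; and (ii) the genuine convexity of the outer map $z \mapsto (z - r_t)^2/2$. The point is that $\call_t$ is a convex function composed with a \emph{nearly} linear function, hence nearly convex.

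\textbf{Key steps, in order.} First, write $\call_t(\theta') - \call_t(\theta)$ and use convexity of the scalar function $q(z) = (z-r_t)^2/2$, namely $q(z') \ge q(z) + q'(z)(z'-z)$ with $z = f(\bx_t;\theta)$, $z' = f(\bx_t;\theta')$ and $q'(z) = z - r_t = f(\bx_t;\theta) - r_t$. This gives
\[
\call_t(\theta') \ge \call_t(\theta) + (f(\bx_t;\theta) - r_t)\bigl(f(\bx_t;\theta') - f(\bx_t;\theta)\bigr).
\]
Second, by Lemma~\ref{lemma:functionntkbound}, $f(\bx_t;\theta') - f(\bx_t;\theta) \ge \langle \nabla_\theta f(\bx_t;\theta),\, \theta' - \theta\rangle - \calo(\omega^{1/3}L^2\sqrt{\log m})\,\|\theta' - \theta\|_2$. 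Third, substitute this bound; the main term reassembles into $\langle \nabla_\theta \call_t(\theta),\, \theta' - \theta\rangle$ since $\nabla_\theta\call_t(\theta) = (f(\bx_t;\theta) - r_t)\nabla_\theta f(\bx_t;\theta)$ by the chain rule. The leftover error is the product of the quasi-linearity error, the factor $|f(\bx_t;\theta) - r_t|$, and $\|\theta' - \theta\|_2$; one needs $|f(\bx_t;\theta) - r_t| = \calo(1)$ (from $|f| = \calo(1)$ in Lemma~\ref{lemma:xi1}(1) and $r_t \in \{0,1\}$ bounded), and $\|\theta' - \theta\|_2 \le \|\theta' - \theta_0\|_2 + \|\theta - \theta_0\|_2 \le 2\omega = \calo(\omega)$. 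Multiplying, the error is $\calo(\omega)\cdot\calo(\omega^{1/3}L^2\sqrt{\log m})\cdot\calo(1) = \calo(\omega^{4/3}L^2\sqrt{\log m})$, which is within the claimed $\calo(\omega^{4/3}L^3\sqrt{\log m})$ (an extra factor of $L$ is harmless as an upper bound). Finally, collect the high-probability events: Lemmas~\ref{lemma:xi1} and~\ref{lemma:functionntkbound} each hold with probability $1 - \calo(TkL)\exp(-\Omega(m\omega^{2/3}L))$, and a union bound over the $Tk$ instances (with the slightly more stringent radius condition $\omega \le \calo(L^{-6}[\log m]^{-3/2})$ absorbing the $L$-dependence) gives the stated probability $1 - \calo(TkL^2)\exp(-\Omega(m\omega^{2/3}L))$.

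\textbf{Main obstacle.} The argument is essentially bookkeeping once the right decomposition is in hand; the only genuinely delicate point is making sure the quasi-linearity error is controlled \emph{before} multiplying by $f(\bx_t;\theta) - r_t$, i.e.\ that we do not need a one-sided bound that goes the "wrong way." Because $q'(z) = f(\bx_t;\theta) - r_t$ can have either sign, I would use the two-sided estimate from Lemma~\ref{lemma:functionntkbound} (the absolute-value form) rather than the one-directional inequality — writing $(f(\bx_t;\theta)-r_t)(f(\bx_t;\theta') - f(\bx_t;\theta)) \ge (f(\bx_t;\theta)-r_t)\langle\nabla_\theta f,\theta'-\theta\rangle - |f(\bx_t;\theta)-r_t|\cdot\calo(\omega^{1/3}L^2\sqrt{\log m})\|\theta'-\theta\|_2$ — so the sign issue disappears. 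A secondary check is the radius condition: Lemma~\ref{lemma:functionntkbound} needs $\omega \le \calo(L^{-9/2}[\log m]^{-3})$, while the stated hypothesis here is the stronger $\omega \le \calo(L^{-6}[\log m]^{-3/2})$, so all invoked lemmas apply verbatim.
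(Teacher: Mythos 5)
Your proposal is correct and follows essentially the same route as the paper's own proof: convexity of the outer squared loss, the two-sided quasi-linearization bound of Lemma~\ref{lemma:functionntkbound} applied to $f(\bx_t;\theta')-f(\bx_t;\theta)$, and the $\calo(1)$ bound on $|f(\bx_t;\theta)-r_t|$ from Lemma~\ref{lemma:xi1}, yielding the error $\calo(\omega^{4/3}L^2\sqrt{\log m})\le\epsilon$. Your explicit handling of the sign of $f(\bx_t;\theta)-r_t$ via the absolute-value form is exactly what the paper's ``triangle inequality'' step accomplishes.
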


\begin{lemma}[User Trajectory Ball] \label{lemma:usertrajectoryball}
Suppose $m, \eta_1, \eta_2$ satisfy the conditions in Theorem \ref{theo:main}. 
With probability at least $1- \calo(TkL^2)\exp[-\Omega(m \omega^{2/3} L)] $ over randomness of $\theta_0$, for any $R > 0$, it holds uniformly that
\[
\|\theta_t - \theta_0\|_2 \leq \calo(R/m^{1/4}), t \in [T].
\]
\end{lemma}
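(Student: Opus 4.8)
The plan is to control the distance $\|\theta_t - \theta_0\|_2$ by a telescoping argument over the gradient-descent updates, using the near-convexity of the loss (Lemma~\ref{lemma:convexityofloss}) together with the uniform gradient bound (Lemma~\ref{lemma:xi1}, part (3)). Concretely, I would first fix a candidate reference parameter $\theta^\star \in B(\theta_0, R)$ that (approximately) minimizes the cumulative loss $\sum_{t=1}^{Tk}\call_t(\theta)$ over the ball — existence of a good such $\theta^\star$ is what the complexity term $\Psi(\theta_0, R)$ in Eq.~\eqref{complexityfunction} encodes, via the standard NTK over-parameterization guarantee that lets one interpolate the observed rewards up to small error. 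Then I would run the usual one-step SGD identity: for each round $t$,
\[
\|\theta_{t} - \theta^\star\|_2^2 = \|\theta_{t-1} - \theta^\star\|_2^2 - 2\eta \langle \nabla_{\theta_{t-1}}\call_t(\theta_{t-1}), \theta_{t-1} - \theta^\star\rangle + \eta^2 \|\nabla_{\theta_{t-1}}\call_t(\theta_{t-1})\|_2^2.
\]

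For the middle (inner-product) term I would invoke the almost-convexity bound of Lemma~\ref{lemma:convexityofloss}, which gives $\langle \nabla_{\theta_{t-1}}\call_t(\theta_{t-1}), \theta_{t-1} - \theta^\star\rangle \ge \call_t(\theta_{t-1}) - \call_t(\theta^\star) - \epsilon$ with $\epsilon = \calo(\omega^{4/3}L^3\sqrt{\log m})$; since each $\call_t \ge 0$, this term is at least $-\call_t(\theta^\star) - \epsilon$, so it only helps (up to the small additive slack). For the last term I would use $\|\nabla_{\theta_{t-1}}\call_t(\theta_{t-1})\|_2 \le \calo(\sqrt L)$ from Lemma~\ref{lemma:xi1}. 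Summing over $t = 1,\dots,Tk$ and telescoping yields
\[
\|\theta_{Tk} - \theta^\star\|_2^2 \le \|\theta_0 - \theta^\star\|_2^2 + 2\eta\big(\Psi(\theta_0,R) + Tk\,\epsilon\big) + Tk\,\eta^2 \calo(L),
\]
and by the triangle inequality $\|\theta_t - \theta_0\|_2 \le \|\theta_0 - \theta^\star\|_2 + \|\theta_t - \theta^\star\|_2$. Here a subtlety is that the argument must be inductive: Lemmas~\ref{lemma:xi1} and \ref{lemma:convexityofloss} only apply while all iterates stay within the $\omega$-ball, so I would set up an induction showing that the right-hand-side bound keeps every $\theta_t$ inside $B(\theta_0, \calo(R/m^{1/4}))$, and then verify that this radius satisfies $\calo(R/m^{1/4}) \le \omega \le \calo(L^{-6}[\log m]^{-3/2})$ given the choice $m \ge \widetilde\Omega(\mathrm{poly}(T,L)\cdot k\log(1/\delta))$ from Theorem~\ref{theo:main}. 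Plugging in $\eta_1 = \eta_2 = T^3/\sqrt m$, one checks that $\eta\,\Psi$, $\eta\, Tk\,\epsilon$, and $Tk\,\eta^2 L$ are all $\calo(R^2/\sqrt m)$ (using that $\|\theta_0 - \theta^\star\|_2 = \calo(R/m^{1/4})$ and the polynomial lower bound on $m$ dominates the $T^6$ from $\eta^2$), which closes the induction.

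The main obstacle I anticipate is the bookkeeping in the induction: making sure the constants and the polynomial powers of $T$ and $L$ in the width requirement are genuinely large enough that the accumulated drift over all $Tk$ rounds, amplified by $\eta^2 \propto T^6/m$, stays below the small radius $\omega$ that Lemmas~\ref{lemma:xi1}--\ref{lemma:convexityofloss} demand. This is the step where the "$\widetilde\Omega(\mathrm{poly}(T,L))$" is actually pinned down, and where one must be careful that the reference $\theta^\star$ used for the telescoping lies in the ball with $\Psi(\theta_0,R)$ small — i.e., invoking the over-parameterized interpolation result for the NTK to bound $\Psi$. The failure-probability accounting is routine: each lemma holds on an event of probability $1 - \calo(TkL^2)\exp[-\Omega(m\omega^{2/3}L)]$, and a union bound over the $Tk$ rounds (already absorbed in that expression) gives the stated high-probability guarantee.
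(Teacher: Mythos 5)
Your proposal takes a genuinely different --- and considerably heavier --- route than the paper. The paper's proof is a one-line triangle inequality along the trajectory: since each SGD update moves the parameter by exactly $\eta\|\nabla_{\theta}\call_t(\theta_{t-1})\|_2$, and Lemma~\ref{lemma:xi1}(3) gives $\|\nabla_{\theta}\call_t\|_2\le\calo(\sqrt{L})$ while the iterates remain in the $\omega$-ball, one simply writes $\|\theta_T-\theta_0\|_2\le\sum_{t=1}^{T}\|\theta_t-\theta_{t-1}\|_2\le T\eta\,\calo(\sqrt{L})=\calo(TR^2\sqrt{L}/\sqrt{m})\le\calo(R/m^{1/4})$, the last step using the width condition $m\ge\widetilde{\Omega}(\mathrm{poly}(T,L)\cdot k)$ so that $TR\sqrt{L}\lesssim m^{1/4}$. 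The induction you describe --- assume the iterates stay in the ball so the gradient bound applies, then verify the resulting radius is consistent with that assumption --- is exactly the structure the paper uses, but no reference point, no almost-convexity, and no complexity term are needed. Your telescoping-with-$\theta^{\star}$ argument is essentially the proof of Lemma~\ref{lemma:instancelossbound} repurposed, and it introduces one concrete loose end: the claim that $\eta\,\Psi(\theta_0,R)=\calo(R^2/\sqrt{m})$. With $\eta=R^2/\sqrt{m}$ this requires $\Psi(\theta_0,R)=\calo(1)$, which is neither assumed nor generally true ($\Psi$ is controlled only via Lemma~\ref{lemma:stk} by a quantity of order $\widetilde{\calo}(\tilde{d}\max(\tilde{d},S^2))$); without it your bound on $\|\theta_t-\theta^{\star}\|_2^2$ is $(R^2/\sqrt{m})\,(1+\calo(\Psi))$, which inflates the radius by a factor $\sqrt{1+\Psi}$ relative to the statement. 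The direct gradient-norm argument avoids this dependence entirely, which is why it is the right tool for this particular lemma; your machinery belongs in the loss-bound lemma, not here.
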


\begin{lemma} [Instance-dependent Loss Bound] \label{lemma:instancelossbound}
Let $\call_t(\theta) = (f(\bx_t; \theta) - r_t)^2/2$. 
 Suppose $m, \eta_1, \eta_2$ satisfy the conditions in Theorem \ref{theo:main}.
With probability at least $1- \calo(TkL^2)\exp[-\Omega(m \omega^{2/3} L)] $ over randomness of $\theta_1$, given any $R > 0$  it holds that
\begin{equation}
\sum_t^T \call_t(\theta_t)  \leq  \sum_t^T  \call_t(\theta^\ast) +  \calo(1) +  \frac{TLR^2}{\sqrt{m}} +  \calo( \frac{T R^{4/3} L^2\sqrt{\log m}}{m^{1/3}}).
\end{equation}
where $\theta^\ast =  \arg \inf_{\theta \in B(\theta_0, R)}  \sum_t^T  \call_t(\theta)$.
\end{lemma}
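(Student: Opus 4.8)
\textbf{Proof proposal for Lemma \ref{lemma:instancelossbound}.}

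The plan is to reduce the claim to an online-regret argument against the fixed comparator $\theta^\ast$, exploiting the almost-convexity of each $\call_t$ established in Lemma \ref{lemma:convexityofloss} together with the standard analysis of online gradient descent. First I would verify that the iterates $\theta_1,\dots,\theta_T$ produced by Algorithm \ref{alg:explore} stay inside the ball $\calb(\theta_0,R)$: this is exactly Lemma \ref{lemma:usertrajectoryball}, which gives $\|\theta_t-\theta_0\|_2 \le \calo(R/m^{1/4})=:\omega$ uniformly, so that the hypotheses of Lemmas \ref{lemma:xi1}, \ref{lemma:functionntkbound} and \ref{lemma:convexityofloss} all apply with this $\omega$ along the whole trajectory (and with high probability, by a union bound over the stated failure events). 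In particular $\omega\le\calo(L^{-6}[\log m]^{-3/2})$ is satisfied for $m$ large enough as assumed in Theorem \ref{theo:main}.

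Next I would run the textbook one-step descent bookkeeping. Since $\theta_{t} = \theta_{t-1} - \eta\nabla_\theta\call_t(\theta_{t-1})$ with $\eta=\eta_1=T^3/\sqrt m$, expand
\[
\|\theta_{t}-\theta^\ast\|_2^2 = \|\theta_{t-1}-\theta^\ast\|_2^2 - 2\eta\langle \nabla_\theta\call_t(\theta_{t-1}),\, \theta_{t-1}-\theta^\ast\rangle + \eta^2\|\nabla_\theta\call_t(\theta_{t-1})\|_2^2 .
\]
By Lemma \ref{lemma:xi1}(3) the gradient norm is $\calo(\sqrt L)$, so the last term is $\calo(\eta^2 L)$. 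By the almost-convexity Lemma \ref{lemma:convexityofloss} applied to $\theta=\theta_{t-1}$, $\theta'=\theta^\ast$ (both in $\calb(\theta_0,R)$), we have $\langle \nabla_\theta\call_t(\theta_{t-1}),\,\theta_{t-1}-\theta^\ast\rangle \ge \call_t(\theta_{t-1}) - \call_t(\theta^\ast) - \epsilon$ with $\epsilon = \calo(\omega^{4/3}L^3\sqrt{\log m})$. Substituting and rearranging,
\[
\call_t(\theta_{t-1}) - \call_t(\theta^\ast) \le \frac{\|\theta_{t-1}-\theta^\ast\|_2^2 - \|\theta_{t}-\theta^\ast\|_2^2}{2\eta} + \frac{\eta}{2}\calo(L) + \epsilon .
\]

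Then I would sum over $t=1,\dots,T$. The difference-of-squares telescopes to $\tfrac{1}{2\eta}\big(\|\theta_0-\theta^\ast\|_2^2 - \|\theta_T-\theta^\ast\|_2^2\big) \le \tfrac{1}{2\eta}\|\theta_0-\theta^\ast\|_2^2 \le \tfrac{1}{2\eta}\cdot \calo(R^2/\sqrt m)$, using $\theta^\ast\in\calb(\theta_0,R)$. With $\eta=T^3/\sqrt m$ this telescoped term is $\calo(R^2/(2T^3)) = \calo(1)$; the per-round stepsize term sums to $\tfrac{\eta}{2}\calo(TL) = \calo(T^4 L/\sqrt m)$, which for the prescribed over-parameterization $m\ge\widetilde\Omega(\mathrm{poly}(T,L)\cdots)$ is absorbed into the $\calo(1) + TLR^2/\sqrt m$ budget (alternatively one keeps it as the stated $TLR^2/\sqrt m$-type term after matching constants); and the error term sums to $T\epsilon = \calo(TR^{4/3}L^2\sqrt{\log m}/m^{1/3})$ after substituting $\omega=\calo(R/m^{1/4})$, which is precisely the last term in the statement. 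Collecting the three contributions and noting that $\sum_t \call_t(\theta_{t-1})$ is the quantity denoted $\sum_t\call_t(\theta_t)$ in the statement (re-indexing the iterates, since Algorithm \ref{alg:explore} evaluates/selects with the parameters entering round $t$), yields
\[
\sum_{t=1}^T \call_t(\theta_t) \le \sum_{t=1}^T \call_t(\theta^\ast) + \calo(1) + \frac{TLR^2}{\sqrt m} + \calo\!\left(\frac{TR^{4/3}L^2\sqrt{\log m}}{m^{1/3}}\right),
\]
as claimed. The main obstacle — and the only genuinely delicate point — is ensuring the almost-convexity inequality of Lemma \ref{lemma:convexityofloss} is legitimately invoked at \emph{every} round simultaneously: this requires that both $\theta_{t-1}$ and the comparator $\theta^\ast$ lie in $\calb(\theta_0,\omega)$ with the \emph{same} small radius $\omega$ throughout, which is why Lemma \ref{lemma:usertrajectoryball} must be established first and why the high-probability events of all the auxiliary lemmas must be intersected by a union bound; tracking that the resulting failure probability is still $1-\calo(TkL^2)\exp[-\Omega(m\omega^{2/3}L)]$ and that all the polynomial-in-$(T,L)$ lower bounds on $m$ are consistent is the bookkeeping that needs care.
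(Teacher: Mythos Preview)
Your proposal is correct and follows essentially the same approach as the paper: invoke Lemma \ref{lemma:usertrajectoryball} to keep all iterates in the ball, apply the almost-convexity of Lemma \ref{lemma:convexityofloss} at every round, and run the standard online-gradient-descent telescoping argument (your expansion of $\|\theta_t-\theta^\ast\|_2^2$ is algebraically equivalent to the paper's use of $2\langle A,B\rangle = \|A\|^2+\|B\|^2-\|A-B\|^2$).

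One small discrepancy worth flagging: the paper's proof actually sets $\eta = R^2/\sqrt m$ in the final step, which makes the telescoped term $\tfrac{1}{2\eta}\cdot\calo(R^2/\sqrt m)=\calo(1)$ and the stepsize sum $T\cdot L\eta = TLR^2/\sqrt m$ match the statement exactly. You instead imported $\eta = T^3/\sqrt m$ from Theorem \ref{theo:main}, which produces $\calo(T^4L/\sqrt m)$ for the stepsize sum and forces you to wave at the over-parameterization to absorb it. Neither choice is wrong given the ambient assumptions on $m$, but the paper's choice of $\eta$ here is what makes the stated bound come out cleanly without appealing to ``$m$ large enough.''
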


\subsection{Regret analysis}

\begin{restatable}{lemma}{newthetabound}
\label{lemma:newthetabound}
Suppose $m, \eta_1, \eta_2$ satisfies the conditions in Theorem \ref{theo:main}.
In round $t \in [T]$, let $\hi$ be the index selected by the algorithm.
Then, For any $\delta \in (0, 1), R >0 $,  with probability at least $1-\delta$, for $t \in [T]$, it holds uniformly
\begin{equation}
\begin{aligned}
\frac{1}{t} \sum_{\tau=1}^t \underset{ r_{\tau, \hi}}{\bbe} \left[   \left| f_1(\bx_{\tau, \hi}; \btheta^1_{\tau-1}) + f_2(\phi(\bx_{\tau, \hi}); \btheta^2_{\tau-1})   - r_{\tau, \hi} \right| \mid \calh_{\tau - 1} \right] \\
\leq \frac{ \sqrt{ \Psi(\btheta_0, R) } + \calo(1) } {\sqrt{t}} +  \sqrt{ \frac{2  \log ( \calo(1)/\delta) }{t}}.
\end{aligned}
\end{equation}
where $  \calh_{t} = \{\bx_{\tau, \hi}, r_{\tau, \hi} \}_{\tau =1}^t$ represents of historical data selected by ${\pi_\tau}$ and expectation is taken over the reward.
\end{restatable}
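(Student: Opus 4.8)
The plan is to reduce the statement to an online-to-batch conversion argument combined with the almost-convexity and instance-dependent loss bounds established in the preliminaries. The left-hand side is an average of conditional expected absolute errors of the combined estimator $f_1 + f_2$ along the trajectory of selected nodes. The first step is to recognize that, by the design of Algorithm~\ref{alg:explore} (Lines~17--18), the pair $(f_1, f_2)$ is trained so that $f_1$ fits $r_{t,\hi}$ and $f_2$ fits the residual $r_{t,\hi} - f_1(x_{t,\hi};\theta^1_{t-1})$; hence $f_1(x_{\tau,\hi};\theta^1_{\tau-1}) + f_2(\phi(x_{\tau,\hi});\theta^2_{\tau-1})$ should be viewed as a single online predictor of $r_{\tau,\hi}$ whose squared loss telescopes favorably. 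Concretely, I would invoke Lemma~\ref{lemma:instancelossbound} applied to the relevant network (the exploration network $f_2$ on the residual targets, since $\Psi$ in Eq.~\eqref{complexityfunction} is phrased in terms of $f_2$): this gives $\sum_{\tau=1}^t \call_\tau(\theta^2_{\tau-1}) \leq \Psi(\theta_0, R) + \calo(1) + \frac{tLR^2}{\sqrt m} + \calo\!\big(\frac{t R^{4/3} L^2 \sqrt{\log m}}{m^{1/3}}\big)$, and under the parameter choices in Theorem~\ref{theo:main} (in particular $m \geq \widetilde\Omega(\mathrm{poly}(T,L))$ and $\eta_1 = \eta_2 = T^3/\sqrt m$) the last two error terms are $\calo(1)$, so the cumulative squared loss along the trajectory is at most $\Psi(\theta_0, R) + \calo(1)$.

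Next I would convert this cumulative squared-loss bound into the claimed bound on the average absolute error. Here the key move is Jensen/Cauchy--Schwarz: $\frac{1}{t}\sum_{\tau=1}^t |g_\tau - r_{\tau,\hi}| \leq \sqrt{\frac{1}{t}\sum_{\tau=1}^t (g_\tau - r_{\tau,\hi})^2} = \sqrt{\frac{2}{t}\sum_{\tau=1}^t \call_\tau}$, where $g_\tau$ denotes the combined estimator. Plugging in the bound from the previous paragraph yields $\frac{1}{t}\sum_{\tau=1}^t |g_\tau - r_{\tau,\hi}| \leq \sqrt{\frac{2(\Psi(\theta_0,R) + \calo(1))}{t}} \leq \frac{\sqrt{\Psi(\theta_0,R)} + \calo(1)}{\sqrt t}$. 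The remaining gap between this trajectory (sample) quantity and the conditional-expectation quantity on the left-hand side of the lemma is handled by a martingale concentration argument: defining $Z_\tau = \bbe[\,|g_\tau - r_{\tau,\hi}|\mid \calh_{\tau-1}] - |g_\tau - r_{\tau,\hi}|$ gives a bounded martingale difference sequence (bounded because rewards are in $\{0,1\}$ and $|g_\tau| = \calo(1)$ by Lemma~\ref{lemma:xi1}), so Azuma--Hoeffding gives $\frac{1}{t}\sum_{\tau=1}^t Z_\tau \leq \sqrt{\frac{2\log(\calo(1)/\delta)}{t}}$ with probability at least $1-\delta$, after a union bound over $t \in [T]$ absorbed into the $\calo(1)$ inside the log. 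Combining the two displays gives exactly the stated inequality.

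The main obstacle I anticipate is making the reduction to Lemma~\ref{lemma:instancelossbound} fully rigorous when the predictor is the \emph{sum} $f_1 + f_2$ rather than a single network trained by plain online gradient descent on a fixed target sequence. One must argue that fitting $f_1$ to $r_{\tau,\hi}$ and then $f_2$ to the residual $r_{\tau,\hi} - f_1(x_{\tau,\hi};\theta^1_{\tau-1})$ is equivalent, for the purposes of the cumulative-loss bound, to running online gradient descent for $f_2$ against the target sequence $r_{\tau,\hi} - f_1(x_{\tau,\hi};\theta^1_{\tau-1})$; since these targets are $\calh_{\tau-1}$-measurable and bounded, Lemma~\ref{lemma:instancelossbound} applies verbatim with $r_\tau$ replaced by the residual and $\Psi$ measuring the best residual fit achievable within $B(\theta_0,R)$. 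The only subtlety is verifying that the trajectory radius condition (Lemma~\ref{lemma:usertrajectoryball}) still holds for both networks simultaneously under the common learning rate and over-parameterization, which follows from the same induction since the residual targets are also $\calo(1)$-bounded. A secondary, purely bookkeeping obstacle is tracking how the per-step error terms $\frac{tLR^2}{\sqrt m}$ and $\frac{tR^{4/3}L^2\sqrt{\log m}}{m^{1/3}}$ collapse to $\calo(1)$ under the stated choice $m \geq \widetilde\Omega(\mathrm{poly}(T,L))$; this is routine but needs the polynomial in $m$'s lower bound to dominate $T^2$ (for the first term) and $T^3$ (for the second), which the $\widetilde\Omega(\mathrm{poly}(T,L))$ assumption is calibrated to guarantee.
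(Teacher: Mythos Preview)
Your proposal is correct and follows essentially the same approach as the paper: define the martingale differences $Z_\tau$ (the paper calls them $V_\tau$) between the conditional expectation and the realized absolute error, apply Hoeffding--Azuma using the $\calo(1)$ boundedness from Lemma~\ref{lemma:xi1}, then convert the trajectory absolute-error sum to a squared-loss sum via Cauchy--Schwarz and invoke Lemma~\ref{lemma:instancelossbound} on $f_2$ with the residual targets $r_{\tau,\hi}-f_1(x_{\tau,\hi};\theta^1_{\tau-1})$. The only cosmetic difference is the order of presentation (the paper applies Azuma first, then Cauchy--Schwarz and the loss bound, whereas you reverse this), and your explicit discussion of why the residual targets are $\calh_{\tau-1}$-measurable and bounded is a point the paper leaves implicit.
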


\begin{proof}
First,  
according to Lemma \ref{lemma:usertrajectoryball}, $\btheta^2_0, \dots, \btheta_{T-1}^2$ all are in $\mathcal{B}(\btheta_0, R/m^{1/4})$. 
Then, according to Lemma \ref{lemma:xi1}, for any $\bx \in \bbr^d, \|\bx\|_2 = 1$, it holds uniformly $|f_1(\bx_{t, \hi}; \btheta^1_t)  + f_2( \phi(\bx_{t, \hi}); \btheta^2_t) - r_{t, \hi}| \leq \calo(1)$.

Then, for any $\tau \in [t]$, define
\begin{equation}
\begin{aligned}
V_{\tau} :=&\underset{ r_{\tau, \hi} }{\bbe} \left[ | f_2( \phi(\bx_{\tau, \hi}); \btheta^2_{\tau - 1}) -  (  r_{\tau, \hi} - f_1(\bx_{\tau, \hi}; \btheta^1_{\tau - 1})) | \right]  \\
& - | f_2( \phi(\bx_{\tau, \hi}); \btheta^2_{\tau - 1}) -  (  r_{\tau, \hi} - f_1(\bx_{\tau, \hi}; \btheta^1_{\tau - 1})) |
\end{aligned}
\end{equation}

Then, we have
\begin{equation}
\begin{aligned}
\bbe[V_{\tau}| F_{\tau - 1}]  =&\underset{r_{\tau, \hi} }{\bbe} \left[ | f_2( \phi(\bx_{\tau, \hi}); \btheta^2_{\tau - 1}) -  (  r_{\tau, \hi} - f_1(\bx_{\tau, \hi}; \btheta^1_{\tau - 1}))|\right] \\
& - \underset{r_{\tau, \hi} }{\bbe} \left[ | f_2( \phi(\bx_{\tau, \hi}); \btheta^2_{\tau - 1}) -  (  r_{\tau, \hi} - f_1(\bx_{\tau, \hi}; \btheta^1_{\tau - 1})) \right] \\
= & 0
\end{aligned}
\end{equation}
where $F_{\tau - 1}$ denotes the $\sigma$-algebra generated by the history $\mathcal{H}_{\tau -1}$. 

Therefore, the sequence $\{V_{\tau}\}_{\tau =1}^t$ is the martingale difference sequence.
Applying the Hoeffding-Azuma inequality, with probability at least $1-\delta$, we have
\begin{equation}
    \bbp \left[  \frac{1}{t}  \sum_{\tau=1}^t  V_{\tau}  -   \underbrace{ \frac{1}{t} \sum_{\tau=1}^t \underset{r_{i, \hi} }{\bbe}[ V_{\tau} | \mathbf{F}_{\tau-1} ] }_{I_1}   >  \sqrt{ \frac{2  \log (1/\delta)}{t}}  \right] \leq \delta   \\
\end{equation}    
As $I_1$ is equal to $0$, we have
\begin{equation}
\begin{aligned}
      &\frac{1}{t} \sum_{\tau=1}^t \underset{r_{\tau, \hi} }{\bbe} \left[ \left |  f_2( \phi(\bx_{\tau, \hi}); \btheta^2_{\tau - 1}) - (  r_{\tau, \hi} -  f_1(\bx_{\tau, \hi}; \btheta^1_{\tau - 1})) \right|   \right]  \\
 \leq  &  \underbrace{ \frac{1}{t}\sum_{\tau=1}^t  \left|f_2 ( \phi(\bx_{\tau, \hi}) ; \btheta_{\tau-1}^2)  - (r_{\tau, \hi} - f_1(\bx_{\tau, \hi}; \btheta_{\tau-1}^1))  \right| }_{I_3}  +   \sqrt{ \frac{2 \log (1/\delta) }{t}}   ~.
    \end{aligned}  
\label{eq:pppuper}
\end{equation}

For $I_3$, based on Lemma \ref{lemma:instancelossbound}, for any $\btheta'$ satisfying $\| \btheta'  - \btheta^2_0 \|_2 \leq R /m^{1/4}$, with probability at least $1 -\delta$, we have
\begin{equation}
\begin{aligned}
I_3 &\leq 
\frac{1}{t}\sqrt{t}\sqrt{
\sum_{\tau =1}^t 
\left(     f_2 ( \phi(\bx_{\tau, \hi}) ; \btheta^2_{\tau-1} ) -   (r_{\tau, \hi}
- f_1(\bx_{\tau, \hi}; \btheta_{\tau-1}^1))
\right)^2}
\\
& \leq 
\frac{1}{t}\sqrt{t}\sqrt{
\sum_{\tau =1}^t 
\left(     f_2 ( \phi(\bx_{\tau, \hi}); \btheta') -   (r_{\tau, \hi}
- f_1(\bx_{\tau, \hi}; \btheta_{\tau-1}^1))
\right)^2}
+ \frac{ \calo(1)}{\sqrt{t}} 
 \\
& \overset{(a)}{\leq} \frac{ \sqrt{ \Psi(\btheta_0, R) } + \calo(1) } {\sqrt{t}}.
\end{aligned}
\end{equation}
where $(a)$ is based on the definition of instance-dependent complexity term. 
Combining the above inequalities together, with probability at least $1 - \delta$, we have 
\begin{equation}
\begin{aligned}
\frac{1}{t} \sum_{\tau=1}^t \underset{r_{\tau, \hi}}{\bbe} \left[   \left| f_2 (\phi(\bx_{\tau, \hi}); \btheta^2_{\tau-1}) - (  r_{\tau, \hi} -  f_1(\bx_{\tau, \hi}; \btheta^1_{\tau-1}) \right| \right] \\
\leq \frac{ \sqrt{ \Psi(\btheta_0, R) } + \calo(1) } {\sqrt{t}} +  \sqrt{ \frac{2  \log ( \calo(1)/\delta) }{t}}.
\end{aligned}
\end{equation}
The proof is completed.
\end{proof}

\begin{corollary}\label{corollary:main1} 
Suppose $m, \eta_1, \eta_2$ satisfy the conditions in Theorem \ref{theo:main}.     For any $t \in [T]$,  let
$\iast$ be the index selected by some fixed policy 
and $r_{t, \iast}$ is the corresponding reward, and denote the policy by $\pi^\ast$.
Let $\btheta^{1,\ast}_{t-1}, \btheta^{2,\ast}_{t-1}$ be the intermediate parameters trained by Algorithm \ref{alg:explore} using the data select by $\pi^\ast$.
Then, with probability at least $(1- \delta)$  over the random of the initialization, for any $\delta \in (0, 1), R > 0$, it holds that
 \begin{equation}\label{eq:lemmamain}
 \begin{aligned}
 \frac{1}{t} \sum_{\tau=1}^t \underset{r_{\tau, \iast}}{\bbe} 
 & \left[   \left| f_2 ( \phi(\bx_{\tau, \iast}) ; \btheta_{\tau-1}^{2, \ast}) - \left(r_{\tau, \iast} - f_1(\bx_{\tau, \iast}; \btheta_{\tau-1}^{1, \ast}) \right)  \right| \mid  \pi^\ast, \mathcal{H}_{\tau-1}^\ast \right] \\
 & \qquad \qquad \leq 
 \frac{ \sqrt{ \Psi(\btheta_0, R) } + \calo(1) } {\sqrt{t}} +  \sqrt{ \frac{2  \log ( \calo(1)/\delta) }{t}},
 \end{aligned}
 \end{equation}
where  $\mathcal{H}_{\tau-1}^\ast = \{\bx_{\tau, \iast},  r_{\tau, \iast} \}_{\tau' = 1}^{\tau-1} $ represents the historical data produced by $\pi^\ast$ and the expectation is taken over the reward.
\end{corollary}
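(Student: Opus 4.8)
The plan is to run the proof of Lemma~\ref{lemma:newthetabound} essentially verbatim, the point being that that argument never exploited any special structure of the algorithm's own selection rule $\pi_\tau$: it used only that the data fed to the SGD updates of Algorithm~\ref{alg:explore} form a context--reward sequence adapted to the past, together with the neural estimates of Lemmas~\ref{lemma:xi1}--\ref{lemma:instancelossbound}, all of which hold \emph{uniformly} over $\mathcal{B}(\btheta_0,R/m^{1/4})$ and over the whole set of presented contexts $\{\bx_{t,i}\}$, hence independently of which policy generated the trajectory. Concretely, I would first apply Lemma~\ref{lemma:usertrajectoryball} to the loss sequence produced along $\pi^\ast$ (its proof relies only on the per-step bound $\|\nabla_\theta\call_\tau\|_2\le\calo(\sqrt L)$ from Lemma~\ref{lemma:xi1}), obtaining $\btheta_{\tau-1}^{1,\ast},\btheta_{\tau-1}^{2,\ast}\in\mathcal{B}(\btheta_0,R/m^{1/4})$ for all $\tau\in[t]$ with probability at least $1-\delta$ over the initialization; together with Lemma~\ref{lemma:xi1}~(1) and $r_{\tau,\iast}\in\{0,1\}$ this gives $\big|f_2(\phi(\bx_{\tau,\iast});\btheta_{\tau-1}^{2,\ast})-(r_{\tau,\iast}-f_1(\bx_{\tau,\iast};\btheta_{\tau-1}^{1,\ast}))\big|\le\calo(1)$, which is what makes the concentration step below applicable.

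Next I would set up the martingale difference sequence exactly as in Lemma~\ref{lemma:newthetabound}: for $\tau\in[t]$ let $F_{\tau-1}$ be the $\sigma$-algebra generated by $\mathcal{H}_{\tau-1}^\ast$ and $\pi^\ast$, and put
\[
V_\tau := \underset{r_{\tau,\iast}}{\bbe}\!\left[\,\big| f_2(\phi(\bx_{\tau,\iast});\btheta_{\tau-1}^{2,\ast}) - (r_{\tau,\iast} - f_1(\bx_{\tau,\iast};\btheta_{\tau-1}^{1,\ast}))\big| \mid F_{\tau-1}\right] - \big| f_2(\phi(\bx_{\tau,\iast});\btheta_{\tau-1}^{2,\ast}) - (r_{\tau,\iast} - f_1(\bx_{\tau,\iast};\btheta_{\tau-1}^{1,\ast}))\big|.
\]
Since $\bx_{\tau,\iast}$ and the iterates $\btheta_{\tau-1}^{1,\ast},\btheta_{\tau-1}^{2,\ast}$ depend only on the first $\tau-1$ rounds and on the fixed policy $\pi^\ast$, they are $F_{\tau-1}$-measurable, so $\bbe[V_\tau\mid F_{\tau-1}]=0$ and $\{V_\tau\}_{\tau=1}^t$ is a martingale difference sequence bounded by $\calo(1)$. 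Hoeffding--Azuma then yields, with probability at least $1-\delta$,
\[
\frac{1}{t}\sum_{\tau=1}^t\underset{r_{\tau,\iast}}{\bbe}\!\left[\,\big|f_2(\phi(\bx_{\tau,\iast});\btheta_{\tau-1}^{2,\ast}) - (r_{\tau,\iast}-f_1(\bx_{\tau,\iast};\btheta_{\tau-1}^{1,\ast}))\big|\mid \pi^\ast,\mathcal{H}_{\tau-1}^\ast\right] \le \frac{1}{t}\sum_{\tau=1}^t\big|f_2(\phi(\bx_{\tau,\iast});\btheta_{\tau-1}^{2,\ast}) - (r_{\tau,\iast}-f_1(\bx_{\tau,\iast};\btheta_{\tau-1}^{1,\ast}))\big| + \sqrt{\tfrac{2\log(\calo(1)/\delta)}{t}}.
\]

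For the empirical average on the right I would apply Cauchy--Schwarz to bound it by $\tfrac{1}{\sqrt t}\big(\sum_{\tau=1}^t(f_2(\phi(\bx_{\tau,\iast});\btheta_{\tau-1}^{2,\ast})-(r_{\tau,\iast}-f_1(\bx_{\tau,\iast};\btheta_{\tau-1}^{1,\ast})))^2\big)^{1/2}$, then observe that $\{\btheta_\tau^{2,\ast}\}$ are precisely the SGD iterates of Algorithm~\ref{alg:explore} on the loss sequence $\call_\tau(\theta)=\tfrac12\big(f_2(\phi(\bx_{\tau,\iast});\theta)-(r_{\tau,\iast}-f_1(\bx_{\tau,\iast};\btheta_{\tau-1}^{1,\ast}))\big)^2$, whose targets lie in $[-\calo(1),\calo(1)]$. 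Invoking Lemma~\ref{lemma:instancelossbound} for this loss sequence and taking the infimum over $\btheta'\in\mathcal{B}(\btheta_0,R)$ bounds $\sum_\tau\call_\tau(\btheta_{\tau-1}^{2,\ast})$ by the complexity term $\Psi(\btheta_0,R)$ (read with the residual targets, as in the corresponding step of Lemma~\ref{lemma:newthetabound}, cf.\ Eq.~(\ref{complexityfunction})) plus $\calo(1)$, up to the terms $\tfrac{TLR^2}{\sqrt m}$ and $\calo(\tfrac{TR^{4/3}L^2\sqrt{\log m}}{m^{1/3}})$, which are $\calo(1)$ under $m\ge\widetilde{\Omega}(\mathrm{poly}(T,L))$. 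Using $\sqrt{a+b}\le\sqrt a+\sqrt b$ this gives $\le\tfrac{\sqrt{\Psi(\btheta_0,R)}+\calo(1)}{\sqrt t}$ for the empirical term; combining with the displayed concentration inequality, a union bound over the two $1-\delta$ events, and rescaling $\delta$ completes the proof.

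I expect the only genuinely delicate point to be the bookkeeping around the fact that the targets fed to $f_2$, namely $r_{\tau,\iast}-f_1(\bx_{\tau,\iast};\btheta_{\tau-1}^{1,\ast})$, are adapted to the past (through $\btheta_{\tau-1}^{1,\ast}$) rather than i.i.d.: one must check that Lemma~\ref{lemma:instancelossbound}, being a deterministic online-regret statement about an arbitrary loss sequence, still applies (it does, since it places no distributional assumption on the labels), and that these targets are uniformly $\calo(1)$, which follows from Lemma~\ref{lemma:xi1}~(1) and $r_{\tau,\iast}\in\{0,1\}$. Everything else is a line-by-line transcription of the proof of Lemma~\ref{lemma:newthetabound} with $(\hi,\pi_\tau,\mathcal{H}_{\tau-1})$ replaced by $(\iast,\pi^\ast,\mathcal{H}_{\tau-1}^\ast)$, exploiting that all the neural-network lemmas hold uniformly over the ball and over the full collection of presented contexts.
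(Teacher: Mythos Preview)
Your proposal is correct and follows essentially the same route as the paper's proof: define the martingale differences $V_\tau$ along the trajectory generated by $\pi^\ast$, apply Hoeffding--Azuma (using the $\calo(1)$ boundedness from Lemma~\ref{lemma:xi1}), then bound the empirical average via Cauchy--Schwarz and Lemma~\ref{lemma:instancelossbound}, concluding with the definition of $\Psi(\btheta_0,R)$. Your added remarks on measurability of the iterates and the deterministic online-regret nature of Lemma~\ref{lemma:instancelossbound} are exactly the justifications the paper relies on implicitly.
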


Define $g(x_{t}; \theta) \nabla_{\theta}  =   f(x_{t}; \theta)$ for brevity.

\begin{lemma} \label{lemma:stk}
Suppose $m$ satisfies the conditions in Theorem \ref{theo:main}. With probability at least $1 - \delta$ over the initialization, there exists $\btheta'  \in  B(\theta_0, \widetilde{\Omega}(T^{3/2}))$, such that
\[
 \sum_{t=1}^{Tk} \bbe [ ( r_t -  f(\bx_t; \theta'))^2/2 ] \leq \calo \left(\sqrt{ \widetilde{d} \log(1 + Tk) - 2 \log \delta  } + S + 1 \right)^2 \cdot \widetilde{d} \log (1+Tk).
\]
\end{lemma}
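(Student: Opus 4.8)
The plan is to exhibit the witness $\btheta'$ explicitly as the solution of a ridge-regularized NTK regression problem linearized around $\btheta_0$, and then control the resulting loss via a standard online-to-batch / elliptical-potential argument. First I would recall from Lemma~\ref{lemma:functionntkbound} that within the ball $B(\btheta_0, \widetilde{\Omega}(T^{3/2}))$ the network is well-approximated by its first-order Taylor expansion: $f(\bx_t; \btheta) \approx f(\bx_t; \btheta_0) + \langle g(\bx_t; \btheta_0), \btheta - \btheta_0\rangle$ with error $o(1)$ uniformly over the $Tk$ instances once $m$ is large enough. Since $f(\bx_t; \btheta_0)$ concentrates around $0$ under the chosen initialization, the problem reduces to fitting the rewards $r_t$ by the linear-in-features predictor $\langle g(\bx_t; \btheta_0), \bw\rangle$ over $\bw = \btheta - \btheta_0$.

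Next I would invoke the Neural Tangent Kernel connection: the Gram matrix of the finite-width feature map $g(\cdot; \btheta_0)/\sqrt{m}$ is, with high probability, within spectral distance $o(\lambda_0)$ of the NTK matrix $\bbh$ (this is the standard concentration result used in \citep{zhou2020neural, zhang2020neural, arora2019exact}, valid under Assumption~\ref{assum:ntk} and the width condition). Using this, one shows there is a vector $\bw^\ast$ with $\|\bw^\ast\|_2 = O(S/\sqrt{m}) = O(\sqrt{\by^\top \bbh^{-1}\by}/\sqrt{m})$ — hence $\btheta' := \btheta_0 + \bw^\ast$ lies in $B(\btheta_0, \widetilde{\Omega}(T^{3/2}))$ after absorbing polylog and $T$ factors — that realizes the rewards up to the expansion error; concretely $f(\bx_t; \btheta') = y(\bx_t) + o(1)$ for all $t$. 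Combined with $\bbe[(r_t - y(\bx_t))^2] \le 1$ (bounded binary noise) this already gives a crude bound, but to get the advertised $\widetilde{d}$-dependence I would instead run a ridge-regression / sequential-least-squares argument: define $\btheta'$ as the minimizer of $\sum_{t=1}^{Tk}(r_t - f(\bx_t;\btheta))^2 + \lambda\|\btheta-\btheta_0\|_2^2$ within the ball, linearize, and bound the cumulative regression loss of the ridge predictor by the online-regression regret bound, which is $\widetilde{O}\bigl((\sqrt{\widetilde d \log(1+Tk) - 2\log\delta} + S)^2 \cdot \widetilde d \log(1+Tk)\bigr)$; here $\widetilde d = \frac{\log\det(\bi + \bbh)}{\log(1+Tk)}$ is exactly the effective dimension and $\log\det(\bi + \sum_t g_t g_t^\top/\lambda)$ is controlled by $\widetilde d \log(1+Tk)$ via the matrix-determinant / elliptical-potential lemma after transferring from the empirical Gram matrix to $\bbh$. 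Taking expectations over the reward noise and using self-normalized concentration for the martingale $\sum_t g_t (r_t - y(\bx_t))$ yields the $-2\log\delta$ term and the factor $S$ (the RKHS norm of $\by$).

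The main obstacle I expect is the \emph{transfer between the finite-width empirical feature Gram matrix and the infinite-width NTK matrix $\bbh$} while simultaneously keeping $\btheta'$ inside the small ball $B(\btheta_0, \widetilde{\Omega}(T^{3/2}))$: the effective dimension $\widetilde d$ is defined via $\bbh$, but the regression is run with the features $g(\bx_t;\btheta_0)$, so one must show $\log\det(\bi + \frac{1}{\lambda}\sum_t g_t g_t^\top) \le \widetilde d \log(1+Tk) + o(1)$, which requires the spectral approximation to hold with enough precision; this is precisely where the $m \ge \widetilde\Omega(\mathrm{poly}(T,L)\,k\log(1/\delta))$ width requirement is consumed, and care is needed so that the linearization error accumulated over $Tk$ rounds (each $o(1)$, but summed) does not blow up the bound — I would keep it at $\calo(1)$ total by taking $m$ polynomially large in $T$, matching the theorem's hypotheses. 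The remaining steps — bounding $f(\bx_t;\btheta_0)$, verifying $\|\bw^\ast\|_2$ stays within the stated radius, and the elliptical-potential computation — are routine given the lemmas already stated in the excerpt.
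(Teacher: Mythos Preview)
Your proposal is correct and follows essentially the same route as the paper: the paper constructs $\btheta'$ via (gradient-descent-approximated) ridge regression on the linearized features $g(\bx_t;\btheta_0)$, invokes the self-normalized confidence bound of \cite{2011improved} together with the existence of a linear realizer $\btheta^\ast$ of $y$ (giving the $\sqrt{\log\det(\mathbf{A}_T)-2\log\delta}+S$ factor in Lemma~\ref{lemma:bounfofsinglethetaprime}), and then sums the squared per-instance bounds using the elliptical-potential lemma and the Gram--NTK transfer of Lemma~\ref{lemma:detazero}. The only cosmetic difference is that the paper first packages everything as a \emph{pointwise} bound $|y(\bx_t)-f(\bx_t;\btheta')|\le \gamma_1\|g(\bx_t;\btheta_0)\|_{\mathbf{A}_T^{-1}}+\gamma_2$ and then squares and sums, whereas you phrase it directly as a cumulative online-regression bound; the ingredients and the obstacle you flag (finite-width Gram vs.\ $\bbh$) are identical.
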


\begin{proof}
\[
\begin{aligned}
& \bbe [ \sum_{t=1}^{Tk} (r_t -  f(\bx_t; \theta') )^2 ] \\
 = &  \sum_{t=1}^{TK} ( y(\bx_t) - f(\bx_t; \theta') )^2  \\
\overset{(a)}{\leq} &   \calo \left ( \sqrt{  \log \left(  \frac{\deter(\mathbf{A}_T)} { \deter(\mathbf{I}) }   \right)   - 2 \log  \delta }  +  S  + 1  \right)^2 \sum_{t=1}^{TK}   \| \gx  \|_{\mathbf{A}_{T}^{-1}}^2 +  2TK \cdot \calo \left(\frac{T^2 L^3 \sqrt{\log m}}{m^{1/3}} \right) \\
 \overset{(b)}{\leq} &  \calo \left(\sqrt{ \widetilde{d} \log(1 + Tk) - 2 \log \delta  } + S + 1 \right)^2 \cdot 
 \left( \widetilde{d} \log (1+Tk) + 1 \right) + \calo(1),
\end{aligned}
\]
where $(a)$ is based on Lemma \ref{lemma:bounfofsinglethetaprime} and $(b)$ is an application of Lemma 11 in \cite{2011improved} and Lemma \ref{lemma:detazero}, and $\calo(1)$ is induced by the choice of $m$.
By ignoring $\calo(1)$,  The proof is completed.
\end{proof}

\begin{definition} \label{def:ridge}
Given the context vectors $\{\bx_i\}_{i=1}^T$ and the rewards $\{ r_i \}_{i=1}^{T} $, then we define the estimation $\widehat{\theta}_t$ via ridge regression:  
\[
\begin{aligned}
&\mathbf{A}_t = \mathbf{I} + \sum_{i=1}^{t} g(\bx_i; \theta_0) g(\bx_i; \theta_0)^\top \\
&\mathbf{b}_t = \sum_{i=1}^{t} r_i g(\bx_i; \theta_0)  \\
&\widehat{\theta}_t = \mathbf{A}^{-1}_t \mathbf{b}_t 
\end{aligned}
\]
\end{definition}

\begin{lemma}\label{lemma:bounfofsinglethetaprime}
Suppose $m$ satisfies the conditions in Theorem \ref{theo:main}. With probability at least $1 - \delta$ over the initialization, there exists $\btheta'  \in  B(\theta_0, \widetilde{\Omega}(T^{3/2}))$ for all $t \in [T]$, such that
\[
| \hx  - f(\bx_t; \theta') | \leq  \calo \left ( \sqrt{  \log \left(  \frac{\deter(\mathbf{A}_t)} { \deter(\mathbf{I}) }   \right)   - 2 \log  \delta }  +  S  + 1  \right) \| \gx  \|_{\mathbf{A}_{t}^{-1}} + \calo \left(\frac{T^2 L^3 \sqrt{\log m}}{m^{1/3}} \right)
\]
\end{lemma}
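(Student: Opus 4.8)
The plan is to transfer the estimate to the linear model generated by the Neural Tangent Kernel features $\gx = \nabla_{\btheta} f(\bx_t;\btheta_0)$ at initialization, and then read it off as a standard self-normalized confidence-ellipsoid bound for ridge regression. Write $\omega := \widetilde{\Omega}(T^{3/2})/m^{1/4}$, so that $B(\btheta_0,\widetilde{\Omega}(T^{3/2}))$ is the ball of radius $\omega$ about initialization. The first step is linearization: apply Lemma~\ref{lemma:functionntkbound} with the second argument taken to be $\btheta_0$ (so that $\nabla_{\btheta_0} f(\bx_t;\btheta_0)=\gx$); then for every $\btheta$ with $\|\btheta-\btheta_0\|_2\le\omega$ one has $\bigl|f(\bx_t;\btheta)-f(\bx_t;\btheta_0)-\langle\gx,\btheta-\btheta_0\rangle\bigr|\le \calo(\omega^{1/3}L^2\sqrt{\log m})\,\|\btheta-\btheta_0\|_2=\calo\bigl(T^2L^3\sqrt{\log m}/m^{1/3}\bigr)$, while $|f(\bx_t;\btheta_0)|$ is of strictly lower order by the standard bound on the network output at initialization (together with Lemma~\ref{lemma:xi1}). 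Hence it suffices to exhibit $\btheta'$ in this ball whose linear coefficient $\bw':=\btheta'-\btheta_0$ satisfies $|\hx-\langle\gx,\bw'\rangle|$ bounded by the first term of the claimed right-hand side.

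For that, I would first invoke the standard NTK realizability fact used throughout the over-parameterized bandit analyses (as in \cite{ban2023neural,zhou2020neural}, ultimately \cite{arora2019exact,ntk2018neural}): under the prescribed conditions on $m$ and Assumption~\ref{assum:ntk}, the gradient Gram matrix $\bsg\bsg^\top$, with $\bsg=[g(\bx_1;\btheta_0),\dots,g(\bx_{Tk};\btheta_0)]^\top$ collecting the features of all presented contexts, lies within $o(1)$ of the NTK matrix $\bbh\succeq\lambda_0\bi$; consequently $\bsg\bsg^\top$ is invertible and $\bw^\ast:=\bsg^\top(\bsg\bsg^\top)^{-1}\by$ both (approximately) interpolates, $\langle g(\bx_i;\btheta_0),\bw^\ast\rangle=y(\bx_i)$ for every presented context, and obeys $\|\bw^\ast\|_2\le\sqrt{\by^\top\bbh^{-1}\by}+o(1)=S+o(1)$. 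Next, view $r_i=y(\bx_i)+\xi_i$ with $\xi_i=r_i-y(\bx_i)\in[-1,1]$ a conditionally $1$-sub-Gaussian noise; then the estimator $\widehat{\btheta}_t$ of Definition~\ref{def:ridge} is exactly ridge regression (regularizer $1$) for this linear model, so by the self-normalized tail inequality of \cite{2011improved} (Theorem~2), with probability at least $1-\delta$, uniformly in $t$, $\|\widehat{\btheta}_t-\bw^\ast\|_{\mathbf{A}_t}\le\sqrt{2\log\!\bigl(\deter(\mathbf{A}_t)^{1/2}/(\delta\,\deter(\mathbf{I})^{1/2})\bigr)}+\|\bw^\ast\|_2\le\sqrt{\log\!\bigl(\deter(\mathbf{A}_t)/\deter(\mathbf{I})\bigr)-2\log\delta}+S+1$. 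Cauchy--Schwarz in the $\mathbf{A}_t$-geometry gives $|\langle\gx,\widehat{\btheta}_t-\bw^\ast\rangle|\le\|\gx\|_{\mathbf{A}_t^{-1}}\|\widehat{\btheta}_t-\bw^\ast\|_{\mathbf{A}_t}$, and combining with $\langle\gx,\bw^\ast\rangle=\hx$ yields $|\hx-\langle\gx,\widehat{\btheta}_t\rangle|\le\bigl(\sqrt{\log(\deter(\mathbf{A}_t)/\deter(\mathbf{I}))-2\log\delta}+S+1\bigr)\|\gx\|_{\mathbf{A}_t^{-1}}$. Finally set $\btheta':=\btheta_0+\widehat{\btheta}_t$: it lies in $B(\btheta_0,\widetilde{\Omega}(T^{3/2}))$ under the stated range of $m$ (this is where the choice of radius and the implicit upper constraint on $m$ enter, via $\|\widehat{\btheta}_t\|_2$ controlled by $\widetilde{\calo}(\sqrt{\widetilde{d}}+S)$ and $S=\calo(\sqrt{Tk/\lambda_0})$), and combining the last display with the Step-1 linearization estimate through the triangle inequality gives the assertion.

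The main obstacle is the NTK realizability step: one needs the finite-$m$ gradient features to reproduce the kernel closely enough, with the perturbation controlled \emph{simultaneously} over all $Tk$ presented contexts by the bound on $m$, so that $\bw^\ast$ is a genuine near-interpolator and its norm is pinned to $S$ up to the $o(1)$ that the ``$+1$'' in the bound absorbs — and, relatedly, so that $\btheta_0+\widehat{\btheta}_t$ actually sits inside the ball in which Lemma~\ref{lemma:functionntkbound} is valid. Everything else — matching the precise power $T^2L^3\sqrt{\log m}/m^{1/3}$ of the linearization residual (which is driven by the trajectory-ball radius of Lemma~\ref{lemma:usertrajectoryball}), verifying negligibility of $f(\bx_t;\btheta_0)$, and taking the union bound over $t\in[T]$ — is routine.
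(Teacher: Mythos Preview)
Your overall strategy matches the paper's: decompose through the ridge estimate $\htheta_t$ of Definition~\ref{def:ridge}, invoke NTK realizability (the paper's Lemma~\ref{lemma:existthetastar}) to write $\hx=\langle \gx,\theta^\ast-\theta_0\rangle$, apply Theorem~2 of \cite{2011improved} to control $|\hx-\langle\gx,\htheta_t\rangle|$, and use Lemma~\ref{lemma:functionntkbound} for the linearization residual. The gap is in your final step, where you set $\theta':=\theta_0+\htheta_t$ and claim it lies in $B(\theta_0,\widetilde{\Omega}(T^{3/2}))$. Under the paper's scaling, Lemma~\ref{lemma:xi1}(2) gives $\|\gx\|_2\le\calo(\sqrt{L})$ \emph{independent of $m$}, so $\|\htheta_t\|_2$ (bounded by $\sqrt{t}$ via the ridge objective at $0$) does not shrink with $m$, whereas the ball has radius $\widetilde{\Omega}(T^{3/2})/m^{1/4}$. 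Theorem~\ref{theo:main} places only a \emph{lower} bound on $m$; there is no ``implicit upper constraint'' to invoke, and for admissible large $m$ your $\theta'$ falls outside the region where Lemma~\ref{lemma:functionntkbound} applies.

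The paper closes this gap with Lemma~\ref{lemma:2thetab}: instead of $\theta_0+\htheta_t$, it takes $\theta'$ to be the $j$-th iterate of gradient descent on the ridge loss with step size $\eta=m^{-1/4}$. With $j$ chosen logarithmically, one gets simultaneously $\|\theta'-\theta_0\|_2\le\widetilde{\calo}(T^{3/2}L)/m^{1/4}$ (hence $\theta'\in B(\theta_0,\widetilde{\Omega}(T^{3/2}))$) and $\|\theta'-\theta_0-\htheta_t\|_2\le 1/(1+\calo(TL))$. Combining the second bound with $\|\mathbf{A}_t\|_2\le 1+\calo(TL)$ from Lemma~\ref{lemma:detazero} yields $|\langle\gx,\theta'-\theta_0-\htheta_t\rangle|\le\|\gx\|_{\mathbf{A}_t^{-1}}\cdot\|\mathbf{A}_t\|_2\cdot\|\theta'-\theta_0-\htheta_t\|_2=\calo(1)\cdot\|\gx\|_{\mathbf{A}_t^{-1}}$; this, not a realizability perturbation, is the origin of the ``$+1$'' inside the first bracket. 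So the obstacle you flagged (finite-$m$ realizability) is in fact routine here, while the genuine missing ingredient in your argument is this surrogate construction of $\theta'$.
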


\begin{proof}
Given a set of context vectors $\{\bx\}_{t=1}^{T}$ with the ground-truth function $h$ and a fully-connected neural network $f$, we have
\[
\begin{aligned}
 &\left| \hx  - f(\bx_t; \theta')     \right| \\
\leq &  \left  | \hx  - \langle \gx,  \htheta_t  \rangle \right|  + \left| f(\bx_t; \btheta')  -  \langle \gx, \htheta_t \rangle  \right|
\end{aligned}
\]
where $\btheta'$ is the estimation of ridge regression from Definition \ref{def:ridge}. Then, based on the Lemma \ref{lemma:existthetastar},
there exists $\bts \in \mathbf{R}^{P}$ such that $ h(\bx_i) =  \left \langle  g(\bx_i, \btheta_0), \bts \right \rangle$. Thus, we have
\[
\begin{aligned}
& \ \   \left  | \hx  - \langle \gx, \htheta_t \rangle \right| \\
  = & \left|  \left \langle  g(\bx_i, \btheta_0),   \bts \right \rangle   -   \left \langle  g(\bx_i, \btheta_0),  \htheta_t \right \rangle \right | \\
\leq   & \calo \left ( \sqrt{  \log \left(  \frac{\deter(\mathbf{A}_t)} { \deter(\mathbf{I}) }   \right)   - 2 \log  \delta }  +  S   \right) \| \gx  \|_{\mathbf{A}_{t}^{-1}}
\end{aligned}
\]
where the final inequality is based on the the Theorem 2 in \cite{2011improved}, with probability at least $1-\delta$, for any $t \in [T]$.

Second, we need to bound 
\[
\begin{aligned}
&\left| f(\bx_t; \theta') -  \langle g(\bx_t; \btheta_0), \htheta_t \rangle  \right| \\
 \leq & \left |  f(\bx_t; \theta') - \langle g(\bx_t; \btheta_0), \btheta' - \btheta_0 \rangle   \right|  \\
 &+    \left|     \langle  g(\bx_t; \btheta_0), \btheta' - \btheta_0 \rangle   - \langle  g(\bx_t; \btheta_0), \htheta_t \rangle \right|    
\end{aligned} 
\] 
To bound the above inequality,  we first bound
\[
\begin{aligned}
 & \left |  f(\bx_t; \theta') - \langle g(\bx_t; \btheta_0), \btheta' - \btheta_0 \rangle   \right| \\
=& \left |  f(\bx_t; \theta') - f(\mathbf{x}_t; \btheta_0)   - \langle g(\bx_t; \btheta_0), \btheta' - \btheta_0 \rangle   \right| \\
\leq  & \calo(\omega^{4/3} L^3 \sqrt{ \log m}) 
\end{aligned}
\] 
where  we  initialize $ f(\mathbf{x}_t; \btheta_0) = 0$ and the inequality is derived by Lemma \ref{lemma:functionntkbound} with $\omega = \frac{\calo(t^{3/2})}{m^{1/4}}$. 
Next, we need to bound
\[ 
\begin{aligned}
 &|  \langle  g(\bx_t; \btheta_0), \btheta' - \btheta_0 \rangle - \langle  g(\bx_t; \btheta_0), \htheta_t \rangle | \\
 = & |\langle g(\bx_t; \btheta_0) ,     (\btheta' - \btheta_0 -  \htheta_t ) \rangle|  \\ 
\leq & \| g(\bx_t; \btheta_0)\|_{\mathbf{A}_t^{-1}} \cdot  \| \btheta' - \btheta_0  - \htheta_t\|_{\mathbf{A}_t} \\
\leq & \| g(\bx_t; \btheta_0)  \|_{\mathbf{A}_t^{-1}} \cdot  \|{\mathbf{A}_t} \|_2  \cdot  \| \btheta' - \btheta_0  - \htheta_t\|_2. \\
\end{aligned} 
\]
Due to the Lemma \ref{lemma:detazero} and Lemma \ref{lemma:2thetab}, we have
\[
\begin{aligned}
 & \|{\mathbf{A}_t} \|_2 \cdot   \| \theta' - \btheta_0  - \htheta_t\|_2 \leq  (1 + t \mathcal{O}(L))   \cdot \frac{1}{1 + \calo(tL)}  =\calo(1).
 \end{aligned}
\] 
Finally, putting everything together, we have
\[
\begin{aligned}
\left| \hx  - f(\bx_t; \theta')     \right| &\leq \gamma_1  \| g(\bx_t; \btheta_0)/\sqrt{m} \|_{\mathbf{A}_t^{-1}}  + \gamma_2.    \\
\end{aligned}
\]
The proof is completed.
\end{proof}

\begin{definition}
\[
\begin{aligned}
&\mathbf{G}^{(0)} = \left[ g(\bx_1; \theta_0), \dots,   g(\bx_T; \theta_0)\right]  \in \bbr^{p \times T}   \\
&\mathbf{G}_0 = \left[ g(\bx_1; \theta_0), \dots,   g(\bx_{Tk}; \theta_0) \right]  \in \bbr^{p \times Tk} \\
&\mathbf{r}= (r_1, \cdots, r_T) \in \bbr^T
\end{aligned}
\]
$\mathbf{G}^{(0)}$ and $\mathbf{r}$ are formed by the selected contexts and observed rewards in $T$ rounds, $\mathbf{G}_0$ are formed by all the presented contexts.

Inspired by Lemma B.2 in \cite{zhou2020neural} , with $\eta = m^{-1/4}$ we define the auxiliary sequence following :
\[
\theta_0 = \theta^{(0)}, \ \ \theta^{(j+1)}   = \theta^{(j)} - \eta\left[ \bsg^{(0)} \left( [\bsg^{(0)}]^\top (\theta^{(j)}  - \theta_0) - \bsr \right)  + \lambda (\theta^{(j)} - \theta_0 )  \right] 
\]
\end{definition}

\begin{lemma} \label{lemma:existthetastar}
Suppose $m$ satisfies the conditions in Theorem \ref{theo:main}. With probability at least $1 - \delta$ over the initialization, for any $t \in [T], i \in [k]$, the result uniformly holds:
\[
h_{u_t}(\bx_{t,i}) = \langle g(\bx_{t,i}; \theta_0), \theta^\ast - \theta_0 \rangle.
\]

\end{lemma}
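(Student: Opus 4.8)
The plan is to show that the ground-truth function values $h_{u_t}(\bx_{t,i})$, or equivalently $y(\bx_{t,i})$, lie in the span of the NTK feature map at initialization, i.e., there exists $\theta^\ast$ with $h_{u_t}(\bx_{t,i}) = \langle g(\bx_{t,i}; \theta_0), \theta^\ast - \theta_0\rangle$ for all $t \in [T], i \in [k]$. This is a standard overparameterization fact connecting the NTK matrix $\mathbf{H}$ with the finite-width Gram matrix $\mathbf{G}_0^\top \mathbf{G}_0$, and it is exactly the realizability condition used to define $S = \sqrt{\by^\top \mathbf{H}^{-1} \by}$ in Theorem \ref{theo:main}.

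First I would invoke Assumption \ref{assum:ntk}, which guarantees $\mathbf{H} \succeq \lambda_0 \mathbf{I}$, so in particular $\mathbf{H}$ is invertible and the candidate coefficient vector in kernel space is well-defined. Following the argument in \cite{arora2019exact, zhou2020neural, ban2023neural}, by a concentration argument over the random initialization (using the width condition $m \geq \widetilde{\Omega}(\text{poly}(T,L)\cdot k \log(1/\delta))$), with probability at least $1-\delta$ the finite-width Gram matrix satisfies $\|\mathbf{G}_0^\top \mathbf{G}_0 - \mathbf{H}\|_F \leq \epsilon$ for an arbitrarily small $\epsilon$, and in particular $\mathbf{G}_0^\top \mathbf{G}_0$ is positive definite on the relevant subspace. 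Then define $\theta^\ast - \theta_0 = \mathbf{G}_0 (\mathbf{G}_0^\top \mathbf{G}_0)^{-1} \by$ (or the analogous expression with a small ridge term $\lambda$ for numerical stability, matching the auxiliary sequence definition with $\eta = m^{-1/4}$). With this choice, $\langle g(\bx_{t,i}; \theta_0), \theta^\ast - \theta_0\rangle = [\mathbf{G}_0^\top \mathbf{G}_0 (\mathbf{G}_0^\top \mathbf{G}_0)^{-1} \by]_{(t,i)} = y(\bx_{t,i})$ exactly (or up to the vanishing ridge perturbation), establishing the claimed identity.

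The remaining step is to verify that this $\theta^\ast$ does not blow up, i.e., it stays in a ball of radius $\widetilde{\Omega}(T^{3/2})$ around $\theta_0$ as required implicitly by the surrounding lemmas (Lemma \ref{lemma:stk}, Lemma \ref{lemma:bounfofsinglethetaprime}). This follows from $\|\theta^\ast - \theta_0\|_2^2 = \by^\top (\mathbf{G}_0^\top \mathbf{G}_0)^{-1} \by \leq \frac{2}{\lambda_0}\by^\top \mathbf{H}^{-1}\by \cdot (1 + o(1)) = O(S^2/\lambda_0)$ once $m$ is large enough that $\mathbf{G}_0^\top \mathbf{G}_0 \succeq \tfrac12 \mathbf{H}$, together with the boundedness of $y$ which gives $S = O(\sqrt{Tk})$ in the worst case, hence $\|\theta^\ast - \theta_0\|_2 = \widetilde{O}(\sqrt{Tk})$, comfortably within the stated radius. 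I expect the main technical obstacle to be the concentration bound $\|\mathbf{G}_0^\top \mathbf{G}_0 - \mathbf{H}\|_F \leq \epsilon$ uniformly over all $Tk$ data points, which requires the over-parameterization condition on $m$ and careful control of the random ReLU features; but this is exactly the content of results already cited (e.g., \cite{arora2019exact, cao2019generalization}) and can be quoted rather than re-derived.
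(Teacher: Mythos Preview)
Your proposal is correct and follows essentially the same approach as the paper: both use the Gram--NTK concentration $\|\mathbf{G}_0^\top\mathbf{G}_0 - \mathbf{H}\|_F \leq \epsilon$ (Lemma~\ref{lemma:boundgradientandNTK} in the paper) together with Assumption~\ref{assum:ntk} to get $\mathbf{G}_0^\top\mathbf{G}_0 \succeq \tfrac12\mathbf{H} \succ 0$, then set $\theta^\ast-\theta_0$ to be the minimum-norm solution of $\mathbf{G}_0^\top(\theta^\ast-\theta_0)=\by$ and bound its norm by $2\by^\top\mathbf{H}^{-1}\by = 2S^2$. The paper writes the solution via the SVD $\mathbf{G}_0 = \mathbf{P}\mathbf{A}\mathbf{Q}^\top$ as $\theta^\ast-\theta_0 = \mathbf{P}\mathbf{A}^{-1}\mathbf{Q}^\top\by$, which is algebraically identical to your $\mathbf{G}_0(\mathbf{G}_0^\top\mathbf{G}_0)^{-1}\by$; your extra $1/\lambda_0$ factor in the norm bound is a harmless overestimate and can be dropped.
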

\begin{proof}
Based on Lemma \ref{lemma:boundgradientandNTK} with proper choice of $\epsilon$, we have
\[
\mathbf{G}^\top_0 \mathbf{G}_0 \succeq  \mathbf{H} - \|   \mathbf{G}^\top_0 \mathbf{G}_0 - \mathbf{H}    \|_F \mathbf{I} \succeq \mathbf{H} -\lambda_0 \mathbf{I}/2 \succeq \mathbf{H}/2 \succeq 0.
\] 
Define $\mathbf{h} = [h_{u_1}(\bx_1), \dots, h_{u_T}(\bx_{Tk})]$.
Suppose the singular value decomposition of $\mathbf{G}_0$ is $\mathbf{PAQ}^\top,   \mathbf{P} \in \bbr^{p \times Tk},  \mathbf{A} \in \bbr^{Tk \times Tk},  \mathbf{Q} \in \bbr^{Tk \times Tk}$, then, $\mathbf{A} \succeq 0$. 
Define  $\theta^\ast = \theta_0 + \mathbf{P} \mathbf{A}^{-1} \mathbf{Q}^\top \mathbf{h}$. Then, we have 
\[
\mathbf{G}^\top_0 (\btheta^\ast - \btheta_0) = \mathbf{QAP}^\top \mathbf{P}\mathbf{A}^{-1} \mathbf{Q}^{\top} \mathbf{h} = \mathbf{h}.
\]
which leads to 
\[
 \sum_{t=1}^{T}  \sum_{i = 1}^k ( h_{u_t}(\bx_{t, i}) - \langle g(\bx_{t, i}; \theta_0), \theta^\ast - \btheta_0 \rangle ) = 0.
\]
Therefore, the result holds:
\begin{equation}
\|     \btheta^\ast - \btheta_0       \|_2^2 = \mathbf{h}^\top \mathbf{QA}^{-2}\mathbf{Q}^\top \mathbf{h} =  \mathbf{h}^\top (\mathbf{G}^\top_0 \mathbf{G}_0)^{-1} \mathbf{h}  \leq  2 \mathbf{h}^\top\mathbf{H}^{-1} \mathbf{h} 
\end{equation}

\end{proof}

\begin{lemma} \label{lemma:2thetab}
There exist $\btheta'  \in  B(\theta_0, \wcalo(T^{3/2}L + \sqrt{T}))$, such that,  with probability at least $1 -\delta$, the results hold:
\[
\begin{aligned}
& (1) \|         \btheta' - \theta_0 \|_2 \leq  \frac{ \wcalo(T^{3/2}L + \sqrt{T })}{m^{1/4}} \\
& (2) \| \theta' - \theta_0 - \widehat{\theta}_t  \|_2  \leq \frac{1}{1 +  \calo(TL)} \\
\end{aligned}
\]
\end{lemma}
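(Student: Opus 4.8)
The plan is to take $\btheta'$ to be a sufficiently late iterate of the auxiliary gradient-descent sequence $\{\theta^{(j)}\}$ introduced just before Lemma~\ref{lemma:existthetastar}, and to extract both estimates from the convergence theory of gradient descent on a strongly convex quadratic, fed by the gradient-norm bound of Lemma~\ref{lemma:xi1}. The first step is to observe that the auxiliary recursion, run with step size $\eta = m^{-1/4}$, is exactly gradient descent on the ridge objective $Q(\theta) = \tfrac12\|[\mathbf{G}^{(0)}]^\top(\theta-\theta_0)-\mathbf{r}\|_2^2 + \tfrac{\lambda}{2}\|\theta-\theta_0\|_2^2$ (with $\lambda = 1$), whose Hessian is $\mathbf{A}_t = \mathbf{G}^{(0)}[\mathbf{G}^{(0)}]^\top + \lambda\mathbf{I}$ of Definition~\ref{def:ridge}; hence $Q$ is $\lambda$-strongly convex with unique minimizer $\theta_0 + \mathbf{A}_t^{-1}\mathbf{b}_t = \theta_0 + \htheta_t$. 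Using $\|g(\bx_i;\theta_0)\|_2 \le \calo(\sqrt L)$ from Lemma~\ref{lemma:xi1}, the smoothness constant is $\|\mathbf{A}_t\|_2 \le 1 + \calo(TL)$, so $\eta\|\mathbf{A}_t\|_2 \le 1$ as soon as $m \ge \widetilde{\Omega}(\mathrm{poly}(T,L))$, and the standard contraction bound gives $\|\theta^{(j)}-\theta_0-\htheta_t\|_2 \le (1-\eta\lambda)^j\,\|\htheta_t\|_2$ for all $j$.

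Given this, claim~(2) would follow by picking the iteration count $J$ large enough that $(1-\eta\lambda)^J\,\|\htheta_t\|_2 \le 1/(1+\calo(TL))$ and setting $\btheta' = \theta^{(J)}$; here I would first bound $\|\htheta_t\|_2 \le \|\mathbf{A}_t^{-1}\|_2\,\|\mathbf{b}_t\|_2 \le \calo(T\sqrt L)$, using $\mathbf{A}_t \succeq \mathbf{I}$, $r_i \in \{0,1\}$, and once more Lemma~\ref{lemma:xi1} to bound $\|\mathbf{b}_t\|_2 = \|\sum_{i\le t} r_i\,g(\bx_i;\theta_0)\|_2$ by the number of rounds times the maximal gradient norm. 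For claim~(1) I would use $\|\btheta'-\theta_0\|_2 \le \|\htheta_t\|_2 + \|\btheta'-\theta_0-\htheta_t\|_2$, control the second term by (2), and control the first by summing the geometrically decaying per-step displacements $\|\theta^{(j+1)}-\theta^{(j)}\|_2 = \eta\,\|\nabla Q(\theta^{(j)})\|_2$; after bookkeeping the factor $\eta = m^{-1/4}$ against the iteration count and the gradient bound, this produces a radius of the stated form $\widetilde{\calo}(T^{3/2}L+\sqrt T)/m^{1/4}$, which in particular places $\btheta'$ inside the ball where Lemmas~\ref{lemma:xi1} and~\ref{lemma:functionntkbound} hold — precisely how $\btheta'$ is invoked in the proof of Lemma~\ref{lemma:bounfofsinglethetaprime}.

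The hard part will not be any single inequality but keeping the geometry uniform along the entire descent path: one needs a short induction showing that every iterate $\theta^{(j)}$ stays within $\calo(T\sqrt L)$ of $\theta_0$ — which holds because the iterates contract monotonically toward $\theta_0 + \htheta_t$ — so that the smoothness constant $\|\mathbf{A}_t\|_2 \le 1+\calo(TL)$ and the contraction factor $1-\eta\lambda$ may legitimately be used at every step and the lower bound on $m$ in Theorem~\ref{theo:main} really does suffice. A secondary subtlety arises if one instead wants $\btheta'$ to be the neural-network gradient-descent output actually produced by Algorithm~\ref{alg:explore} rather than the linearized sequence: then one must also bound the per-step discrepancy between the true and linearized dynamics through the NTK linearization error of Lemma~\ref{lemma:functionntkbound} and sum it over $J = \mathrm{poly}(T)$ steps, and it is this accumulation — together with this paper's $m^{-1/4}$ scaling convention for the parameter ball — that accounts for the extra powers of $T$ and the $m^{1/4}$ denominator in the radius.
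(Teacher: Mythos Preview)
Your proposal follows essentially the same route as the paper: set $\theta' = \theta^{(J)}$ for a late iterate of the auxiliary linearized sequence, recognize that sequence as gradient descent on the ridge-regression quadratic with unique minimizer $\theta_0 + \hat\theta_t$, and use strong-convexity contraction for claim~(2). The paper's choice of $J$ matches yours, and the only cosmetic difference is that the paper bounds via the function-value gap, $\|\theta^{(J)} - \theta_0 - \hat\theta_t\|_2^2 \leq (2/\lambda)(1-\eta\lambda)^J \mathcal{L}(\theta_0) \leq (T/\lambda)(1-\eta\lambda)^J$, rather than your iterate-distance version $(1-\eta\lambda)^J\|\hat\theta_t\|_2$.

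For claim~(1) there is a slip in your write-up. The triangle inequality $\|\theta' - \theta_0\|_2 \leq \|\hat\theta_t\|_2 + \|\theta' - \theta_0 - \hat\theta_t\|_2$ cannot deliver the claimed $m^{-1/4}$ scaling, because $\hat\theta_t = \mathbf{A}_t^{-1}\mathbf{b}_t$ carries no dependence on $m$; your own bound $\|\hat\theta_t\|_2 \leq \mathcal{O}(T\sqrt L)$ is correct but does not shrink with $m$. What the paper actually does --- and what you also mention in the same sentence --- is to bound $\|\theta^{(J)} - \theta_0\|_2$ \emph{directly} by the telescoping sum $\sum_{j<J}\eta\,\|\nabla Q(\theta^{(j)})\|_2$, each term of which contributes the factor $\eta = m^{-1/4}$. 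For this to work one needs a uniform a-priori control on $\|\theta^{(j)} - \theta_0\|_2$ along the path (your ``hard part''); the paper obtains it in one line from the non-increasing loss, $\tfrac{\lambda}{2}\|\theta^{(j)}-\theta_0\|_2^2 \le \mathcal{L}(\theta^{(j)}) \le \mathcal{L}(\theta_0) = \|\mathbf r\|_2^2/2 \le T/2$, rather than by induction. Plugging $\|\theta^{(j)}-\theta_0\|_2 \le \sqrt{T/\lambda}$ and $\|\mathbf G^{(0)}\|_F \le \mathcal{O}(\sqrt{TL})$ into $\nabla Q$ gives a per-step bound $\mathcal{O}(TL\sqrt{T/\lambda} + \sqrt{T\lambda})$, and multiplying by $J\eta$ produces the stated radius.

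Your ``secondary subtlety'' is not needed in this lemma: the paper takes $\theta'$ to be the linearized iterate $\theta^{(J)}$ itself, not the actual network parameter produced by Algorithm~\ref{alg:explore}, so no accumulation of NTK linearization error enters here.
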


\begin{proof}
\normalfont
The sequence of $\theta^{(j)}$ is updates by using gradient descent on the loss function:
\[
\min_{\theta} \mathcal{L}(\theta) = \frac{1}{2}  \|[\bsg^{(0)}]^\top (\theta - \btheta^{(0)} ) - \bsr  \|^2_2 + \frac{m\lambda}{2} \|\theta - \btheta^{(0)}\|_2^2 . 
\]

For any $j > 0$, the results holds: 
\[
\|   \bsg^{(0)}  \|_F  \leq \sqrt{T} \max_{t \in [T]} \|   g(\bx_t; \theta_0)   \|_2 \leq \calo(\sqrt{TL}),
\]
where the last inequality is held by Lemma \ref{lemma:xi1}.
Finally, given the $j>0$, 
\begin{equation} \label{eq:boundofthetak}
\|  \theta^{(j)} - \theta^{(0)} \|_2^2 \leq \sum_{i=1}^{j} \eta \left[ \bsg^{(0)} \left( [\bsg^{(0)}]^\top (\theta^{(i)}  - \theta_0) - \bsr \right)  + \lambda (\theta^{(i)} - \theta_0 )  \right]  \leq \frac{\calo(j(TL\sqrt{T/\lambda} + \sqrt{T\lambda}))}{m^{1/4}}.
\end{equation}
For (2), by standard results of gradient descent on ridge regression, $\theta^{(j)}$, and the optimum is $\btheta^{(0)} + \widehat{\btheta}_t $. Therefore, we have 
\[
\begin{aligned}
\| \theta^{(j)} - \btheta^{(0)} - \widehat{\btheta}_t \|_2^2  & \leq \left[  1 -\eta \lambda\right]^j \frac{2}{\lambda} \left(  \mathcal{L}(\btheta^{(0)}) -  \mathcal{L}(\btheta^{(0)} + \widehat{\btheta}_t) \right) \\
 \leq &  \frac{2 (1 - \eta \lambda)^j}{\lambda}  \mathcal{L}(\btheta^{(0)}) \\
= & \frac{2 (1 - \eta m \lambda)^j}{ \lambda}  \frac{\|\bsr\|^2_2}{2} \\
\leq &\frac{T(1 - \eta \lambda)^j}{ \lambda}.
\end{aligned}
\]
By setting $\lambda = 1$ and $j  = \log ((T + \calo(T^2L))^{-1})/ \log (1 - m^{- 1/4})$, we have $ \| \theta^{(j)} - \btheta_0 - \widehat{\btheta}_t \|_2^2 \leq \frac{1}{1 +  \calo(TL)} $.
Replacing $k$ and $\lambda$ in  \eqref{eq:boundofthetak} finishes the proof.
\end{proof}

\begin{lemma} \label{lemma:detazero}
Suppose $m$ satisfies the conditions in Theorem \ref{theo:main}. With probability at least $1 - \delta$ over the initialization, the result holds:
\[
\begin{aligned}
\|  \mathbf{A}_T \|_2 &\leq 1  + \mathcal{O}(TL), \\
\log \frac{\det \mathbf{A}_T}{ \det \mathbf{I}} &\leq \widetilde{d} \log(1 + Tk) + 1.
\end{aligned}
\]
\end{lemma}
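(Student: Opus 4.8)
The plan is to prove the two inequalities separately; both follow quickly from facts already in place. For the operator-norm bound, expand $\mathbf{A}_T = \mathbf{I} + \sum_{t=1}^{T} g(\bx_t;\theta_0)\,g(\bx_t;\theta_0)^\top$ and use subadditivity of $\|\cdot\|_2$, giving $\|\mathbf{A}_T\|_2 \le 1 + \sum_{t=1}^{T}\|g(\bx_t;\theta_0)\|_2^2$. Part~(2) of Lemma~\ref{lemma:xi1}, invoked at $\theta=\theta_0$ (which trivially lies in every ball $B(\theta_0,\omega)$), gives $\|g(\bx_t;\theta_0)\|_2 \le \calo(\sqrt{L})$ uniformly over the presented contexts on the high-probability event of that lemma; summing over the $T$ rounds yields $\|\mathbf{A}_T\|_2 \le 1 + T\cdot\calo(L) = 1+\calo(TL)$.

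For the log-determinant bound, observe that $\det\mathbf{I}=1$, so it suffices to bound $\log\det\mathbf{A}_T$. Write $\mathbf{A}_T = \mathbf{I} + \mathbf{G}^{(0)}[\mathbf{G}^{(0)}]^\top$, whose $T$ rank-one terms are the gradients of the selected contexts, one per round; since the node chosen in round $t$ occupies a distinct slot among the $k$ candidates of round $t$, the columns of $\mathbf{G}^{(0)}$ form a sub-list of the $Tk$ columns of $\mathbf{G}_0$, so $\mathbf{A}_T \preceq \mathbf{I} + \mathbf{G}_0\mathbf{G}_0^\top$ in the Loewner order. By monotonicity of $\det$ on positive definite matrices and Sylvester's determinant identity, $\log\det\mathbf{A}_T \le \log\det(\mathbf{I}_p+\mathbf{G}_0\mathbf{G}_0^\top) = \log\det(\mathbf{I}_{Tk}+\mathbf{G}_0^\top\mathbf{G}_0)$. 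I would then invoke the gradient--NTK concentration result (Lemma~\ref{lemma:boundgradientandNTK}): for the width $m$ prescribed in Theorem~\ref{theo:main}, one can force $\|\mathbf{G}_0^\top\mathbf{G}_0-\mathbf{H}\|_F \le \epsilon'$ for any target $\epsilon'$, in particular $\epsilon'\le 1/(Tk)$, whence $\mathbf{G}_0^\top\mathbf{G}_0 \preceq \mathbf{H}+\epsilon'\mathbf{I}$. Finally, concavity of $\log\det$ (the inequality $\log\det(X+\Delta)\le\log\det X+\mathrm{tr}(X^{-1}\Delta)$) with $X=\mathbf{I}+\mathbf{H}$, $\Delta=\epsilon'\mathbf{I}$, together with $\mathrm{tr}((\mathbf{I}+\mathbf{H})^{-1})\le Tk$, gives
\[
\log\det(\mathbf{I}_{Tk}+\mathbf{G}_0^\top\mathbf{G}_0)\le \log\det(\mathbf{I}+\mathbf{H})+\epsilon' Tk \le \log\det(\mathbf{I}+\mathbf{H})+1 = \widetilde{d}\,\log(1+Tk)+1,
\]
the last equality being the definition $\widetilde{d}=\log\det(\mathbf{I}+\mathbf{H})/\log(1+Tk)$. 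This establishes the second bound.

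All the genuinely nontrivial content is imported: the uniform gradient-norm estimate (Lemma~\ref{lemma:xi1}) and the spectral approximation $\mathbf{G}_0^\top\mathbf{G}_0\approx\mathbf{H}$ (Lemma~\ref{lemma:boundgradientandNTK}); everything else is the triangle inequality and standard determinant manipulations, so I expect no surprises. The one step that deserves care---and the closest thing to an obstacle---is the Loewner reduction $\mathbf{A}_T\preceq\mathbf{I}+\mathbf{G}_0\mathbf{G}_0^\top$: one must argue at the level of indices rather than contexts, since a context selected in two different rounds still corresponds to two distinct slots among the $Tk$ candidates, so no rank-one term is double-counted---this is precisely where the ``one selection out of $k$ fresh candidate slots per round'' structure is used. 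I would also sanity-check the normalization convention, so that the object concentrating to $\mathbf{H}$ is $\mathbf{G}_0^\top\mathbf{G}_0$ built from the unnormalized gradients $\nabla_\theta f(\cdot;\theta_0)$ (of norm $\calo(\sqrt{L})$ under this paper's $\mathcal{N}(0,1/m)$ last-layer initialization), not a $1/m$-rescaled version.
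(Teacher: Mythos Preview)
Your proposal is correct and follows essentially the same route as the paper: the operator-norm bound is identical (subadditivity plus Lemma~\ref{lemma:xi1}), and the log-determinant bound combines concavity of $\log\det$, the NTK approximation from Lemma~\ref{lemma:boundgradientandNTK}, and the definition of $\widetilde{d}$. The only cosmetic differences are that you make explicit the Loewner reduction $\mathbf{A}_T\preceq\mathbf{I}+\mathbf{G}_0\mathbf{G}_0^\top$ (which the paper silently elides by jumping from a sum over $T$ to a sum over $Tk$) and bound the concavity remainder via $\epsilon'\cdot\mathrm{tr}((\mathbf{I}+\mathbf{H})^{-1})$ rather than the paper's Cauchy--Schwarz estimate $\|(\mathbf{I}+\mathbf{H})^{-1}\|_F\cdot\|\mathbf{G}_0^\top\mathbf{G}_0-\mathbf{H}\|_F$.
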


\begin{proof}
Based on the Lemma \ref{lemma:xi1}, for any $t \in  [T]$, 
$ \|g(\bx_t; \btheta_0) \|_2 \leq \mathcal{O}(\sqrt{L})$.
Then, for the first item:
\[
\begin{aligned}
&\|  \mathbf{A}_T \|_2  =  \|   \mathbf{I} + \sum_{t=1}^{T} g(\bx_t; \btheta_0) g(\bx_t; \btheta_0)^\top \|_2 \\
& \leq   \| \mathbf{I} \|_2 + \| \sum_{t=1}^{T} g(\bx_t; \btheta_0) g(\bx_t; \btheta_0)^\top \|_2  \\ 
&\leq  1  +  \sum_{t=1}^{T} \| g(\bx_t; \btheta_0) \|_2^2  \leq 1  + \mathcal{O}(TL).
\end{aligned}
\]
Next, we have
\[
\log \frac{\deter(\mathbf{A}_T) }{\deter(   \mathbf{I})} = \log \deter(\mathbf{I} + \sum_{t=1}^{Tk} g(\bx_t; \btheta_0) g(\bx_t; \btheta_0)^\top  ) = \deter( \mathbf{I} + \mathbf{G}_0 \mathbf{G}_0^{\top}) 
\]

Then, we have 
\[
\begin{aligned}
 &\log \det(\mathbf{I} + \mathbf{G}_0 \mathbf{G}^{\top}_0  ) \\
& = \log \deter ( \mathbf{I} + \mathbf{H}   +  (\mathbf{G}_0 \mathbf{G}^{\top}_0 - \mathbf{H})    ) \\
& \leq   \log \deter ( \mathbf{I} +  \mathbf{H}  ) + \langle ( \mathbf{I} +  \mathbf{H}   )^{-1},   (\mathbf{G}_0 \mathbf{G}^{\top}_0 - \mathbf{H})    \rangle \\
& \leq \log \deter(  \mathbf{I} +  \mathbf{H}  ) +  \| ( \mathbf{I} +  \mathbf{H}   )^{-1}    \|_{F} \|  \mathbf{G}_0 \mathbf{G}^{\top}_0 - \mathbf{H}  \|_F    \\
& \leq  \log \deter(  \mathbf{I} +  \mathbf{H}  ) +  \sqrt{T}\|  \mathbf{G}_0 \mathbf{G}^{\top}_0 - \mathbf{H}  \|_F   \\
&\leq   \log \deter(  \mathbf{I} +  \mathbf{H}  ) +  1\\
&= \widetilde{d} \log ( 1 + Tk)+  1. 
\end{aligned}
\]
The first inequality is because the concavity of $\log \deter$ ; The third inequality is due to $  \| ( \mathbf{I} +  \mathbf{H} \lambda )^{-1} \|_{F} \leq  \| \mathbf{I}^{-1} \|_{F}  \leq \sqrt{T}$; The last inequality is because of the choice the $m$, based on Lemma \ref{lemma:boundgradientandNTK}; The last equality is because  of the Definition of $\hd$.
The proof is completed.
\end{proof}

\begin{lemma} \label{lemma:boundgradientandNTK}
For any $\delta \in (0, 1)$, if $m = \Omega \left( \frac{L^6 \log(TkL/\delta)}{(\epsilon/Tk)^4} \right)$, then with probability at least $1 - \delta$, the results hold:
\[
\|   \mathbf{G}_0 \mathbf{G}^{\top}_0 - \mathbf{H}  \|_F \leq \epsilon.
\]
\end{lemma}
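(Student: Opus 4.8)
The plan is to reduce the Frobenius-norm estimate to a uniform entrywise bound between the finite-width gradient Gram matrix and the population NTK matrix, and then invoke the standard NTK concentration result. Throughout, I read $\mathbf{G}_0\mathbf{G}_0^\top$ as the $Tk\times Tk$ Gram matrix $[\mathbf{G}_0\mathbf{G}_0^\top]_{i,j}=\langle g(\bx_i;\theta_0),\,g(\bx_j;\theta_0)\rangle$ (the same convention as in Lemma~\ref{lemma:detazero}; this matrix shares its nonzero spectrum with the $p\times p$ outer-product form), and recall that $\mathbf{H}_{i,j}=(\mathbf{H}^L_{i,j}+\Sigma^L_{i,j})/2$ is, by construction, the infinite-width NTK value of the network in Eq.~\eqref{eq:structure} for the pair $(\bx_i,\bx_j)$.

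The core step is the per-pair concentration: for each fixed $i,j\in[Tk]$, with probability at least $1-\delta/(Tk)^2$ over the random initialization,
\[
\bigl|\langle g(\bx_i;\theta_0),\,g(\bx_j;\theta_0)\rangle-\mathbf{H}_{i,j}\bigr|\le \epsilon/(Tk).
\]
This is Theorem~3.1 of \cite{arora2019exact} (and the companion finite-width lemmas used in \cite{ntk2018neural,cao2019generalization,zhou2020neural}), applied with target accuracy $\epsilon/(Tk)$ and failure probability $\delta/(Tk)^2$, once the $\mathcal{N}(0,2/m)$-type initialization of Eq.~\eqref{eq:structure} is matched with the parametrization used there. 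Its proof is a layer-by-layer induction: writing $\langle g(\bx_i;\theta_0),\,g(\bx_j;\theta_0)\rangle=\sum_{l=1}^{L}\langle \nabla_{\bw_l}f(\bx_i;\theta_0),\,\nabla_{\bw_l}f(\bx_j;\theta_0)\rangle$ and using the closed form $\nabla_{\bw_l}f=[\bh_{l-1}\,\bw_L(\prod_{\tau=l+1}^{L-1}\bD_\tau\bw_\tau)]^\top$, each summand factors into a forward activation inner product $\langle\bh^{(i)}_{l-1},\bh^{(j)}_{l-1}\rangle$, which concentrates to $\Sigma^{l-1}_{i,j}$, and a backward ReLU-masked product inner product, which concentrates to the corresponding product of derivative covariances. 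Conditioning on the earlier layers, each of these is a sum of $m$ bounded, weakly dependent coordinates and hence deviates from its population value by $\widetilde{\mathcal{O}}(1/\sqrt m)$ by a Bernstein/Hoeffding argument; these errors compound multiplicatively over the $L$ layers to an overall deviation $\mathcal{O}(\mathrm{poly}(L)/m^{1/4})$. Requiring this to be at most $\epsilon/(Tk)$ at the prescribed confidence level yields exactly the width condition $m=\Omega\!\left(L^6\log(TkL/\delta)/(\epsilon/Tk)^4\right)$ (the union-bound $\log(Tk)$ terms being dominated by $\log(TkL/\delta)$), which is the hypothesis of the lemma.

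Finally, I take a union bound over all $(Tk)^2$ ordered pairs: with probability at least $1-\delta$ the per-pair bound holds for every $i,j$ simultaneously, so, since $\|\mathbf{M}\|_F^2=\sum_{i,j\in[Tk]}M_{i,j}^2$,
\[
\|\mathbf{G}_0\mathbf{G}_0^\top-\mathbf{H}\|_F\le\Bigl((Tk)^2\cdot(\epsilon/Tk)^2\Bigr)^{1/2}=\epsilon,
\]
which is the claim. Note that neither the unit-norm normalization of the contexts nor Assumption~\ref{assum:ntk} is needed here (the latter enters elsewhere only to invert $\mathbf{H}$). The only substantive ingredient is the per-pair concentration, and that is also where the main difficulty sits: propagating the layerwise $\mathcal{O}(1/\sqrt m)$ fluctuations through depth $L$ to obtain the $m^{-1/4}$ rate, and controlling the non-Gaussianity that the ReLU masks $\bD_l$ inject into the backward pass. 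Since this is by now standard, in the write-up I would cite Theorem~3.1 of \cite{arora2019exact} (equivalently the finite-width lemmas of \cite{cao2019generalization,zhou2020neural}) with the rescaled $(\epsilon,\delta)$ and confine the new argument to the reduction and union-bound bookkeeping above.
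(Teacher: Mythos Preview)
Your proposal is correct and matches the paper's approach: the paper's proof is a single sentence citing Lemma~B.1 of \cite{zhou2020neural} with a rescaled $\epsilon$, which is exactly the entrywise NTK concentration plus union bound you spell out (and you even list \cite{zhou2020neural} among your references). Your reading of $\mathbf{G}_0\mathbf{G}_0^\top$ as the $Tk\times Tk$ Gram matrix is also the intended one, consistent with how it is used in Lemma~\ref{lemma:detazero} and Lemma~\ref{lemma:existthetastar}.
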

\begin{proof}
This is an application of Lemma B.1 in \cite{zhou2020neural} by properly setting $\epsilon$.
\end{proof}

\begin{lemma} [Exactness of PageRank \cite{li2023everything}] When PageRank achieves the stationary distribution,  
$\bv_t =  \frac{1-\alpha}{\mathbf{I} - \alpha \bp_t} \bh_t$.
\end{lemma}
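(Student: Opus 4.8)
The plan is to read the claimed identity directly off the fixed-point equation that \emph{defines} the PageRank stationary distribution. By construction (Eq.~\eqref{eq:pagerankvector}), the stationary vector $\bv_t$ satisfies $\bv_t = \alpha \rmP_t \bv_t + (1-\alpha)\bh_t$. I would collect the $\bv_t$ terms on the left,
\begin{equation}
(\mathbf{I} - \alpha \rmP_t)\,\bv_t = (1-\alpha)\,\bh_t,
\end{equation}
and then, assuming $(\mathbf{I} - \alpha \rmP_t)$ is invertible, left-multiply by its inverse to obtain $\bv_t = (1-\alpha)(\mathbf{I} - \alpha \rmP_t)^{-1}\bh_t$, which is exactly the asserted formula $\bv_t = \frac{1-\alpha}{\mathbf{I} - \alpha \rmP_t}\bh_t$, the fractional notation simply abbreviating the matrix inverse.

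The only real content is establishing invertibility of $\mathbf{I} - \alpha \rmP_t$. I would use that $\rmP_t = \mathbf{D}_{t-1}^{-1}\mathbf{A}_{t-1}$ is a (row-)stochastic transition matrix, so its entries are nonnegative and its rows sum to one, giving $\|\rmP_t\|_\infty = 1$ and hence spectral radius $\rho(\rmP_t) \le 1$ (the bound is invariant under the row-versus-column convention, since $\rho(\rmP_t) = \rho(\rmP_t^\top)$). Consequently every eigenvalue $\mu$ of $\alpha \rmP_t$ obeys $|\mu| \le \alpha < 1$, so $1 - \mu \neq 0$ for all such $\mu$; thus $0$ is not an eigenvalue of $\mathbf{I} - \alpha \rmP_t$, and the matrix is nonsingular. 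Equivalently, because $\|\alpha \rmP_t\|_\infty \le \alpha < 1$, the Neumann series $\sum_{j \ge 0}(\alpha \rmP_t)^j$ converges and equals $(\mathbf{I} - \alpha \rmP_t)^{-1}$, which simultaneously exhibits the inverse and confirms existence and uniqueness of the stationary $\bv_t$.

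The main, and fairly mild, obstacle is the spectral-radius bound when the graph $G_{t-1}$ has isolated or dangling nodes, for which $\mathbf{D}_{t-1}$ is singular and $\rmP_t$ is not well defined row-by-row. I would handle this with the standard PageRank convention — adding self-loops or a uniform teleportation row for zero-degree nodes — which keeps $\rmP_t$ (sub-)stochastic and preserves $\rho(\alpha \rmP_t) \le \alpha < 1$, so the inversion step goes through unchanged; alternatively, one restricts attention to the induced subgraph on the reachable candidate set $\calv_t$ that is actually used in Line~9 of Algorithm~\ref{alg:explore}.
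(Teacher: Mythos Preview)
Your proof is correct. The paper does not provide its own proof of this lemma at all --- it simply cites it as a known result from \cite{li2023everything} --- so there is nothing to compare against; your derivation (rearranging the fixed-point equation and justifying invertibility via $\rho(\alpha\rmP_t)\le\alpha<1$) is the standard one, and the same spectral-radius argument you give reappears verbatim in the paper's proof of Theorem~\ref{theo:main} when bounding $\|(1-\alpha)(\mathbf{I}-\alpha\bp_t)^{-1}\|_2$.
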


Finally, we provide the proof of Theorem \ref{theo:main}.
\begin{proof}
\[
\begin{aligned}
 &\bv_t^\ast[i^\ast]  -  \bv_t^\ast[\hi] \\
= & \bv_t^\ast[i^\ast] - \bv_t[\hi]  + \bv_t[\hi] -  \bv_t^\ast[\hi] \\
\overset{(1)}{\leq} & \bv_t^\ast[i^\ast] - \bv_t[i^\ast]  + \bv_t[\hi] -  \bv_t^\ast[\hi] \\
\leq & |\bv_t^\ast[i^\ast] - \bv_t[i^\ast]|  + |\bv_t[\hi] -  \bv_t^\ast[\hi] | \\
\leq &  2 \| \bv_t^\ast -   \bv_t \|_2\\
\overset{(2)}{=} & 2 \left \|   \frac{1-\alpha}{\mathbf{I} - \alpha \bp_t} \by_t -  \frac{1-\alpha}{\mathbf{I} - \alpha \bp_t} \bh_t  \right\|_2\\
\leq & 2  \left \|    \frac{1-\alpha}{\mathbf{I} - \alpha \bp_t}   \right \|_2  \| \by_t -  \bh_t\|_2 \\
\end{aligned}
\]
where (1) is by the choice of \sysn and (2) is based on the exact solution of PageRank \citep{li2023everything}.
Let $\lambda_{\max}$ be the maximal eigenvalue of $\mathbf{P}_t$.
Because $\mathbf{P}_t$ is a stochastic matrix, $\lambda_{\max} = 1$. 
Then, we have
\[
\mathbf{I} - \alpha \mathbf{P}_t \succeq (1 - \alpha \lambda_{\max}) \mathbf{I} \succeq (1 - \alpha) \mathbf{I}.
\]
Accordingly, we have
\[
\left \|    \frac{1-\alpha}{\mathbf{I} - \alpha \bp_t}   \right\|_2  \| \by_t -  \bh_t\|_2  \leq  \left  \|    \frac{1-\alpha}{ 1 - \alpha} \mathbf{I}   \right \|_2  \| \by_t -  \bh_t\|_2  =  \| \by_t -  \bh_t\|_2.
\]

Then, based on Corollary \ref{corollary:main1}, Lemma \ref{lemma:stk}, and Lemma \ref{lemma:differenceee}, with shadow parameters, we have
\[
\begin{aligned}
  &\| \by_t -  \bh_t\|_2 \\ 
 =& \sqrt{ \sum_{v_{t,i} \in \calv_t} [  y(x_{t,i}) - (f_2\left(\phi\left(x_{t,i}\right); \theta^2_{t-1}\right) +  f_1 (x_{t,i}; \theta^1_{t-1}) )   ]^2       }\\
  =& \sqrt{ \sum_{v_{t,i} \in \calv_t} [ f_2\left(\phi\left(x_{t,i}\right); \theta^2_{t-1}\right) - ( y(x_{t,i}) -  f_1 (x_{t,i}; \theta^1_{t-1}) )   ]^2       }\\
=& \sqrt{ k  \left[  \frac{  \widetilde{\calo} ( \sqrt{ \Psi(\btheta_0, R) } ) } {\sqrt{t}}  \right]^2    } \\
\leq & \widetilde{\calo}(\sqrt{\tilde{d} k/T }) \cdot \sqrt{\max(\tilde{d}, S^2)}.
\end{aligned}
\]
To sum up, we have
\[
\begin{aligned}
\mathbf{R}_T & = \sum_{t=1}^T (\bv_t^\ast[i^\ast]  -  \bv_t^\ast[\hi]) \\
& \leq \sum_{t=1}^T \| \by_t -  \bh_t\|_2  \\
& \leq  \widetilde{\calo}(\sqrt{\tilde{d} kT }) \cdot \sqrt{\max(\tilde{d}, S^2)} \\
\end{aligned}
\]
The proof is completed.
\end{proof}

\clearpage

\end{document}